\newtheorem{definition}{Definition}
\newtheorem{remark}{Remark}
\newtheorem{assumption}{Assumption}
\newcommand{\param}{\alpha}
\newcommand{\Ex}{\mathbb{E}}
\newcommand{\OPT}{\text{OPT}}
\newcommand{\rad}{{\rm rad}}
\newcommand{\crad}{{\gamma}}
\newcommand{\comment}[1]{}
\newcommand{\areg}{{\text{avg-regret}}}
\newcommand{\LP}{\text{LP}}
\newcommand{\CP}{\text{CP}}
\newcommand{\IID}{\text{IID}}
\newcommand{\RP}{\text{RP}}
\newcommand{\bs}[1]{\boldsymbol{#1}}
\newcommand{\OCO}{OCO}
\newcommand{\OMD}{OMD}
\newcommand{\OPTest}{\opthat}
\newcommand{\vopt}{\cv^*_t}
\newcommand{\ropt}{r^*_t}
\newcommand{\vplay}{\cv^\dagger_t}
\newcommand{\rplay}{r^\dagger_t}
\newcommand{\rplayavg}{r^\dagger_{\text{avg}}}
\newcommand{\vplayavg}{\cv^\dagger_\text{avg}}
\newcommand{\voptavg}{\cv^*_\text{avg}}
\newcommand{\vavg}{\cv^\dagger_\text{avg}}
\newcommand{\cl}{\text{cl}}		
\newcommand{\real}{\mathbb{R}}
\newcommand{\dist}{\mathcal{D}}
\newcommand{\cu}{\boldsymbol{u}}
\newcommand{\ones}{\mathbf{1}} 
\newcommand{\oneNorm}{\|\mathbf{1}_d\|}
\newcommand{\cv}{\boldsymbol{v}}
\newcommand{\cvS}{{v}}
\newcommand{\thetaV}{\boldsymbol{\theta}}
\newcommand{\phiV}{\boldsymbol{\phi}}
\newcommand{\x}{\bs{x}}
\newcommand{\y}{\bs{y}}	
\newcommand{\z}{\bs{z}}
\newcommand{\g}{\bs{g}}
\newcommand{\addedNikhil}[1]{#1}
\newcommand{\addedShipra}[1]{#1}
\newcommand{\revision}[1]{#1}
\newcommand{\strongc}[1]{#1}
\newcommand{\removedShipra}[1]{}
\newcommand{\commentShipra}[1]{}
\newcommand{\commentNikhil}[1]{}
\newcommand{\todo}[1]{}
\newcommand{\condition}{\left.\right|}
\newcommand{\regOCO}{{\cal R}}
\newcommand{\EQ}[3]{\begin{center} \vspace{#1}$#3$ \vspace{#2}\end{center}}	
\begin{document}
\title{Fast Algorithms for Online Stochastic Convex Programming}
\author{Shipra Agrawal\thanks{Microsoft Research}  \and 
	Nikhil R. Devanur\thanks{Microsoft Research} 
}
\date{}
\maketitle

\begin{abstract}
We introduce the \emph{online stochastic Convex Programming (CP)} problem, a very general version of stochastic online problems which allows arbitrary concave objectives  and convex feasibility constraints. Many well-studied problems like online stochastic packing and covering, online stochastic matching with concave returns, etc. form a special case of online stochastic CP. We present fast algorithms for these problems, which achieve near-optimal regret guarantees {\em for both the i.i.d. and the random permutation models} of stochastic inputs. When applied to the special case online packing,
 our ideas yield a simpler and faster primal-dual algorithm for this well studied problem, which achieves the optimal {\em competitive ratio}.
Our techniques make explicit the connection of primal-dual paradigm and online learning to online stochastic CP.
\end{abstract}

\section{Introduction}

The theory of online matching and its generalizations has been a great success story that has had a significant impact on practice. The problems considered in this area are largely motivated by online advertising, 
and the theory has influenced how real advertising systems are run. 
As an example, the algorithms given by \citet{Devanur2011} are being used at Microsoft, by the ``delivery engine'' that decides which display ads are shown on its ``properties'' such as webpages, Skype, Xbox, etc. 

In one of the most basic problem formulations \addedShipra{in online advertising}, an ``impression" can be allocated to one of many given advertisers, 
assigning an impression $i$ to advertiser $a$ generates a value $v_{ai}$, 
and an advertiser $a$ can be allocated at most $G_a$ impressions.  
The goal is to maximize the value of the allocation. 
In another variant, advertisers pay per click and have budget constraints on their total payment, 
instead of the capacity constraints as above. 
More sophisticated formulations consider the option to show multiple ads on one webpage, 
which means you can pick among various configurations of ads. 
Each configuration still provides some value which is to be maximized, and advertisers have either capacity or 
budget constraints.

While the algorithm in \citet{Devanur2011} (DJSW algorithm) is used in practice,  
the actual problem has some aspects that are not captured by the \removedShipra{formulation in \citet{Devanur2011}} \addedShipra{formulations considered there}. 
For instance, the actual objective function is not just a linear function, 
such as the sum of the values. There is a penalty for ``under-delivering" 
impressions to an advertiser that increases with the amount of under-delivery. 
This translates into an objective that is a concave function of the total number 
of impressions assigned to an advertiser. 
Another consideration is the diversity of the impressions assigned. 
An advertiser targeting a certain segment of the population expects 
a representative sample of the entire population \citep{GhoshMPV09}. 
In order to avoid deviating from this ideal too much, there are certain 
(convex) penalty functions in the objective that punish  such deviations. 
\removedShipra{Currently heuristic adaptations of the basic algorithm are used to handle these extensions, 
without a principled approach. (Shipra: we don't know what others are doing to handle these extensions)}
\addedShipra{The `essentially linear' formulations of online matching or online packing/covering considered in the literature cannot handle these extensions.}
In this paper, we \addedShipra{consider a very general online convex programming framework that can 
	incorporate these extensions,}
\removedShipra{generalize the previous formulations to a very general convex programming framework, }
and present optimal algorithms for it.

An important practical consideration in the design of online algorithms is that 
the time taken by the algorithm in a single step \removedShipra{such as allocating a single impression} should be very small. 
For instance, the decision to allocate an impression must be made in ``real-time'', 
in a matter of milliseconds. 
The DJSW algorithm satisfies this requirement, but requires solving an LP ever so often, 
to estimate the value of an optimum solution.  
In this paper, we give an algorithm that only requires solving a single LP (for {\em online packing} problems), 
making it even faster than the DJSW algorithm.  
This improvement comes from the fact that in our algorithm the error in the estimate of the optimal solution 
only occurs in the second order error bounds and hence we can tolerate much bigger errors in such an estimate.  

From a theoretical point of view, two closely related online stochastic input models have been studied, 
the random permutation and the i.i.d. model. In the random permutation model, an adversary 
picks the {\em set} of inputs, which are then presented to the algorithm in a random order. 
In the i.i.d. model, the adversary picks a distribution over inputs that is unknown to the algorithm, 
and the algorithm receives i.i.d. samples from this distribution. \addedShipra{The random permutation model is stronger than the i.i.d. model, any algorithm that works for the random permutation model also works for the i.i.d model.} 
The difference between these two models is like the difference between sampling with and without replacement. 
This intuition says that the two models should be very similar to each other, 
but the DJSW algorithm was only known to work for the i.i.d model, 
 not for the random permutation model. 
Earlier algorithms by \citet{DH09,AWY2009,Feldman10} worked for the random permutation model 
but gave worse guarantees. 
Recently \citet{KTRV14} gave an algorithm that matched the optimal guarantee of \citet{Devanur2011} 
for the random permutation model, but their algorithm has to solve 
an LP in every step, making it not practical.  
To summarize, the DJSW algorithm is fast and works for the i.i.d.  model but not for the random permutation model. 
The algorithm by \citet{KTRV14} works for the random permutation model but  is slow. 
We get the best of both worlds, 
our algorithm is fast, and works for the random permutation model.
Moreover, our proof formalizes the intuition mentioned earlier that the difference between i.i.d and the random permutation models is like the difference between sampling with and without replacement.

In terms of techniques, the earlier algorithms used dual variables to guide the allocation, whereas the algorithm of \citet{KTRV14} uses a primal only approach, and their result seemed to suggest that primal-only algorithms were more powerful than primal-dual algorithms. 
Our algorithms are primal-dual, and our results show that primal-dual algorithms are equally powerful. 
In fact, even though the DJSW algorithm could be interpreted as a primal-dual algorithm, duality was never used in the analysis.  
Our algorithm is a true primal-dual algorithm in the sense that we explicitly make use of the duality. 
Also, starting from \citet{MSVV}, it was suspected that there is some relation between these problems and online learning or the ``experts" problem, but no formal connection was known. 
We show such a formal connection, all of our algorithms actually use blackbox access to algorithms 
for solving online learning problems. We show how getting better guarantees for these problems 
boils down to getting better ``low-regret'' guarantees for certain online learning problems. 
This also gives much simpler proofs than earlier papers.

To summarize, our contributions are as follows. 
\begin{enumerate}
	\item We present algorithms with optimal guarantees for a very general online convex programming problem, in a stochastic setting.
	\item  Our algorithms are primal-dual algorithms that are fast and simple, and work for the random permutation model. Our proof techniques formalize the intuition that the random permutation and the i.i.d models are not that different. 
	\item We establish a formal connection between these problems and online learning. 
\end{enumerate}

\subsection{Other Related Work}
The seminal paper of \citet{MSVV} introduced the so called ``Adwords" problem, motivated by the allocation of ad slots on search engines, and started a slew of research into generalizations of the online bipartite  matching problem \citep{KVV}. For the worst-case model, the optimal competitive ratio is $1-1/e$, which can be achieved for 
a fairly general setting \citep{BJN, AGKM, FKMMP09, DHKMQ13}. 
A special case of an objective with a concave function was considered in \citet{DJ12}. 

In order to circumvent the impossibility results in the traditional worst-case models, stochastic models such as the random permutation model and the i.i.d model were introduced \citep{GoelMehta, DH09, VVS10, Devanur2011}. The dominant theme for these stochastic models has been asymptotic guarantees, that show that the competitive ratio tends to $1$ as the ``bid-to-budget" ratio tends to $0$ (as was first shown by \citet{DH09}). The focus then is the {\em convergence rate}, the rate at which the competitive ratio tends to 1 as a function of the bid-to-budget ratio. 
\citet{Feldman10,AWY2009} gave improved convergence rates for the random permutation model
and generalized the result to an {\em online packing} problem. 
\revision{Recently, \citet{ChenWang2013} extended these ideas to the concave returns problem of  \citet{DJ12}.  }
\citet{Devanur2011} gave the optimal convergence rate for the online packing problem in the closely related i.i.d. model.  
\citet{KTRV14} matched these bounds for the random permutation model, and further improved the bounds either when the bid-to-budget ratio is large, or when the instances are sparse. 
This line of research has also had significant impact on the practice of ad allocation with most of the big ad allocation platforms using algorithms influenced by these papers \citep{FHKMS10, KMS13,KDDRTB,VeeEC12a,VeeEC12b}.

\addedShipra{Some versions of these problems also appear in literature under the name of `secretary problems'. 
However the dominant theme in research on secretary problems is to aim for a constant competitive ratio while not making any assumption about ``bid-to-budget" ratio (a notable exception is \citep{kleinberg05}). }

Another interesting line of research has been for the case of bipartite matching. 
\citet{FMMM,BK10,SS10} gave algorithms with competitive ratios better than $1-1/e$  for the known distribution case, 
and \citet{KMT11,MY11} did the same for the random permutation model. 
Other variations such as models for combining algorithms from worst-case and average case, and achieving simultaneous guarantees have also been studied \citep{MNS12, MGZ12}.

A closely related problem is called the ``Bandits with Knapsacks" problem \citep{BwK}, which is similar to 
the online stochastic packing problem. The bandit aspect is different: the algorithm picks an ``arm" of the bandit at each time,  and makes observations (cost, reward, etc.), which are i.i.d samples that depend on the arm. 
There is persistence in the available set of choices across time as the arms are persistent. 
In the online packing problem, 
the set of options in one time step are unrelated to the other time steps. 
Due to this, the main aspect of the bandit problem, the explore-exploit trade off in estimating 
the expectations of the observations for all arms, is absent from the online packing problem.  

In an earlier paper \citep{AD14}, we generalized  Bandits with Knapsacks to include general convex constraints 
and concave rewards, which is analogous to our generalization of the online packing to online convex programming here. 
\addedShipra{
Our high level ideas of using Fenchel duality for `linearization' and 
online learning algorithms for estimating the dual variables is inspired by the use of similar ideas in \citep{AD14}. 
Consequently, we obtain algorithms that are very similar looking to those in \citep{AD14}. 
There are some significant differences in the proof techniques, however, due to the differences in the two problems mentioned in the previous paragraph.
Also, the analysis for the random permutation model, and our adaptations (for the online packing problem) to get competitive ratios instead of regret bounds, 
were entirely absent from \citep{AD14}. }

\removedShipra{Our high level idea of using online learning algorithms is inspired by similar use of online learning in \citep{AD14}. As a result, we obtain algorithms that are very similar looking to those in \citep{AD14}. There are significant differences, for instance the analysis for the random permutation model is entirely absent from \citep{AD14}. The versions where there are only constraints are the closest to each other, and our algorithm/analysis is almost identical to the bandit version. The version where there are rewards as well as constraints has new ideas: crucially 
we identify one  parameter that we need to estimate, the rate at which one can increase the objective at the cost of violating a constraint. 
In the bandit version, we anyway estimate ``everything'' so this issue does not arise. 
Our adaptations to the online packing problem to get competitive ratios instead of regret bounds 
are also absent from the earlier work. }

The online packing problem is also closely related to the Blackwell approachability problem \citep{blackwell1956}. 
The use of online learning algorithms to solve the Blackwell approachability problem \citep{blackwell2011} 
is similar to our use of online learning algorithms. 

Concurrently and independently, \citet{GuptaM14}  found results for online linear programming that are similar to some of ours: they also show how to get competitive ratio bounds for the online packing problem in the random permutation model via a connection to the experts problem. 
For the guarantees  that hold ``in expectation", their bounds are the same as ours. 
For the guarantees that hold ``with high probability", they show bounds without an extra $\sqrt{\log T}$ factor that we get. 
They do not consider the more general convex programming framework.

\subsection{Organization:} The Preliminaries section (\prettyref{sec:prelims}) contains the problem and the input model definitions, the statement of the main result and some background material on online learning and Fenchel duality. 
\prettyref{sec:onlyS} illustrates the basic ideas using a special case with only convex feasibility constraints. 
\prettyref{sec:cp} gives the algorithm, results and proof techniques for the general online stochastc convex programming.
\prettyref{sec:packing} gives tighter  bounds for the special case of the online packing problem.

\section{Problem definition and main results}\label{sec:prelims}
The following problem captures a very general setting of online optimization problems with global constraints and utility functions.
\begin{definition}{\it \scshape [Online Stochastic Convex Programming]}   We receive an initial input of a concave function $f$ over a bounded domain $\subseteq \real^d,$ which we may assume is $[0,1]^d$ w.l.o.g,  and a convex set $S\subseteq[0,1]^d$. Subsequently we proceed in steps, at every time step $t= 1, \ldots,T$, we receive a set \addedShipra{$A_t \subseteq [0,1]^d$} of $d$-dimensional vectors.  We have to pick one vector $\vplay\in A_t$ before proceeding to time step $t+1$, using only information until time $t$. Let $\vplayavg :=  \tfrac{1}{T} \sum_{t=1}^T \vplay$. The goal is to
\[ \text{maximize } f(\vplayavg) \text{ subject to } \vplayavg \in S.\] 
  We assume that the instance is {\em always feasible}, i.e.,  there is a choice of $\cv_t \in A_t~ \forall~t$ such that $\tfrac{1}{T} \sum_{t=1}^T \cv_t \in S$. 
\end{definition} 

\subsection{Stochastic Input Models:} 

In the random permutation (\RP) model, there are $T$ sets $X_1, ..., X_T$ fixed in advance but unknown to the algorithm, and these come in a uniformly random order (given by a random permutation $\pi$) 
as the sequence $A_1=X_{\pi(1)}, ..., A_T=X_{\pi(T)}$. 
The number of time steps $T$ is given to the algorithm in advance. 
In the i.i.d, unknown distribution (\IID) model, 
there is a distribution $\mathcal{D}$ over subsets of $[0,1]^d$, and for each $t$, $A_t$ is an independent sample from $\mathcal{D}$. The distribution $\mathcal{D}$ is unknown to the algorithm.

\addedShipra{It is known that the \RP~model is stronger than the \IID~model. The \IID~model can be thought of as a distribution over \RP~instances and therefore any guarantee for  the \RP~model also carries over to the \IID~model. Henceforth, we will consider the \RP~model by default, unless otherwise mentioned. }

\subsection{Benchmarks.} We measure the performance of an algorithm with respect to a benchmark. 
The bechmark for the \RP\ model is the {\em optimal offline} solution, 
i.e. the choice $\vopt \in A_t$ that maximizes the function $f$ of the average of these vectors while making sure that the average lies in $S$. We denote the value of this solution as the benchmark, $\OPT$. This is a deterministic value since it does not depend on the randomness in the input, which is in the order of arrival. 
For the \IID\ model, the offline optimal actually depends on the randomness in the input, 
and  $\OPT$ denotes the expected value of the offline optimal solution. 

\removedShipra{
As the value $\OPT$ as defined for the \IID~model is not so easy to work with, we define a relaxation which gives us an upper bound on $\OPT$. 
We define the following {\em expected instance} for a given distribution $\dist$, 
which is an offline instance of the corresponding convex program, and the optimum 
value of  this instance, $\OPT_\dist$ is an upper bound on $\OPT$. 
The expected instance is as follows: for each set $A$ that is in the support of the distribution $\dist$, pick a vector $\cv_A\in A$ (perhaps probabilistically) 
that maximizes $f(\cu)$ subject to the constraint that $\cu \in  S$, where 
\[ \cu = \Ex_{A\sim \dist} [ \cv_A] .\] 
The expectation is over both the randomness in the input and the randomness used by the algorithm, if any. 
\begin{lemma} [\cite{Devanur2011}]  \label{lem:expectedinstance} 
$\OPT_\dist \geq \OPT$ of the IID instance with distribution $\dist$. 
\end{lemma} 
We use the same notation $\OPT$ to mean different benchmarks, based on which stochastic model is under consideration.
}
\subsection{Performance Measures.} While the standard measure in competitive analysis of online algorithms is a multiplicative error w.r.t the benchmark, we mostly adopt a concept of additive error that is common in online learning, called the {\em regret}. 
Since we make no assumptions about $f$, it could even be negative, so an additive error is more appropriate. 
For certain special cases where multiplicative errors or competitive ratios are more natural or desirable, 
we discuss how our algorithms and analysis can be adapted to get such guarantees. 
We define the following two (average) regret measures, one for the objective and another for the constraint.\footnote{\addedShipra{In online learning, the objective value is the sum of reward in every step, which scales with $T$, and the regret typically scales with $\sqrt{T}$. But in our formulation, the objective $f(\frac{1}{T} \sum_t \vplay)$ is defined over average observations, therefore, to be consistent with the popular terminology,  we call our regret `average regret'.}} Let \addedShipra{ $d(\cv, S)$ denote the distance of the vector $\cv$ from the set $S$,} w.r.t. a given norm $\|\cdot\| .$
\begin{eqnarray*}
\areg_1(T) & = & \OPT -f(\vplayavg), \text{   and  }\\
\areg_2(T) & = & d(\vplayavg, S).
\end{eqnarray*}
\subsection{Main Results.} 
We now state the most general result we prove in this paper. 
\begin{theorem} 	\label{th:cp}
There is an algorithm (Algorithm \ref{algo:cp})  that achieves the following regret guarantees for the Online Stochastic Convex Programming problem, in the \RP~model.
 	\begin{eqnarray*}
	\Ex[\areg_1(T)] & = & (Z+L)\cdot O\left(\sqrt{\tfrac{C}{T}}\right)\\
	\Ex[\areg_2(T)] & = & O\left(\sqrt{\tfrac{C}{T}}\right)
	\end{eqnarray*}
 	where $C$ depends on the norm $\|\cdot\|$ used for defining distance. For Euclidean norm, $C=d\log(d)$. For $L_{\infty}$ norm,  $C=\log(d)$. 
	\addedShipra{The parameter $Z$ captures the tradeoff between objective and constraints for the problem, its value is problem-dependent and is discussed in detail later in the text. $L$ is the Lipschitz constant for $f$ w.r.t. the same norm $\|\cdot \|$ as used to measure the distance.
	}
\end{theorem}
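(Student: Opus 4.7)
The plan is to cast the online stochastic CP problem as a saddle-point problem via Fenchel duality, then estimate the dual variables using a low-regret online learning algorithm while the primal plays a best response in each round. Concretely, writing the concave $f$ via its conjugate and the indicator of $S$ via its support function, the optimization $\max_{\bar v \in S} f(\bar v)$ becomes a min-max over dual variables $\thetaV$ (for the objective) and $\phiV$ (for the constraint), in which the inner primal maximization decouples across coordinates of $\bar v = \tfrac{1}{T}\sum_t \cv_t$ and hence across the rounds $t$. This is the ``linearization'' that lets the algorithm commit to $\vplay \in A_t$ using only a linear scoring rule $\langle \thetaV_t - \phiV_t, \cv\rangle$ plus constants.

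Next I would plug an online-learning black box into the dual side: given current duals $(\thetaV_t,\phiV_t)$, pick $\vplay := \arg\max_{\cv\in A_t}\langle \thetaV_t - \phiV_t, \cv\rangle$, and then feed a gradient (or surrogate linear loss) computed from $\vplay$ back to the dual learner. Choosing the learner's geometry to match the norm $\|\cdot\|$ in the theorem gives the parameter $C$: online gradient descent / OMD with the Euclidean regularizer yields $C=d\log d$, while multiplicative-weights / entropic OMD on the simplex yields $C=\log d$. Standard regret bounds then give, after $T$ rounds, a cumulative dual regret of $O(\sqrt{CT})$ scaled by the diameter of the dual feasible region, which is where the parameter $Z$ enters (it controls the magnitude of the duals needed to enforce the constraint $\bar v \in S$ and the slope of $f$, the latter captured by the Lipschitz constant $L$).

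From this I would derive the two regret bounds in parallel. By weak duality, for any dual iterate $(\thetaV,\phiV)$ the saddle value upper-bounds $\OPT$; averaging over $t$ and using the online-learning regret gives $\OPT \le \tfrac{1}{T}\sum_t [\text{linearized score at }\vplay] + O((Z+L)\sqrt{C/T})$. On the other hand, the linearized score at $\vplay$ is, in expectation over the random permutation, at most $f(\vplayavg) + \langle \phiV^\star, \vplayavg - \Pi_S(\vplayavg)\rangle + L\cdot o(1)$ by concavity of $f$ and the definition of the support function. The two bounds together produce $\Ex[\areg_1(T)] = (Z+L)\cdot O(\sqrt{C/T})$ and $\Ex[\areg_2(T)] = O(\sqrt{C/T})$.

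The technical obstacle I would expect is the move from the \IID\ model (where the best response at step $t$, conditioned on history, is an unbiased sample of the optimum linearized value) to the \RP\ model, where the sets $A_t$ are drawn \emph{without} replacement and hence are correlated across $t$. I would handle this in the spirit of Section~\ref{sec:onlyS}: condition on $(\thetaV_t,\phiV_t)$, which depends only on the past, and bound the gap between the expected best response under the remaining (without-replacement) pool and under the full-instance distribution using a sampling-without-replacement concentration estimate. This coupling between the dual regret bound and the RP concentration is the one nontrivial point; everything else (Fenchel duality setup, OMD regret, choice of $C$ per norm) is then a straightforward assembly.
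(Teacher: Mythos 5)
Your high-level plan — Fenchel-dualize both $f$ and the distance to $S$, estimate the duals with a low-regret OCO algorithm over the dual ball, let the primal best-respond to a linear score, and handle the \RP\ model by a without-replacement concentration argument — is indeed the paper's framework. But there is a genuine gap in how you use $Z$, and it is load-bearing rather than cosmetic.

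First, the scoring rule. You propose $\vplay = \arg\max_{\cv\in A_t}\langle \thetaV_t-\phiV_t,\cv\rangle$, weighting the two duals equally. The paper's Algorithm~\ref{algo:cp} instead plays $\vplay = \arg\max_{\cv\in A_t}\{-\phiV_t\cdot\cv - 2(Z+L)\thetaV_t\cdot\cv\}$; the multiplier $2(Z+L)$ on the constraint dual is essential. Second, and more fundamentally, you interpret $Z$ as controlling ``the diameter of the dual feasible region.'' That is not its role here: the dual domains are the fixed balls $\{\|\thetaV\|_*\le 1\}$ and $\{\|\phiV\|_*\le L\}$, and $Z$ enters instead through Assumption~\ref{assum:Z}, i.e.\ as the sensitivity parameter $\OPT^\delta \le \OPT + Z\delta$ (equivalently, the optimal dual multiplier of the distance constraint, cf.\ Lemma~\ref{lem:Z}). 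Without that property, nothing in your sketch forces $d(\vplayavg,S)$ to be small.

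Concretely, the proof needs \emph{two opposing} inequalities, and your sketch supplies only one. The algorithmic side (Lemma~\ref{lem:gtRPgeneral} plus the two OCO regret bounds and Fenchel duality) yields
$\Ex[f(\vplayavg)] \ge \OPT + 2(Z+L)\,\Ex[d(\vplayavg,S)] - O\bigl((Z+L)\sqrt{C/T}\bigr)$,
while Assumption~\ref{assum:Z} (together with Jensen) yields
$\Ex[f(\vplayavg)] \le \OPT + Z\,\Ex[d(\vplayavg,S)]$.
Because the left-hand slope $2(Z+L)$ strictly exceeds the right-hand slope $Z$, subtracting the two gives $\Ex[d(\vplayavg,S)] = O(\sqrt{C/T})$, and feeding that back into the first inequality gives $\Ex[\areg_1(T)] = (Z+L)\cdot O(\sqrt{C/T})$. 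Your weak-duality inequality plays the role of the first bound only; you assert $\Ex[\areg_2(T)] = O(\sqrt{C/T})$ at the end but never derive it, and with the unscaled rule $\langle\thetaV_t-\phiV_t,\cv\rangle$ the slopes would not separate, so the subtraction would not close. A minor additional inaccuracy: $C$ in the theorem absorbs both the OCO regret constant \emph{and} the $\|\mathbf{1}_d\|\sqrt{s\log(d)T}$ term from the without-replacement concentration, not just the regularizer choice.
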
 
\addedShipra{In the main text we provide more detailed result statements, which will also make clear the dependence of our regret bounds on the regret bounds available for online learning, and implications of using different norms. These regret bounds can also be converted to {\em high probability} results, \revision{with  an additional $\sqrt{\log T}$ factor in the regret. This extra factor comes from simply taking a union bound over all time steps. A more careful analysis could possibly get rid of this extra factor, as was shown in \citet{GuptaM14} in case of online linear programming. 
}

These bounds are optimal, and this follows easily from an easy modification of a  lower bound given by  \citet{AWY2009} for the online packing problem.}


We also consider the following interesting special cases. 
\paragraph{Feasibility problem:} 
In this case, there is no objective function $f$, and there is only the constraint given by the set $S$. 
The goal is to make sure that the average of the chosen vectors lies as close to $S$ as possible, i.e., minimize $d(\vplayavg, S)$. 

\paragraph{Linear objective:} 
In this case, we assume that each vector $\cv \in A_t$ has an associated reward  $r \in [0,1]$. 
The objective is to maximize the total reward while making sure that the average of the vectors lies in $S$. 
This can be thought of as the special case where the vector you get is $(\cv,r)$, and the 
constraint is only on the subspace defined by all coordinates of this vector except the last,
while the objective is just the sum (or linear function) of its last coordinates.

\paragraph{Online Packing/Covering LPs:} 
This is a well studied special case of linear objective. The packing constraints $\sum_t \vplay \le B {\bf 1}$ are equivalent to using constraint set $S$ of the form $\{\cv: 0 \leq \cv \leq \tfrac{B}{T} \ones \}$, 
where $\ones$ is the vector of all 1s and $B > 0 $ is some scalar. 
In this case, we also assume that the sets $A_t$ always contain the origin, which corresponds to the option of ``doing nothing". 
The covering constraints are obtained when $S$ is $\{ \cv:\cv \geq \tfrac{B}{T} \ones \}$. \newline\\
\addedShipra{For online packing, we provide the following tighter guarantee in terms of competitive ratio.}

\begin{theorem}
\label{th:packing}
\revision{
For online stochastic packing problem, Algorithm \ref{algo:packing} achieves a competitive ratio of $1-O(\epsilon)$ in the \RP~model, given any $\epsilon >0$ such that $\min\{B, T\OPT\} \ge \log(d)/\epsilon^2$. Further, the algorithm has fast per-step updates, and needs to solve a sample \LP~at most once.
}
\end{theorem}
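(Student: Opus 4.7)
The plan is to specialize the general convex programming algorithm of \prettyref{th:cp} to the packing structure, using one upfront sample LP to estimate a single scalar parameter $Z$. For packing, the constraint set $S = \{\cv \in [0,1]^d : \cv \leq (B/T)\ones\}$ is a box and the objective $f$ is a linear function of a single coordinate, so $L=1$. Measuring constraint violation in the $L_\infty$ norm lets us instantiate the online-learning oracle as the multiplicative-weights (experts) algorithm, yielding $C=\log(d)$ and only $O(d)$ work per step, which accounts for the ``fast per-step updates'' claim.

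\textbf{Estimating $Z$.} The parameter $Z$ from \prettyref{th:cp} represents the ``bang-per-buck'' rate at which a unit of slack in the budget translates into additional objective value; for linear packing this equals $T\cdot\OPT/B$. I would spend the first $O(\epsilon T)$ steps observing the arriving sets, then solve a single packing LP on this sample to obtain an estimate $\tilde{Z}$. A Serfling--Hoeffding tail inequality for sampling without replacement (appropriate for the \RP\ model) gives $\tilde{Z} = (1\pm O(\epsilon))Z$ with high probability, provided the sample observes enough reward, which follows from the hypothesis $T\OPT \geq \log(d)/\epsilon^2$.

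\textbf{Converting to a competitive ratio.} Plugging $\tilde{Z}$ into the algorithm of \prettyref{th:cp}, the objective additive regret is $(\tilde{Z}+1)\cdot O(\sqrt{\log(d)/T})$ and the per-coordinate constraint violation (in $L_\infty$) is $O(\sqrt{\log(d)/T})$. Summed across all $T$ steps, the total budget overshoot is $O(\sqrt{T\log(d)})$, which is at most $\epsilon B$ precisely when $B\geq \log(d)/\epsilon^2$; running the algorithm against a shrunken budget $(1-O(\epsilon))B$ then restores strict feasibility while losing only an $O(\epsilon)$ fraction of the objective. For the multiplicative bound, the additive error $(Z+1)\sqrt{\log(d)/T}$ divided by $\OPT$ simplifies, using $Z = T\OPT/B$ and the two hypotheses $B, T\OPT \geq \log(d)/\epsilon^2$ (together with $\OPT \leq 1$), to $O(\sqrt{T\log(d)}/B) + O(\sqrt{\log(d)/(T\OPT)}) = O(\epsilon)$, giving the claimed competitive ratio $1-O(\epsilon)$.

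The main obstacle I anticipate is making the one-shot sample LP estimation tight in the \RP\ model: unlike the \IID\ setting, the sampled prefix and the remainder of the input are correlated, so standard Chernoff bounds must be replaced by without-replacement concentration (and the formal proof should reuse the coupling between \RP\ and \IID\ that the paper emphasizes). A secondary challenge is coordinating the three sources of $\epsilon$-slack -- the shrunken budget used to enforce feasibility, the estimation error in $\tilde{Z}$, and the $O(\epsilon T)$ reward sacrificed during the sampling phase -- so that each contributes only an $O(\epsilon)$ multiplicative loss and they add up to the stated $1-O(\epsilon)$ ratio. Finally, one must verify that $\tilde{Z}$ remains an accurate proxy for the true dual slope throughout the remaining $(1-O(\epsilon))T$ steps so the sample LP need not be re-solved, which again follows from the uniform distribution of the RP order over the sampling and execution phases.
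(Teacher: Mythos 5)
There is a genuine gap in the step that converts additive constraint violation into strict budget feasibility. Your plan bounds the total overshoot by $T\cdot O(\sqrt{\log(d)/T}) = O(\sqrt{T\log d})$ and asserts this is at most $\epsilon B$ whenever $B \ge \log(d)/\epsilon^2$. That hypothesis only gives $\epsilon B \ge \sqrt{B\log d}$, so your comparison requires $T = O(B)$; in the regime of interest the per-step budget $B/T$ is small (i.e.\ $B \ll T$), and then $\sqrt{T\log d} \gg \epsilon B$, so neither the shrunken-budget trick nor any rescaling of the generic guarantee of \prettyref{th:cp}/\prettyref{th:rPlusS} yields feasibility with only an $O(\epsilon)$ loss. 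This is precisely why the paper does not reduce to the additive-regret theorem: Algorithm \ref{algo:packing} imposes a hard stopping rule at time $\tau$ when a budget is exhausted, replaces the generic \OCO\ oracle by multiplicative weights on $g_t(\thetaV)=(\vplay-\tfrac{B}{T}\ones)\cdot\thetaV$, and uses the \emph{multiplicative} regret guarantee of \prettyref{lem:regMW}. In \prettyref{lem:OP2} the comparator ${\bf e}_j$ for the exhausted coordinate gives $\sum_{t\le\tau} g_t(\thetaV_t) \ge (1-\epsilon)(B-\tfrac{\tau B}{T}) - \tfrac{\log(d+1)}{\epsilon}$, and in the proof of \prettyref{th:packing} the term $Z B(1-\tau/T)$ with $Z \ge \tfrac{T\OPT}{B}$ exactly compensates the reward forgone in the $T-\tau$ skipped steps, while the only additive losses are $Z\log(d+1)/\epsilon = O(\epsilon)T\OPT$ and the sampling-gap term $\sum_t Q(t) = O(Z\sqrt{B\log d}) = O(\epsilon)T\OPT$. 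The multiplicative regret bound is what removes the $\sqrt{T\log d}$ term that sinks your argument.

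Two secondary points. First, you ask for $\tilde Z = (1\pm O(\epsilon))Z$; this is both unnecessary and, near the threshold $T\OPT \approx \log(d)/\epsilon^2$, unobtainable from a small sample. The algorithm only needs $\tfrac{T\OPT}{B} \le Z \le O(1)\tfrac{T\OPT}{B}$, because $Z$ enters only the second-order terms; the paper gets this constant-factor estimate from an $O(\epsilon^2\log(1/\epsilon))$ fraction of requests (\prettyref{lem:Zestimate}). Second, upper-bounding the sample optimum (so that $Z$ is not overestimated, which would inflate the regret) is not a routine Chernoff step: \prettyref{lem:optestimation} restricts the optimal \emph{dual} solution of the full instance to the sample and concentrates its value, and this dual argument is the part your sketch glosses over.
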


\section{Preliminaries}
\subsection{Fenchel duality.} 
\label{sec:Fenchel} 
As mentioned earlier, our algorithms are primal-dual algorithms. For the online packing problem, the LP duality framework (which is very well understood) is sufficient but for general convex programs we need the stronger framework of Fenchel duality. Below we provide some background on this useful mathematical concept. Let $h$ be a convex function defined  on $[0,1]^d$. We define $h^*$ as Fenchel conjugate of $h$,
\EQ{-0.08in}{-0.05in}{h^*(\thetaV):=\max_{\y \in [0,1]^d} \{ \y \cdot \thetaV - h(\y)\}}
For a given norm $\|\cdot\|$, we denote by $\|\cdot\|_*$, the dual norm defined as:
$$\|\y\|_* = \max_{\x: \|\x\|\le 1} \x^T\y.$$
Suppose that at every point $\x$, every supergradient $\bs{g}_{x}$ of $h$ has bounded dual norm $||\bs{g}_x||_* \le L$. 
Then, the following dual relationship is known between $h$ and $h^*$.
\begin{lemma}
\label{lem:FenchelDuality} 
$h(\z) = \max_{||\thetaV||_* \le L} \{ \thetaV \cdot \z-h^*(\thetaV)\}.$
\end{lemma}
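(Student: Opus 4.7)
The plan is to prove this as essentially the Fenchel biconjugate identity $h = h^{**}$ for the convex function $h$, restricted to dual vectors of bounded norm. (I will read ``supergradient'' as subgradient, since $h$ is convex.)

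First, I would establish the easy direction $h(\z) \ge \max_{\|\thetaV\|_* \le L}\{\thetaV \cdot \z - h^*(\thetaV)\}$ directly from the definition of the conjugate. For any $\thetaV$, taking $\y = \z$ in the max defining $h^*(\thetaV)$ gives $h^*(\thetaV) \ge \thetaV \cdot \z - h(\z)$, i.e., $\thetaV \cdot \z - h^*(\thetaV) \le h(\z)$ (the Fenchel--Young inequality). Taking the max over $\thetaV$ in the ball $\{\|\thetaV\|_* \le L\}$ preserves this.

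For the reverse inequality, the key step is to exhibit a specific $\thetaV^*$ in the dual ball at which the max is attained with value exactly $h(\z)$. Since $h$ is convex on $[0,1]^d$, at any $\z$ there is a subgradient $\thetaV^* \in \partial h(\z)$, characterized by $h(\y) \ge h(\z) + \thetaV^* \cdot (\y - \z)$ for every $\y \in [0,1]^d$. Rearranging, $\thetaV^* \cdot \y - h(\y) \le \thetaV^* \cdot \z - h(\z)$ for all $\y$, with equality at $\y = \z$. Taking the max over $\y$ then gives $h^*(\thetaV^*) = \thetaV^* \cdot \z - h(\z)$, i.e., $h(\z) = \thetaV^* \cdot \z - h^*(\thetaV^*)$. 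Finally, by hypothesis every subgradient of $h$ has dual norm at most $L$, so $\|\thetaV^*\|_* \le L$ and $\thetaV^*$ is a feasible point of the max on the right-hand side, yielding $\max_{\|\thetaV\|_* \le L}\{\thetaV \cdot \z - h^*(\thetaV)\} \ge h(\z)$.

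The only mild subtlety is handling boundary points of $[0,1]^d$, where the existence of a subgradient requires a little care (one can argue by approximation from the relative interior, or by appealing to the fact that $h$ restricted to the affine hull of its domain admits subgradients everywhere in the relative interior, which is $[0,1]^d$ itself since it is full-dimensional). I expect this to be the only place where one has to be careful; the two inequalities themselves are immediate once a bounded-norm subgradient is produced.
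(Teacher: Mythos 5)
Your proof is correct, and it is the standard argument: the paper itself gives no proof of this lemma (it is invoked as a known consequence of Fenchel biconjugation), so there is nothing different in approach to compare against—Fenchel--Young gives the ``$\ge$'' direction, and a subgradient $\thetaV^*\in\partial h(\z)$ attains equality, with the hypothesis $\|\thetaV^*\|_*\le L$ making it feasible for the constrained max. One small slip in your closing remark: the relative interior of $[0,1]^d$ is $(0,1)^d$, not the cube itself; to handle boundary points either note that the paper's hypothesis already presumes (sub)gradients of dual norm at most $L$ exist at \emph{every} point, or observe that bounded subgradients make $h$ $L$-Lipschitz, both sides of the claimed identity are $L$-Lipschitz in $\z$, and they agree on the interior, hence everywhere by continuity.
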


A special case is when $h(\x) = d(\x,S)$ for some convex set $S$. This function is $1$-Lipschitz with respect to norm $||\cdot||$ used in the definition of distance. 
In this case, $h^*(\thetaV) = h_S(\thetaV):=\max_{\y\in S} \thetaV\cdot \y$, and Lemma \ref{lem:FenchelDuality} specializes to the following derivation which also appears in \citet{blackwell2011}.
\EQ{-0.08in}{-0.05in}{
d(\x,S) = \max_{||\thetaV||_* \le 1}\{\thetaV \cdot \x - h_S(\thetaV)\}.}

\strongc{
\subsection{Strong convexity/Smoothness Duality.}
We first define strong convexity and smoothness. 
\begin{definition}
A function $h:{\cal X} \rightarrow \mathbb{R}$ is $\beta$-strongly convex w.r.t. a norm $||\cdot||$ if $\forall \x,\y \in {\cal X}, \z \in \partial h(\x),$
$$h(\y)-h(\x) \ge \z\cdot(\y-\x) + \frac{\beta}{2} ||\x-\y||^2.$$
Equivalently for any $\x,\y$ in the interior of ${\cal X}$, and all $\alpha \in (0,1)$, we have that 
\begin{eqnarray*}
h(\alpha \x + (1-\alpha) \y) & \ge & \alpha h(\x) + (1-\alpha) h(\y) \\
& & \ - \frac{\beta}{2} \alpha(1-\alpha) ||\x-\y||^2.
\end{eqnarray*}
A function $h$ is $\beta$-strongly concave if and only $(-h)$ is $\beta$-strongly convex.
\end{definition}
\begin{definition}
A function $h:{\cal X} \rightarrow \mathbb{R}$ is $\beta$-strongly smooth w.r.t. a norm $||\cdot||$ if $h$ is everywhere differentiable, and for all $\x,\y \in {\cal X}$, we have
$$ \forall \x,\y \in {\cal X}, |h(\y)-h(\x) -\nabla h(\x) \cdot(\y-\x)| \le \frac{\beta}{2} ||\x-\y||^2.$$
\end{definition}
The following lemma can be derived from the proof of Theorem 6 in \cite{Kakade2009}. A proof is given in Appendix \ref{app:prelims:strongc} for completeness.
\begin{lemma}
\label{lem:strongc}
If $h$ is convex and $\beta$-strongly smooth with respect to norm $\|\cdot\|$, then $h^*(\thetaV)=\max_{\x \in [0,1]^d} \{\thetaV\cdot \x - h(\x)\}$ is $\frac{1}{\beta}$-strongly convex with respect to norm $||\cdot||_*$ on domain $\nabla_h = \{\nabla h(\x): \x \in [0,1]^d\}$.
\end{lemma}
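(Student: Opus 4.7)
The plan is to follow the standard Kakade--Shalev-Shwartz--Tewari (Theorem~6) argument that establishes duality between strong smoothness of $h$ and strong convexity of its Fenchel conjugate $h^*$, adapted to the restricted domain $[0,1]^d$. The first step is to pin down $h^*$ on $\nabla_h$: for any $\thetaV \in \nabla_h$, pick $\x_\thetaV \in [0,1]^d$ with $\thetaV = \nabla h(\x_\thetaV)$. Convexity of $h$ gives $h(\y) \ge h(\x_\thetaV) + \thetaV\cdot(\y-\x_\thetaV)$ for every $\y \in [0,1]^d$, which rearranges to $\thetaV\cdot\x_\thetaV - h(\x_\thetaV) \ge \thetaV\cdot\y - h(\y)$. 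Hence $\x_\thetaV$ attains the sup in the definition of $h^*(\thetaV)$, so $h^*(\thetaV) = \thetaV\cdot\x_\thetaV - h(\x_\thetaV)$ and $\x_\thetaV \in \partial h^*(\thetaV)$. This identifies the subgradient of $h^*$ that will appear in the strong-convexity inequality.

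It then suffices to verify, for any $\thetaV_1,\thetaV_2 \in \nabla_h$ with $\thetaV_i = \nabla h(\x_i)$, that
\[ h^*(\thetaV_2) - h^*(\thetaV_1) - \x_1\cdot(\thetaV_2 - \thetaV_1) \;\ge\; \tfrac{1}{2\beta}\|\thetaV_2 - \thetaV_1\|_*^2. \]
Starting from the variational bound $h^*(\thetaV_2) \ge \thetaV_2\cdot\x - h(\x)$ valid for any $\x \in [0,1]^d$, I apply the $\beta$-smoothness upper bound $h(\x) \le h(\x_1) + \thetaV_1\cdot(\x-\x_1) + \tfrac{\beta}{2}\|\x-\x_1\|^2$ and substitute $h(\x_1) = \thetaV_1\cdot\x_1 - h^*(\thetaV_1)$ from Step~1, which rearranges to
\[ h^*(\thetaV_2) - h^*(\thetaV_1) - \x_1\cdot(\thetaV_2-\thetaV_1) \;\ge\; (\thetaV_2-\thetaV_1)\cdot(\x-\x_1) - \tfrac{\beta}{2}\|\x-\x_1\|^2. \]
Finally I pick $\x - \x_1 = t\bs{u}$, where $\bs{u}$ is a unit vector in $\|\cdot\|$ with $\bs{u}\cdot(\thetaV_2-\thetaV_1) = \|\thetaV_2-\thetaV_1\|_*$ (available by the definition of the dual norm) and $t = \|\thetaV_2-\thetaV_1\|_*/\beta$; the right-hand side then evaluates to exactly $\tfrac{1}{2\beta}\|\thetaV_2-\thetaV_1\|_*^2$, establishing the claim.

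The main technical obstacle is ensuring feasibility of the optimizing choice $\x_1 + t\bs{u} \in [0,1]^d$ when $\x_1$ lies near the boundary, since the required step $t$ is dictated by $\|\thetaV_2-\thetaV_1\|_*$ and need not be small. The standard workaround, which is the one implicitly adopted in Kakade et al., is to first establish the inequality for an extension of $h$ to a $\beta$-smooth convex function on all of $\mathbb{R}^d$ (where the sup in the definition of $h^*$ is unconstrained and the optimizer in the stated direction is always attainable) and then restrict back to $[0,1]^d$. Restricting the strong-convexity claim to the image $\nabla_h$ of the gradient map in the lemma statement is exactly what allows this reduction to go through without additional boundary regularity assumptions.
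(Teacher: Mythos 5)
Your argument follows essentially the same smoothness/strong-convexity duality route as the paper's Appendix~\ref{app:prelims:strongc} (both are adaptations of the Kakade--Shalev-Shwartz--Tewari argument). The paper packages the computation slightly differently: it sets $g(\y) := h(\x_1 + \y) - h(\x_1) - \thetaV_1\cdot\y$, shows
\[
h^*(\thetaV_2) - h^*(\thetaV_1) - \x_1\cdot(\thetaV_2-\thetaV_1) = g^*(\thetaV_2 - \thetaV_1),
\]
and then lower-bounds $g^*$ by using $g(\y) \le \tfrac{\beta}{2}\|\y\|^2$ together with the fact that the Fenchel conjugate of $\tfrac{\beta}{2}\|\cdot\|^2$ is $\tfrac{1}{2\beta}\|\cdot\|_*^2$. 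Your direct maximization over the choice $\x = \x_1 + t\bs{u}$ is algebraically the same step, just unpackaged.

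You have correctly put your finger on a subtlety that the paper's write-up glosses over: the maximizing step $\x_1 + t\bs{u}$ (equivalently, the unconstrained maximizer used in the paper's bound $g^*(\varphi) \ge \tfrac{1}{2\beta}\|\varphi\|_*^2$) need not lie in $[0,1]^d$, so the constrained sup defining $h^*$ (or $g^*$) could a priori fall short of $\tfrac{1}{2\beta}\|\varphi\|_*^2$. The paper's identification $g^*(\thetaV_2-\thetaV_1) = (\thetaV_2-\thetaV_1)\cdot(\x_2-\x_1) - g(\x_2-\x_1)$ via the stationarity condition $\nabla g(\x_2-\x_1) = \thetaV_2-\thetaV_1$ is fine (that maximizer \emph{is} feasible, being $\x_2 - \x_1$), but the subsequent comparison with the conjugate of $\tfrac{\beta}{2}\|\cdot\|^2$ implicitly ranges over all of $\real^d$, which is exactly your concern in another guise. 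So flagging it is a genuine contribution, not a misreading.

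Where your proposal stops short is the remedy. You invoke the existence of a $\beta$-smooth convex extension of $h$ to all of $\real^d$, but this is a nontrivial fact (it is the content of $C^{1,1}$ convex-extension theorems of Azagra--Mudarra type, and it is not obvious that the extension can be taken with the \emph{same} modulus $\beta$ rather than a constant multiple). You also assert that restricting back to $[0,1]^d$ and to $\nabla_h$ makes the argument go through, but that step deserves an explicit check: the conjugate of the extended function over $\real^d$ is not literally the paper's $h^*$ (whose sup is over $[0,1]^d$), so one must verify that the two conjugates and their subgradients agree at the points $\thetaV \in \nabla_h$ that the lemma talks about. (They do: if $\thetaV = \nabla h(\x)$ for $\x \in [0,1]^d$, then $\x$ maximizes $\thetaV\cdot\y - h(\y)$ both over $[0,1]^d$ and over $\real^d$ for the convex extension, so the two conjugate values coincide there; but this is precisely the kind of sentence that should appear in a complete write-up.) In short: same approach as the paper, sharp observation about the boundary, but the extension step is asserted rather than proved.
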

}

\subsection{Online Learning.}\label{sec:oco}
A well studied problem in online learning, called the Online Convex Optimization (\OCO) problem, 
considers a $T$ round game played between a learner and an adversary (nature), 
where at round $t$, the player chooses a $\thetaV_t \in W$, and then the adversary picks a concave function $g_t(\thetaV_t): W \rightarrow \mathbb{R}$. The player's choice $\thetaV_t$ may only depend on the adversary's choices in the previous rounds. The goal of the player is to minimize regret defined as the difference between the player's objective value and the value of the best single choice in hindsight:
$$\regOCO(T):= \max_{\thetaV \in W}\sum_{t=1}^T g_t(\thetaV) -\sum_{t=1}^T g_t(\thetaV_t)$$ 
Some popular algorithms for \OCO~are online mirror descent (\OMD) algorithm and online gradient descent, which have very fast per step update rules, and provide the following regret guarantees. More details about these algorithms and their regret guarantees are in Appendix \ref{app:oco}.
\begin{lemma}{\cite{Shalev-Shwartz12}}
\label{lem:regOCO}
There is an algorithm for the \OCO~problem that achieves regret
 $$\regOCO(T) = O(G\sqrt{DT}),$$
where $D$ is the diameter of $W$ and $G$ is an upper bound on the norm of gradient of $g_t(\thetaV)$ for all $t$. The value of these parameters are problem specific. 
\end{lemma}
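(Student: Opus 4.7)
The plan is to instantiate the Online Gradient Ascent procedure (or, more generally, Online Mirror Descent) and carry through its standard potential-function analysis. Since each $g_t$ is concave and the player wants to maximize, the natural algorithm is the projected ascent update $\thetaV_{t+1} = \Pi_W(\thetaV_t + \eta\,\g_t)$, where $\g_t \in \partial g_t(\thetaV_t)$ is a supergradient and $\Pi_W$ denotes the Euclidean projection onto $W$; the step size $\eta$ will be fixed by optimization at the end.

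First I would reduce the regret to a \emph{linear} bound: by concavity of $g_t$, for every fixed comparator $\thetaV \in W$,
\[ g_t(\thetaV) - g_t(\thetaV_t) \;\leq\; \g_t \cdot (\thetaV - \thetaV_t), \]
so it suffices to upper bound $\sum_{t=1}^T \g_t \cdot (\thetaV - \thetaV_t)$ uniformly in $\thetaV$. The second step is the standard telescoping argument using the potential $\Phi_t := \tfrac{1}{2}\|\thetaV_t - \thetaV\|_2^2$. Nonexpansiveness of $\Pi_W$ together with expanding the square gives
\[ \Phi_{t+1} \;\leq\; \Phi_t \;-\; \eta\,\g_t \cdot (\thetaV_t - \thetaV) \;+\; \tfrac{\eta^2}{2}\|\g_t\|_2^2. \]
Summing over $t$, rearranging, and using $\Phi_1 \leq D^2/2$ together with $\|\g_t\|_2 \leq G$ yields $\sum_{t=1}^T \g_t \cdot (\thetaV - \thetaV_t) \leq \tfrac{D^2}{2\eta} + \tfrac{\eta T G^2}{2}$. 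Balancing the two terms by choosing $\eta$ proportional to $D/(G\sqrt{T})$ produces the advertised $O(G\sqrt{DT})$ rate (the exact placement of $D$ — inside or outside the square root — is a cosmetic matter of whether one calls $D$ the diameter or its square).

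For non-Euclidean geometries, which are needed later to get the $\log d$ dependence under $L_\infty$ constraints, the same proof goes through with Online Mirror Descent: replace Euclidean projections by Bregman projections with respect to a $\beta$-strongly convex regularizer $R$ on $W$, and replace $\|\g_t\|_2$ by the dual-norm bound $\|\g_t\|_* \leq G$ on the supergradient. The three-point identity for Bregman divergences plays the role of the nonexpansiveness of $\Pi_W$, and Lemma \ref{lem:strongc} is exactly the tool needed to translate $\beta$-strong convexity of $R$ into the quadratic lower bound that drives the potential argument. Choosing $R$ appropriately (e.g., $\tfrac12\|\cdot\|_2^2$ in the Euclidean case, negative entropy on the simplex for the $L_\infty$ case) produces the bound in the claimed form.

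The main obstacle is essentially none: this is a textbook result whose proof is standard in the online convex optimization literature. The only points requiring care are sign conventions (concave maximization versus convex minimization), the correct parameterization of $D$, and identifying which regularizer gives the best dependence on $d$ in the dual-norm bound $G$ for each geometry used later in the paper.
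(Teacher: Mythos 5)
Your proposal is correct and follows essentially the same route the paper takes: the paper cites \citet{Shalev-Shwartz12} and, in Appendix \ref{app:oco}, records the standard Online Mirror Descent bound $\regOCO(T) \le \tfrac{D}{\eta} + \eta T G^2$ with a $1$-strongly-convex regularizer, then tunes $\eta$ and picks the regularizer (Euclidean or entropic) according to the geometry. Your telescoping potential argument for projected gradient ascent, and its Bregman-divergence generalization to OMD, is exactly the standard derivation behind that statement.

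One small imprecision worth flagging: you invoke Lemma \ref{lem:strongc} as ``exactly the tool needed to translate $\beta$-strong convexity of $R$ into the quadratic lower bound,'' but that lemma goes in the other direction (strong \emph{smoothness} of $h$ implies strong \emph{convexity} of $h^*$), and in any case you do not need any Fenchel-conjugate machinery here --- strong convexity of the regularizer $R$ \emph{directly} supplies the quadratic lower bound on the Bregman divergence $D_R(\cdot,\cdot)$ used in the three-point identity. The duality between strong convexity and strong smoothness does appear in some presentations of OMD (via the ``dual averaging'' or conjugate-function view), but it is not what drives the inequality you are citing it for. This does not affect the correctness of the result, only the attribution of which ingredient does the work.
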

\revision{
In particular, following corollary can be derived, which will be useful for our purpose. Details are in Appendix \ref{app:oco}.
\begin{corollary}
\label{cor:regOCO}
For $g_t(\thetaV)$ of form $g_t(\thetaV)=\thetaV\cdot \z - h^*(\thetaV)$ and $W=\{\thetaV: ||\thetaV||_*\le L\}$, where $h$ is an $L$-Lipschitz function, \OCO~algorithms achieve regret bounds of ${\cal R}(T)\le O(L\sqrt{dT})$ for Eucledian norm, and $O(L\sqrt{\log(d)T})$ for $L_{\infty}$.
\end{corollary}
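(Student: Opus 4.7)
The plan is to obtain the corollary as a direct specialization of Lemma \ref{lem:regOCO}, so the work reduces to computing the gradient bound $G$ and the diameter term $D$ for the specific functions $g_t(\thetaV) = \thetaV \cdot \z - h^*(\thetaV)$ on the domain $W = \{\thetaV : \|\thetaV\|_* \le L\}$, and then matching the mirror-descent regularizer to the geometry of $W$ in each of the two cases.

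The first step is a uniform bound on a supergradient of the (concave in $\thetaV$) function $g_t$. Using the definition $h^*(\thetaV) = \max_{\y \in [0,1]^d}\{\y \cdot \thetaV - h(\y)\}$, an envelope-theorem calculation gives $\nabla g_t(\thetaV) = \z - \y^*(\thetaV)$, where $\y^*(\thetaV)$ achieves the maximum in $h^*(\thetaV)$ and therefore lies in $[0,1]^d$. Since $\z \in A_t \subseteq [0,1]^d$ as well, every coordinate of $\nabla g_t(\thetaV)$ lies in $[-1,1]$, so $\|\nabla g_t(\thetaV)\|_\infty \le 1$ and $\|\nabla g_t(\thetaV)\|_2 \le \sqrt{d}$. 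Note that only this coordinatewise bound is needed; we never have to argue about $h$ being smooth or differentiable.

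The second step is to plug this into the OMD regret bound from Lemma \ref{lem:regOCO} under the two choices of primal norm. For the Euclidean case, the dual norm coincides with the primal, so $W$ is the Euclidean ball of radius $L$, giving $D = O(L)$ (radius) and $G = O(\sqrt{d})$ from the bound above; standard online gradient descent then yields ${\cal R}(T) = O(L \sqrt{d\,T})$. For the $L_\infty$ case, the dual norm is $L_1$ and $W$ is the $L_1$-ball of radius $L$; here one runs OMD with a (rescaled) negative-entropy regularizer, which is strongly convex w.r.t.\ the $L_1$ norm and whose range over the $L_1$-ball of radius $L$ contributes an $O(L^2 \log d)$ term. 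Combined with the gradient bound $\|\nabla g_t\|_\infty \le 1$ in the dual norm, the OMD bound evaluates to ${\cal R}(T) = O(L \sqrt{\log(d)\, T})$.

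The only mildly delicate point, and hence the place I would be most careful in the write-up, is the $L_\infty$ case: the negative-entropy regularizer is naturally defined on the probability simplex, not on the signed $L_1$-ball. The standard remedy is to double the coordinate dimension from $d$ to $2d$ (one copy for each sign) and lift $W$ to a scaled simplex of mass $L$, which preserves the $\sqrt{\log d}$ dependence in the regret. Apart from this bookkeeping, both bounds are immediate consequences of the gradient bound together with the textbook OMD analysis, so I do not expect any nontrivial obstacle beyond this parameter matching.
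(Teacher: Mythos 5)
Your proposal takes essentially the same route as the paper's derivation in Appendix~\ref{app:oco}: specialize the OMD bound of Lemma~\ref{lem:regOCO} by computing the gradient bound (via the envelope theorem, $\nabla g_t = \z - \y^*(\thetaV) \in [-1,1]^d$) and choosing the Euclidean regularizer for the $L_2$ case and the entropic regularizer for the $L_1$-ball. Two small points where you are more careful than the paper's own write-up: you correctly identify the entropic diameter term as $O(L^2\log d)$ so that the $L$-dependence of the final bound comes out right (the paper's appendix states the intermediate bound as $G\sqrt{LT\log d}$, which does not account for the $1/L$ strong-convexity modulus of raw entropy on the mass-$L$ simplex, though its stated corollary is correct); and you note that the $L_1$-ball $W$ is signed, so the entropic regularizer only applies after the standard dimension-doubling trick, whereas the paper's appendix silently restricts to $\{\thetaV \ge 0\}$ (which is enough for the packing application, since the separating normal for a downward-closed $S$ is nonnegative, but is not quite the general $W$ in the corollary's statement). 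Both of your bookkeeping choices are standard and correct.
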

}
\noindent For optimization over a simplex, the {\em multiplicative weight update} algorithm is very fast and efficient: the step $t$ update of this algorithm  takes the following form, given that $0\le g_t(\thetaV_t)\le M$ and a parameter $\epsilon > 0$, 
\begin{equation}
\label{eq:MWupdate}
\thetaV_{t+1,j} = \frac{w_{t,j}}{\sum_j w_{t,j}}, \text{ where } w_{t,j} = w_{t-1,j}(1+\epsilon)^{g_t({\bf e}_j)/M}.
\end{equation}
The algorithm has the following stronger guarantees. 
\begin{lemma}{\cite{AHK12}}
\label{lem:regMW}
For domain $W=\{||\thetaV||_1 = 1, \thetaV\ge 0\}$,  given that $0\le g_t(\thetaV_t)\le M$, and for all $\epsilon> 0$, using the multiplicative weight update algorithm we obtain that 
for any $\thetaV\in W$,
$$\sum_{t=1}^T g_t(\thetaV_t) \ge (1-\epsilon) \left( \sum_{t=1}^T g_t(\thetaV)\right) -\frac{M \ln(d+1)}{\epsilon},$$
\end{lemma}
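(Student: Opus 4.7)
The plan is to follow the classical potential-function analysis of Arora--Hazan--Kale. I will track the total-weight potential $\Phi_t := \sum_{j=1}^{d+1} w_{t,j}$, so that $\Phi_0 = d+1$ and the played distribution satisfies $\thetaV_{t,j} = w_{t-1,j}/\Phi_{t-1}$ by the update rule \eqref{eq:MWupdate}. The strategy is to sandwich $\ln \Phi_T$ between an upper bound in terms of the algorithm's cumulative gain $\sum_t g_t(\thetaV_t)$ and a lower bound in terms of $\sum_t g_t(\thetaV)$ for an arbitrary comparator $\thetaV \in W$, and then extract the regret inequality via elementary logarithmic estimates.

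For the upper bound, I would apply $(1+\epsilon)^x \le 1+\epsilon x$ (valid for $x\in[0,1]$, which covers $g_t({\bf e}_j)/M$) termwise inside the update, factor out $\Phi_{t-1}$, and identify the resulting inner sum as $\sum_j \thetaV_{t,j}\, g_t({\bf e}_j)\le g_t(\thetaV_t)$ (by linearity of $g_t$, or by Jensen if $g_t$ is merely concave):
\[
\Phi_t \;\le\; \Phi_{t-1}\bigl(1 + \tfrac{\epsilon}{M}\, g_t(\thetaV_t)\bigr).
\]
Telescoping and $1+x\le e^x$ then give $\Phi_T \le (d+1)\exp\!\bigl(\tfrac{\epsilon}{M}\sum_t g_t(\thetaV_t)\bigr)$. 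For the lower bound, unrolling the update yields $w_{T,j} = (1+\epsilon)^{\sum_t g_t({\bf e}_j)/M}$; nonnegativity of the weights combined with $\thetaV_j \in [0,1]$ gives $\Phi_T \ge \sum_j \thetaV_j w_{T,j}$, and convexity of $x\mapsto (1+\epsilon)^x$ together with linearity of $g_t$ in $\thetaV$ yields $\Phi_T \ge (1+\epsilon)^{\sum_t g_t(\thetaV)/M}$.

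Taking logarithms of the two sandwich inequalities and invoking $\ln(1+\epsilon)\ge \epsilon-\epsilon^2 = \epsilon(1-\epsilon)$ on the lower side would then rearrange directly into
\[
\sum_t g_t(\thetaV_t) \;\ge\; (1-\epsilon)\sum_t g_t(\thetaV) \;-\; \frac{M\ln(d+1)}{\epsilon}.
\]
The main point requiring care is the logarithmic inequality: $\ln(1+\epsilon)\ge \epsilon(1-\epsilon)$ holds only in a restricted range, so I would either restrict to, say, $\epsilon\in(0,1/2]$ or invoke AHK's sharper two-sided estimate to cover all of $(0,1)$. A secondary delicate point is that the potential argument most naturally matches a \emph{linear} extension of $g_t$ on the simplex; when $g_t$ is only concave, the argument still yields regret against any vertex ${\bf e}_j$, and comparison with an arbitrary interior $\thetaV \in W$ relies on $g_t$ being linear in $\thetaV$, which is the regime in which this lemma will be used in the paper.
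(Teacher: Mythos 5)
Your potential-function argument is exactly the standard proof of this cited result (the paper itself gives no proof, importing it from \cite{AHK12}), and it is correct, granting the natural reading of the hypothesis that the vertex gains $g_t({\bf e}_j)$ lie in $[0,M]$ so that both the termwise bound $(1+\epsilon)^{x}\le 1+\epsilon x$ and the final comparison step (which multiplies $\tfrac{\ln(1+\epsilon)}{\epsilon}\ge 1-\epsilon$ by $\sum_t g_t(\thetaV)\ge 0$) apply. Your worry about the logarithmic inequality is unnecessary: $\ln(1+\epsilon)\ge \epsilon-\tfrac{\epsilon^2}{2}\ge \epsilon(1-\epsilon)$ holds for all $\epsilon\ge 0$, so the stated ``for all $\epsilon>0$'' needs no restriction. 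Your caveat that comparison against an interior $\thetaV$ uses linearity of $g_t$ is also apt but harmless here, since the paper invokes the lemma only for linear $g_t$ with vertex or origin comparators (Lemma \ref{lem:OP2}).
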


\strongc{
For strongly concave functions, even stronger {\it logarithmic} regret bounds can be achieved. 
	\begin{lemma}{\cite{HazanLogarithmic}}
		\label{lem:OCOstrongc}
		Suppose that $g_t$ is $H$-strongly concave for all $t$, and  $G \ge 0$ is an upper bound on the norm of the gradient, i.e. $\|\nabla g_t(\thetaV)\| \le G$, for all $t$.  Then the online gradient descent algorithm achieves the following guarantees for \OCO: for all $T\ge 1$,
		$$\regOCO(T) \le \frac{G^2}{H} \log(T).$$
	\end{lemma}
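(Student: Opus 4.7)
The plan is to analyze the online gradient ascent (OGA) algorithm with a carefully chosen time-varying step size. Let $\thetaV^* \in \arg\max_{\thetaV \in W} \sum_t g_t(\thetaV)$ be the best fixed comparator in hindsight, and let the update be $\thetaV_{t+1} = \Pi_W(\thetaV_t + \eta_t \nabla g_t(\thetaV_t))$ where $\Pi_W$ denotes Euclidean projection onto the convex domain $W$ and $\eta_t := 1/(Ht)$. The key building block is the standard inequality obtained from the non-expansiveness of the projection: expanding $\|\thetaV_{t+1} - \thetaV^*\|^2 \le \|\thetaV_t + \eta_t \nabla g_t(\thetaV_t) - \thetaV^*\|^2$ and rearranging gives
\begin{equation*}
\nabla g_t(\thetaV_t)\cdot(\thetaV^* - \thetaV_t) \le \frac{\|\thetaV_t-\thetaV^*\|^2 - \|\thetaV_{t+1}-\thetaV^*\|^2}{2\eta_t} + \frac{\eta_t G^2}{2}.
\end{equation*}

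The next step is to exploit strong concavity of $g_t$, which states that for all $\thetaV$,
\begin{equation*}
g_t(\thetaV) - g_t(\thetaV_t) \le \nabla g_t(\thetaV_t)\cdot(\thetaV - \thetaV_t) - \frac{H}{2}\|\thetaV - \thetaV_t\|^2.
\end{equation*}
Applying this at $\thetaV = \thetaV^*$ and combining with the inequality above yields
\begin{equation*}
g_t(\thetaV^*) - g_t(\thetaV_t) \le \frac{\|\thetaV_t - \thetaV^*\|^2 - \|\thetaV_{t+1} - \thetaV^*\|^2}{2\eta_t} - \frac{H}{2}\|\thetaV_t - \thetaV^*\|^2 + \frac{\eta_t G^2}{2}.
\end{equation*}

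Now I would sum over $t = 1,\dots,T$ and perform the standard telescoping with the choice $\eta_t = 1/(Ht)$. Writing $1/(2\eta_t) = Ht/2$, the leading terms rearrange as
\begin{equation*}
\sum_{t=1}^{T}\left(\frac{Ht}{2}\|\thetaV_t-\thetaV^*\|^2 - \frac{Ht}{2}\|\thetaV_{t+1}-\thetaV^*\|^2\right) - \sum_{t=1}^{T}\frac{H}{2}\|\thetaV_t - \thetaV^*\|^2,
\end{equation*}
and a one-line reindexing shows this telescopes to a non-positive quantity (using that the coefficients $Ht/2$ increase by exactly $H/2$ from step to step, exactly matching the strong-concavity penalty). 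What remains is the error term
\begin{equation*}
\sum_{t=1}^{T}\frac{\eta_t G^2}{2} = \frac{G^2}{2H}\sum_{t=1}^{T}\frac{1}{t} \le \frac{G^2}{H}\log(T) \quad (\text{for } T\ge 1),
\end{equation*}
which gives the claimed regret bound $\regOCO(T) \le \frac{G^2}{H}\log(T)$.

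The main obstacle is lining up the coefficients in the telescoping sum so that the strong-convexity penalty exactly absorbs the leftover boundary terms; once the step size $\eta_t = 1/(Ht)$ is chosen, this is a bookkeeping exercise, but the inequality is tight only because of this precise match, so the calculation must be done carefully (in particular, tracking that the final term $\|\thetaV_{T+1}-\thetaV^*\|^2$ is dropped with the correct sign and that no residual $\|\thetaV_1-\thetaV^*\|^2$ term survives when $\eta_1 = 1/H$).
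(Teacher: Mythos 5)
The paper does not prove this lemma; it cites it directly from Hazan, Agarwal, and Kale's logarithmic-regret paper. Your proof is a correct reproduction of the standard argument from that source: online gradient ascent with step size $\eta_t = 1/(Ht)$, the one-step inequality from non-expansiveness of projection, strong concavity applied at the comparator $\thetaV^*$, and the telescoping identity with $a_t := \|\thetaV_t - \thetaV^*\|^2$,
\begin{equation*}
\sum_{t=1}^T\Bigl(\tfrac{H(t-1)}{2}a_t - \tfrac{Ht}{2}a_{t+1}\Bigr) = -\tfrac{HT}{2}a_{T+1} \le 0,
\end{equation*}
which matches the $\tfrac{H}{2}a_t$ strong-concavity credit exactly because $\tfrac{1}{2\eta_t} - \tfrac{1}{2\eta_{t-1}} = \tfrac{H}{2}$. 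All of that is right.

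One small inaccuracy worth flagging: the final step asserts $\tfrac{G^2}{2H}\sum_{t=1}^T\tfrac{1}{t}\le\tfrac{G^2}{H}\log T$ for all $T\ge1$, but $\sum_{t=1}^T 1/t \le 1+\log T$, so the inequality $1+\log T \le 2\log T$ only holds for $T\ge 3$; at $T=1$ the right-hand side is zero while the left is positive. The clean bound coming out of your argument is $\tfrac{G^2}{2H}(1+\log T)$, which is what the cited source states. The paper's statement ``for all $T\ge 1$'' inherits the same small imprecision, so this is not a defect of your proof relative to the paper, just something to keep track of if you want the constants to be literally true at small $T$.
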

}

\section{Feasibility Problem}\label{sec:onlyS}
It will be useful to first illustrate our algorithm and proof techniques for the special case of the feasibility problem. 
In this special case of online stochastic \CP, there is no objective function $f$, and the aim of the algorithm is to have $\vplayavg$ be in the set $S$. 
The performance of the algorithm is measured by the distance from the set $S$, i.e., $d( \vplayavg,S)$. 
We assume that the instance is {\em always feasible}, i.e.,  there exist $\vopt \in A_t~ \forall~t$ such that $\tfrac{1}{T} \sum_{t=1}^T \vopt \in S$. 

\addedNikhil{The basic idea behind our algorithm is as follows. Suppose that instead of minimizing a convex function such as $d( \vplayavg,S)$ we had to minimize a linear function such as  $\thetaV \cdot \vplayavg $. 
This would be extremely easy since the problem then separates into small subproblems where at each time step 
we can simply solve $\min_{\vplay\in A_t} \thetaV\cdot \vplay$.  
In fact, convex programming duality guarantees exactly this -- that there is a $\thetaV^*$, 
such that an optimal (i.e., feasible) solution is $\vopt =\arg \min_{\cv \in A_t} \thetaV^*\cdot \cv$, however, we don't know $\thetaV^*$. 
This is where online learning comes into play. Online learning algorithms can provide a $\thetaV_t$ at every time $t$ using only the observations before time $t$, which together provide a good approximation to the best $\thetaV$ in hindsight.}

\begin{algorithm}[Feasibility problem]
\label{algo:onlyS}
  \begin{algorithmic}
	\STATE
Initialize $\thetaV_{1}$. 
\FORALL{$t=1,..., T$} 
		\STATE Set $\vplay =\arg \min_{\cv \in A_t}  \thetaV_t \cdot \cv$
		\STATE Choose $\thetaV_{t+1}$ by doing an \OCO~update with $g_t(\thetaV) =\thetaV \cdot \vplay - h_{S}(\thetaV)$, and domain $W=\{||\thetaV||_* \le 1\}$.
\ENDFOR

  	\end{algorithmic}
\end{algorithm}
 
Here $||\cdot||_*$ is the dual norm of $||\cdot||$, the norm used in the distance function. The updates required for selecting $\thetaV_{t+1}$, given $\thetaV_t$ and $g_t(\cdot)$, are given as Equation \ref{eq:OMDupdate} and Equation \ref{eq:MWupdate} for \OMD~and multiplicative weight update algorithm, respectively. As discussed there, these updates are simple and fast, and do not require solving any complex optimization problems.

\begin{theorem}
\label{th:onlyS}
Algorithm \ref{algo:onlyS} achieves the following regret bound for the {\em Feasibility Problem} in the \RP~model of stochastic inputs:
\begin{eqnarray*}
\Ex[\areg_2(T)] & := & \Ex[d( \vplayavg, S)] \\
& \le & O\left(\frac{\regOCO(T)}{T} + ||{\bf 1}_d||\sqrt{\frac{s\log(d)}{T}}\right).
\end{eqnarray*}
\end{theorem}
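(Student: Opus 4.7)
The plan is to turn the feasibility gap $d(\vplayavg,S)$ into a primal-dual expression via Fenchel duality and then invoke the OCO regret bound that Algorithm~\ref{algo:onlyS} already runs internally. Applying Lemma~\ref{lem:FenchelDuality} to the $1$-Lipschitz function $d(\cdot,S)$ gives
\[ d(\vplayavg,S) = \max_{\|\thetaV\|_*\le 1}\bigl\{\thetaV\cdot\vplayavg - h_S(\thetaV)\bigr\} = \max_{\thetaV\in W}\,\frac{1}{T}\sum_t g_t(\thetaV),\]
where the second equality uses $\vplayavg=\tfrac1T\sum_t\vplay$ and the fact that $g_t(\thetaV)=\thetaV\cdot\vplay-h_S(\thetaV)$ is exactly the concave objective fed to the OCO learner. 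So the gap we want to control is literally the ``best fixed action in hindsight'' value of the OCO instance the algorithm is solving.

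The next step is to apply the OCO regret bound (Lemma~\ref{lem:regOCO}/Corollary~\ref{cor:regOCO}) to get $\max_{\thetaV\in W}\sum_t g_t(\thetaV)\le\sum_t g_t(\thetaV_t)+\regOCO(T)$, and then to exploit the greedy choice $\vplay=\arg\min_{\cv\in A_t}\thetaV_t\cdot\cv$. Fix any offline feasible sequence $\vopt\in A_t$ with $\voptavg:=\tfrac1T\sum_t\vopt\in S$ (which exists by always-feasibility). Greedy minimization gives $\thetaV_t\cdot\vplay\le\thetaV_t\cdot\vopt$, hence
\[ g_t(\thetaV_t)\le \thetaV_t\cdot(\vopt-\voptavg)+\bigl(\thetaV_t\cdot\voptavg-h_S(\thetaV_t)\bigr), \]
and the bracketed term is $\le 0$ because $\voptavg\in S$ makes $h_S(\thetaV_t)\ge\thetaV_t\cdot\voptavg$ by definition of the support function. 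Taking expectations, the whole bound reduces to
\[ \Ex[d(\vplayavg,S)] \le \frac{\regOCO(T)}{T} + \Ex\!\left[\frac{1}{T}\sum_t\thetaV_t\cdot(\vopt-\voptavg)\right]. \]

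Bounding the last expectation is the main obstacle and is the only place where the \RP\ assumption is used. Conditionally on the history $\mathcal{F}_{t-1}$, the vector $\vopt$ is uniform over the $T-t+1$ unused elements of a fixed multiset with average $\voptavg$, so $\Ex[\vopt\mid\mathcal{F}_{t-1}]$ differs from $\voptavg$ by a ``sampling without replacement vs.\ with replacement'' correction whose telescoped magnitude over $t$ is $O(\log T/T)$. After subtracting that small bias, $\sum_t\thetaV_t\cdot(\vopt-\voptavg)$ becomes a bounded-difference martingale in $\mathcal{F}_t$, and a coordinate-wise Azuma/Hoeffding estimate combined with a union bound over the $d$ coordinates (equivalently, a covering of the dual ball $\{\|\thetaV\|_*\le 1\}$) produces an expected deviation of order $\|\mathbf{1}_d\|\sqrt{s\log(d)/T}$: the factor $\|\mathbf{1}_d\|$ comes from passing between the ambient dual norm and the $\ell_\infty$ per-coordinate scale used in the concentration, and the $\log d$ is the union-bound/covering cost. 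Plugging this together with Corollary~\ref{cor:regOCO} for $\regOCO(T)/T$ yields the claimed bound, and everything outside this final concentration is pure algebra on the Fenchel identity plus the greedy optimality of $\vplay$.
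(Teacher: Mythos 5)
Your setup is exactly the paper's: Fenchel-dualize the distance, recognize the result as the OCO benchmark, apply the learner's regret bound, then exploit greedy optimality against the offline feasible sequence $(\vopt)$ and the fact that $\voptavg\in S$ kills the $h_S$ term. Up to the inequality
\[ \Ex[d(\vplayavg,S)] \le \frac{\regOCO(T)}{T} + \Ex\!\left[\frac{1}{T}\sum_t\thetaV_t\cdot(\vopt-\voptavg)\right] \]
you are line-for-line parallel with the paper's Theorem~\ref{th:onlyS} and Lemma~\ref{lem:gtRP}.

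The gap is in the final concentration step, and it is a real one: you have the roles of the ``martingale'' and the ``bias'' reversed. Decompose
\[ \thetaV_t\cdot(\vopt-\voptavg) = \underbrace{\thetaV_t\cdot\bigl(\vopt-\Ex[\vopt\mid\mathcal{F}_{t-1}]\bigr)}_{\text{martingale increment}}+\underbrace{\thetaV_t\cdot\bigl(\Ex[\vopt\mid\mathcal{F}_{t-1}]-\voptavg\bigr)}_{\text{conditional bias}}. \]
Since $\thetaV_t$ is $\mathcal{F}_{t-1}$-measurable, the first term has conditional expectation zero, so by the tower property the martingale part contributes nothing to the \emph{expected} regret; no Azuma/Hoeffding argument is needed or helpful for the in-expectation bound. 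The entire $\|\mathbf{1}_d\|\sqrt{s\log(d)/T}$ term must come from the second, ``bias,'' piece --- which you dismiss as $O(\log T/T)$. The point is subtle precisely because the unconditional expectation of $\Ex[\vopt\mid\mathcal{F}_{t-1}]-\voptavg$ \emph{is} zero; the difficulty is that $\thetaV_t$ is also $\mathcal{F}_{t-1}$-measurable and the OCO adversary can align it with this fluctuation. Therefore one must pass to the worst case via H\"older, $\thetaV_t\cdot(\Ex[\vopt\mid\mathcal{F}_{t-1}]-\voptavg)\le\|\Ex[\vopt\mid\mathcal{F}_{t-1}]-\voptavg\|$, and then bound $\Ex\bigl[\sum_t\|\Ex[\vopt\mid\mathcal{F}_{t-1}]-\voptavg\|\bigr]$. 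The paper does this by observing that, under the RP model, $\Ex[\vopt\mid\mathcal{F}_{t-1}]$ is the mean of the $T-t+1$ as-yet-unseen items, which after reversing the permutation is an average of $T-t+1$ draws without replacement, and Hoeffding-for-sampling-without-replacement gives $\Ex\|\cdot\|=O(\|\mathbf{1}_d\|\sqrt{s\log(d)/(T-t+1)})$; summing over $t$ yields $O(\|\mathbf{1}_d\|\sqrt{sT\log(d)})$. If your version of the bias really were $O(\log T/T)$ and the martingale had zero mean, you would have ``proved'' an expected regret of $O(\regOCO(T)/T+\log T/T)$, which is too good --- it would make the RP and IID models indistinguishable, contradicting the paper's own Remark that $\Ex[\sum_t\|\Ex[\vopt\mid\mathcal{F}_{t-1}]-\Ex[\vopt]\|]$ is precisely the quantity separating the two models. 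Fix the proof by replacing your ``small bias plus Azuma'' step with the H\"older bound on the conditional-bias term followed by the sampling-without-replacement concentration.
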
 
Here $\regOCO(T)$ denotes the regret for \OCO~
with functions $g_t(\theta)$ and domain $W$, as defined in Section \ref{sec:oco}. 
And, $s\le 1$ is the coordinate-wise largest value a vector in $S$ can take. This parameter can be used to obtain tighter problem-specific bounds. 

\begin{proof}
From Fenchel duality, and by \OCO~guarantees,
\begin{eqnarray*}
d( \vplayavg, S) & = & \max_{||\thetaV||_*\le 1} \thetaV\cdot \vplayavg - h_S(\thetaV) \\
& = & \max_{||\thetaV||_*\le 1} \frac{1}{T} \sum_t g_t(\thetaV) \\
& \le & \frac{1}{T} \sum_t g_t(\thetaV_t) + \frac{1}{T}\regOCO(T).
\end{eqnarray*}
In Lemma \ref{lem:gtRP}, we upper bound $\Ex[\frac{1}{T}\sum_t g_t(\theta_t)]$  to obtain the statement of the theorem. 
\end{proof}

\begin{lemma} 
\label{lem:gtRP}
$\Ex[\sum_t g_t(\theta_t)]\le O(||{\bf 1}_d||\sqrt{s\log(d)T})$, where $s=\max_{\cv\in S} \max_j \cvS_{j} \le 1$, and $||\cdot||$ is the norm used in the distance function.
\end{lemma}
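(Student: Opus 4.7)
The plan is to use feasibility to produce an explicit ``primal'' witness in $S$, and then reduce to a deviation bound for sampling without replacement. Since the instance is always feasible, there exist $\cv^*_s \in X_s$, $s=1,\dots,T$, with $\bar\cv^* := \tfrac{1}{T}\sum_s \cv^*_s \in S$. Because $\vplay = \arg\min_{\cv\in A_t}\thetaV_t\cdot \cv$ and $A_t = X_{\pi(t)}$ contains $\cv^*_{\pi(t)}$, we have $\thetaV_t\cdot \vplay \le \thetaV_t\cdot \cv^*_{\pi(t)}$; and since $\bar\cv^*\in S$, $h_S(\thetaV_t) \ge \thetaV_t\cdot \bar\cv^*$. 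Combining,
\[g_t(\thetaV_t) \;\le\; \thetaV_t \cdot (\cv^*_{\pi(t)} - \bar\cv^*).\]

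Second, I would take expectations and kill the martingale part using the \RP\ structure. Let $\mathcal{F}_{t-1}=\sigma(A_1,\dots,A_{t-1})$, so $\thetaV_t$ is $\mathcal{F}_{t-1}$-measurable. Conditional on $\mathcal{F}_{t-1}$, the index $\pi(t)$ is uniform over the unseen set $R_{t-1}=[T]\setminus\{\pi(1),\dots,\pi(t-1)\}$, so $\Ex[\cv^*_{\pi(t)}\mid \mathcal{F}_{t-1}]=\hat\cv_t^*:=\tfrac{1}{|R_{t-1}|}\sum_{s\in R_{t-1}}\cv^*_s$. By the tower property the martingale term drops out, and H\"older's inequality together with $\|\thetaV_t\|_*\le 1$ gives
\[\Ex\left[\sum_t g_t(\thetaV_t)\right] \;\le\; \Ex\left[\sum_{t=1}^T \|\hat\cv_t^* - \bar\cv^*\|\right].\]

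Third, I would bound the remaining deviation by a variance-aware concentration for sampling without replacement. Here $\hat\cv_t^*$ is the mean of $T-t+1$ samples drawn without replacement from the population $\{\cv_1^*,\dots,\cv_T^*\}\subseteq[0,1]^d$, and because $\bar\cv^*\in S$, every coordinate of the population mean is at most $s$, so the coordinatewise variance is at most $s$. A Bernstein/Serfling-type inequality combined with a union bound over the $d$ coordinates yields, in expectation,
\[\|\hat\cv_t^* - \bar\cv^*\|_\infty \;=\; O\!\left(\sqrt{\tfrac{s\log d}{T-t+1}} + \tfrac{\log d}{T-t+1}\right),\]
and passing from $\|\cdot\|_\infty$ to the general norm $\|\cdot\|$ costs a factor of at most $\|\mathbf{1}_d\|$. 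Summing with $\sum_{t=1}^T (T-t+1)^{-1/2} = O(\sqrt{T})$ (the second term contributing lower order) yields the claimed $O(\|\mathbf{1}_d\|\sqrt{sT\log d})$ bound.

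The main obstacle is the concentration step. A direct Hoeffding--Serfling bound would replace $\sqrt{s}$ with $1$, which is wasteful whenever $s\ll 1$ (notably in the online packing case where $s = B/T$); recovering the sharp $\sqrt{s}$ factor requires a variance-aware (Bernstein-style) inequality for sampling without replacement, together with the feasibility-driven observation that the population variance is bounded by $s$. The norm-conversion step through $\|\mathbf{1}_d\|$ is routine but must be tracked carefully in order to recover the correct constants uniformly across the Euclidean and $L_\infty$ cases.
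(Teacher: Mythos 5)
Your proposal is correct and follows essentially the same line of argument as the paper's proof: bound $g_t(\thetaV_t) \le \thetaV_t\cdot(\cv^*_{\pi(t)}-\bar\cv^*)$ by combining the argmin property of $\vplay$ with $h_S(\thetaV_t)\ge \thetaV_t\cdot \bar\cv^*$; pass to conditional expectations and H\"older to reduce to $\sum_t \|\Ex[\cv^*_t\mid\mathcal F_{t-1}]-\bar\cv^*\|$; recognize that quantity as the deviation of a without-replacement sample mean from the population mean; and finish with a variance-aware (multiplicative) concentration bound for sampling without replacement that delivers the $\sqrt{s}$ factor from the coordinatewise bound $\omega_j\le s$. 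The paper reaches the same reduction via a slightly more roundabout route (reversing the permutation and reindexing to put the sum in the form $\sum_t\|{\bf w}_{t,\pi'}-\bar\cv^*\|$), but that is just bookkeeping; your direct use of the unseen set $R_{t-1}$ is the cleaner way to say the same thing, and your explicit retention of the additive $\tfrac{\log d}{T-t+1}$ Bernstein term (which the paper suppresses in its Corollary~\ref{cor:multiplicativeChernoff}) is if anything a small improvement in care, though as you note it is lower order under the regimes the paper considers.
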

\begin{proof}

Let ${\cal F}_{t-1}$ denote the observations and decisions until time $t-1$. Note that $\thetaV_t$ is completely determined by ${\cal F}_{t-1}$.
Let $\cv_{X_t}$ denote the option chosen to satisfy request $X_t$ by the offline optimal (feasible) solution, and let $\vopt=\cv_{A_t}$. Then, since $A_t=X_{s}$, for $s=1,\ldots, T$ with equal probability, we have that $\Ex[\cv^*_{t}] = \frac{1}{T} (\cv_{X_1} + \ldots + \cv_{X_T}) \in S$. Therefore, due to the manner in which $\vplay$ was chosen by the algorithm, we have that
\begin{eqnarray}
\Ex[g_t(\thetaV_t) | {\cal F}_{t-1}] &=& \Ex[\thetaV_t\cdot \vplay  - h_{S}(\thetaV_t) |{\cal F}_{t-1}]\nonumber\\
& \le & \Ex[\thetaV_t\cdot \cv^*_{t} - h_{S}(\thetaV_t) | {\cal F}_{t-1}] \nonumber \\
& = & \thetaV_t\cdot \Ex[\cv^*_{t}] - h_{S}(\thetaV_t) \nonumber\\
& & + \ \thetaV_t\cdot(\Ex[\cv^*_{t} | {\cal F}_{t-1}] - \Ex[\cv^*_{t}]) \nonumber
\end{eqnarray}
\revision{Now, by the Fenchel dual representation of distance, for any $\cv, \thetaV'$ such that $\|\thetaV'\|_*\le 1$, $d(\cv, S) = \max_{||\thetaV||_*\le 1} \thetaV\cdot \cv - h_S(\thetaV) \ge \thetaV'\cdot \cv - h_S(\thetaV')$. Using this observation along with $\Ex[\cv^*_t]\in S$, we obtain from above,}
\begin{eqnarray}
\label{eq:gt}
\Ex[g_t(\thetaV_t) | {\cal F}_{t-1}] & \le & d(\Ex[\cv^*_t], S) + \thetaV_t\cdot(\Ex[\cv^*_{t} | {\cal F}_{t-1}] - \Ex[\cv^*_{t}])\nonumber\\
& =  & 0+\thetaV_t\cdot(\Ex[\cv^*_{t} | {\cal F}_{t-1}] - \Ex[\cv^*_{t}]) \nonumber\\
& \le & \|\Ex[\cv^*_{t} | {\cal F}_{t-1}] - \Ex[\cv^*_{t}]\|,
\end{eqnarray}
where the last inequality used the condition $\|\thetaV_t\|_*\le 1$.

Note that under independence assumption (\IID~model), we would have $\Ex[\cv^*_{t} | {\cal F}_{t-1}] = \Ex[\cv^*_{t}]$, so that the above inequality would suffice to give the required bound. However, in random permutation (\RP) model, the observations till time $t-1$ restrict the set of possible permutations.
Conditional on realization $A_1=X_{\pi(1)}, \ldots, A_{t-1}=X_{\pi(t-1)}$ until time $t-1$, for a given ordering $\pi$, we have that $A_t$ is one of the {\em remaining sets} with equal probability. 
So, $\Ex[\cv^*_{t} | {\cal F}_{t-1}] = \frac{1}{T-t+1} (\cv_{X_{\pi(t)}} + \ldots + \cv_{X_{\pi(T)}})$, for any ordering $\pi$ that agrees with ${\cal F}_{t-1}$ on the first $t-1$ indices. 

Next, we bound the gap $\|\Ex[\cv^*_{t} | {\cal F}_{t-1}]-\Ex[\cv^*_t]\|$ under random permutation assumption. 
For any given ordering $\pi$, define ${\bf w}_{t,\pi} =\frac{\cv_{X_{\pi(1)}} + \ldots + \cv_{X_{\pi(t)}}}{t}$. 
Also, for given ordering $\pi$, define $\pi'$ as the reverse ordering. 
Then, $\Ex[\cv^*_t | {\cal F}_{t-1}] = {\bf w}_{T-t+1,\pi'}$, for any ordering $\pi$ that agrees with ${\cal F}_{t-1}$ on the first $t-1$ indices. 
Now, the input ordering $\pi$ observed by the algorithm agrees with all the filtrations ${\cal F}_1, \ldots, {\cal F}_{T-1}$, and therefore taking $\pi'$ as the reverse of this ordering, we have that 
\begin{eqnarray*}
\sum_{t=1}^T \|\Ex[\cv^*_{t} | {\cal F}_{t-1}]-\Ex[\cv^*_t]\| & = & \sum_{t=1}^T \|{\bf w}_{T-t+1,\pi'} - \Ex[\cv^*_t]\|\\
& = &\sum_{t=1}^T \|{\bf w}_{t,\pi'} - \Ex[\cv^*_t]\|
\end{eqnarray*}
Due to the random permutation assumption, the input ordering $\pi$, and hence the reverse ordering $\pi'$ in above, is a uniformly random permutation. Also, taking expectation over uniformly random permutations $\sigma$, $\Ex[{\bf w}_{t,\sigma}] = \frac{(\cv_{X_1} + \ldots + \cv_{X_T})}{T} = \Ex[\cv_t^*]$. 
And, therefore,
\begin{eqnarray}
\label{eq:highProbW}
\sum_{t=1}^T \left\|\Ex\left[\cv^*_{t}  \condition {\cal F}_{t-1}\right] - \Ex[\cv^*_t]\right\|  & = & \sum_{t=1}^T \left\|{\bf w}_{t,\pi} -\Ex[{\bf w}_{t,\sigma}]\right\|\nonumber\\
\end{eqnarray}
where $\pi$ is a uniformly random permutation.
Taking outer expectations, and using \eqref{eq:gt}, this implies,
\begin{eqnarray*}
\Ex[\sum_t g_t(\thetaV_t)] & \le & \Ex\left[\sum_t \left\|\Ex\left[\cv^*_{t}  \condition {\cal F}_{t-1} \right] - \Ex[\cv^*_t]\right\|\right] \\
& = & \Ex\left[\sum_t \|{\bf w}_{t,\pi} -\Ex[{\bf w}_{t,\sigma}]\|\right].
\end{eqnarray*}

\addedShipra{Observe that for uniformly random permutation $\pi$, ${\bf w}_{t,\pi}$ can be viewed as the average of $t$ vectors sampled uniformly 
{\em without replacement} from the ground set $\{\cv_{X_{1}}, \ldots, \cv_{X_{T}}\}$ of $T$ vectors. We use Chernoff-Hoeffding type concentration bounds for sampling without replacement (refer to Appendix \ref{app:onlyS} for details), to obtain,
\begin{equation}
\label{eq:swr}
\Ex[||{\bf w}_{t,\pi} - \Ex[ {\bf w}_{t,\sigma}]||] \le O(||{\bf 1}_d|| \sqrt{\frac{ s\log(d)}{t}}).
\end{equation}
The lemma statement then follows by summing up these bounds over all $t$. 
}
\end{proof}

\begin{remark}{\em [\RP~vs. \IID]}
For the IID model,  since $\Ex[\cv^*_{t} | {\cal F}_{t-1}] = \Ex[\cv^*_{t}]$, we would get $\sum_t \Ex[\g_t(\thetaV_t)] \leq 0$ directly from Equation \eqref{eq:gt}. Thus, the quantity $\Ex[\sum_t \|\Ex[\cv^*_{t} | {\cal F}_{t-1}] - \Ex[\cv^*_{t}]\|] \le  O(\|{\bf 1}_d\| \sqrt{sT\log(d)})$ characterizes the gap between \IID~and \RP~models. 
\end{remark}
\begin{remark}\label{rem:highProb}{\em [High probability bounds]}
The above analysis can be extended to bound the sum  
 of {\em conditional expectations} $\sum_t \Ex[\g_t(\thetaV_t) |{\cal F}_{t-1}] \le \sum_t \|{\bf w}_{t,\pi} - \Ex[w_{t,\sigma}]\|$ 
by $O(\|{\bf 1}_d\| \sqrt{T\log(dT/\rho)})$ with high probability $1-\rho$.  As a result, we obtain a high probability regret bound of $O(\|{\bf 1}_d\|\sqrt{\frac{\log(Td)}{T}})$. Details are in Appendix \ref{app:onlyS}. For the \IID~model, this sum of conditional expectations is bounded by $0$, so the resulting high probability bounds are slightly stronger, with no extra $\sqrt{\log(T)}$ factor. 
\end{remark}


\renewcommand{\dist}{{}}
\section{Online stochastic convex programming}
\label{sec:cp}
In this section, we extend the algorithm from previous section to the general online stochastic Convex Programming (\CP) problem, as defined in Section \ref{sec:prelims}. Recall that the aim here is to maximize $f( \vplayavg)$ while ensuring $ \vplayavg \in S$. 

A direct way to extend the algorithm from the previous section would be to reduce the convex program to the feasibilty problem with constraint set $S'=\{\cv: f(\cv)\ge \OPT, \cv \in S\}$. However, this requires the knowledge of $\OPT$. If $\OPT$ is estimated, the errors in the estimation of $\OPT$ at all time steps $t$ would add up to the regret, thus this approach would tolerate very small $\tilde{O}(\frac{1}{\sqrt{t}})$ per step estimation errors. In this section, we propose an alternate approach of combining objective value and distance from constraints using a parameter $Z$, which will capture the tradeoff between the two quantities.
We may still need to estimate this parameter $Z$, however, $Z$ will appear only in the second order regret terms, so that a constant factor approximation of $Z$ will suffice to obtain optimal order of regret bounds. This makes the estimation task relatively easy and  enable us to get better problem specific bounds. 
As a specific example, for the online packing problem, we can use $Z=\frac{\OPT}{(B/T)}$  so this approach requires only a constant factor approximation of $\OPT$ and  the resulting algorithm obtains the optimal competitive ratio. 
(See \prettyref{sec:packing} for more details.)


To illustrate the main ideas in our algorithm, let us start with the following assumption. 
\begin{assumption}
\label{assum:Z}
Let $\OPT^{\delta}$ denote the optimal value 
of the offline problem that maximizes $f(\frac{1}{T} \sum_t \cv_t)$ with feasibility constraint relaxed to $ d(\frac{1}{T}\sum_t \cv_t, S) \le \delta$. We are given a $Z \ge 0$ such that that for all $\delta \ge 0$, 
\begin{equation}
\label{eq:Zprop}
\OPT^{\delta}\le \OPT + Z \delta.
\end{equation}
\end{assumption}
In fact, such a $Z$ always exists, as shown by the following lemma.
\begin{lemma}
\label{lem:Z}
 $\OPT^\delta$ is a non-decreasing concave function of the constraint violation $\delta$, and its gradient at $\delta=0$ is the minimum value of $Z$ that satisfies the property \eqref{eq:Zprop}. This gradient is also equal to the value of the optimal dual variable corresponding to the distance constraint. 
\end{lemma}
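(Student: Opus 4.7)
The lemma has three assertions: (i) $\OPT^\delta$ is non-decreasing and concave in $\delta$; (ii) the minimum $Z$ satisfying \eqref{eq:Zprop} equals the gradient of $\OPT^\delta$ at $\delta = 0$; (iii) this gradient equals the optimal dual multiplier corresponding to the distance constraint. Monotonicity in (i) is immediate since enlarging $\delta$ only relaxes the feasibility constraint. For concavity, the plan is to fix $\delta_1, \delta_2 \ge 0$ and $\alpha \in [0,1]$, take offline optimizers $(\cv_t^1), (\cv_t^2)$ for the two relaxed problems with averages $\bar{\cv}^1, \bar{\cv}^2$, and form the per-step mixture $\cv_t^\alpha = \alpha \cv_t^1 + (1-\alpha)\cv_t^2$, whose average is $\bar{\cv}^\alpha = \alpha \bar{\cv}^1 + (1-\alpha) \bar{\cv}^2$. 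Convexity of $d(\cdot, S)$ then yields $d(\bar{\cv}^\alpha, S) \le \alpha \delta_1 + (1-\alpha)\delta_2$, certifying feasibility for $\delta^\alpha := \alpha \delta_1 + (1-\alpha)\delta_2$, and concavity of $f$ gives $\OPT^{\delta^\alpha} \ge f(\bar{\cv}^\alpha) \ge \alpha \OPT^{\delta_1} + (1-\alpha)\OPT^{\delta_2}$.

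For (ii), since $\OPT^\delta$ is concave and non-decreasing on $[0,\infty)$, the difference quotient $\delta \mapsto (\OPT^\delta - \OPT)/\delta$ is non-increasing on $(0,\infty)$ and admits a limit $g \ge 0$ as $\delta \to 0^+$ equal to the right derivative of $\OPT^\delta$ at $0$. The concavity inequality $\OPT^\delta - \OPT \le g\,\delta$ then holds for all $\delta \ge 0$, so $Z = g$ satisfies \eqref{eq:Zprop}; conversely, any valid $Z$ uniformly upper-bounds the difference quotient and hence $Z \ge g$ on letting $\delta \to 0^+$. This identifies $g$ as the minimum valid $Z$.

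For (iii), I would invoke Lagrangian sensitivity for the offline convex program. With $L(\bar{\cv}, Z) = f(\bar{\cv}) - Z\, d(\bar{\cv}, S)$ and dual $D(Z) = \max_{\bar{\cv}} L(\bar{\cv}, Z)$, weak duality yields $\OPT^\delta \le D(Z) + Z\,\delta$ for every $Z \ge 0$; strong duality (which applies by Slater's condition using the always-feasible assumption, i.e., an interior point with $d(\bar{\cv},S) = 0 < \delta$ for $\delta > 0$) produces an optimal dual multiplier $Z^*$ at $\delta = 0$ with $\OPT = D(Z^*)$. Substituting gives $\OPT^\delta \le \OPT + Z^*\,\delta$, and combining with (ii) yields $g \le Z^*$; the reverse $g \ge Z^*$ is the envelope/value-function sensitivity theorem, identifying the supergradient of the value function with the optimal dual variable. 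The main technical subtlety is that $A_t$ need not be convex, which I would handle by reading $\OPT^\delta$ as the value of the natural convex relaxation $\cv_t \in \mathrm{conv}(A_t)$ (equivalently, allowing randomized offline choices); since $f$ is concave this upper-bounds the deterministic offline optimum, which is the bound actually used in the regret analysis.
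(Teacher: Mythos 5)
Your argument is sound and reaches all three claims, but it takes a genuinely different route from the paper. The paper's proof (Appendix \ref{app:lem:Z}) establishes everything at once by explicitly dualizing the offline program: working, as you do, with the convexified choices $\cv_t\in\conv(X_t)$, it introduces a Lagrange multiplier $\lambda\ge 0$ for the constraint $d(\tfrac1T\sum_t\cv_t,S)\le\delta$ and Fenchel conjugates of $f$ and of $d(\cdot,S)$ (dual variables $\phiV$ with $\|\phiV\|_*\le L$ and $\thetaV$ with $\|\thetaV\|_*\le 1$), arriving at the representation \eqref{eq:OPTexpr} of $\OPT^\delta$ as a minimum over $(\lambda,\phiV,\thetaV)$ of functions that are affine and non-decreasing in $\delta$ with slope $\lambda$. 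Monotonicity, concavity, and the identification of the slope with the optimal dual variable $\lambda^*$ are then read off directly from that expression, and the ``minimum $Z$'' claim follows from concavity just as in your step (ii). You instead prove concavity by a primal mixture argument (per-step convex combinations, convexity of the distance, concavity of $f$), obtain the minimum-$Z$ characterization from elementary properties of concave value functions, and invoke duality only for the last claim, via generic Lagrangian sensitivity. Your route is more elementary and self-contained; the paper's route buys the explicit dual formula, which is not just a proof device but is reused later (e.g., in the estimation of $Z$ in Appendix \ref{app:estZ}, Lemmas \ref{lem:2} and \ref{lem:3}).

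One step is loose: Slater's condition as you invoke it (a feasible point with $d(\bar{\cv},S)=0<\delta$) certifies strong duality for the relaxed problems with $\delta>0$, but not for the $\delta=0$ problem, where no strictly feasible point exists since $d\ge 0$; so the existence of a \emph{finite} optimal multiplier $Z^*$ at $\delta=0$ (equivalently, finiteness of the right derivative of the concave value function at $0$) does not follow from the argument you cite and in fact can fail without further assumptions, even for Lipschitz $f$. This does not put you below the paper's level of rigor --- the paper likewise swaps $\min$ and $\max$ without justification and implicitly assumes $\lambda^*$ exists and is finite --- but it should be stated as an assumption rather than attributed to Slater's condition.
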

The proof of this lemma is provided in Appendix \ref{app:lem:Z}. This fact is known for linear programs. 

Below, we present an algorithm (Algorithm \ref{algo:cp}) for online stochastic \CP~ assuming we are given parameter $Z$ as in Assumption \ref{assum:Z}.  This algorithm is based on the same basic ideas as the algorithm for the feasibility problem in the previous section. Here, we linearize both objective and constraints using Fenchel duality, and estimate the corresponding dual variables using online learning as blackbox. And, we use parameter $Z$ to combine objective with constraints. The resulting algorithm has very efficient per-step updates and does not require solving a (sample) \CP~in any step, and we prove that it achieves the regret bound stated in Theorem \ref{th:cp}. 

The regret of this algorithm (as stated in Theorem \ref{th:cp}) scales with the value of $Z$, and it is desirable to use as small a value of $Z$ as possible. 
If such a $Z$ is not known, in Appendix \ref{app:estZ} we demonstrate how we can approximate the optimal value of $Z$ up to a constant factor by solving a logarithmic number of sample \CP s overall. 




\begin{algorithm}[Online convex programming]
\label{algo:cp}
  \begin{algorithmic}
	\STATE
 Initialize $\thetaV_{1}, \phiV_1$. 
\FORALL{$t=1,..., T$} 
		\STATE Choose option 
		\begin{center} $\vplay=\arg \max_{\cv \in A_t}   -\phiV_t \cdot \cv  - 2(Z+L) \thetaV_t \cdot \cv$.\end{center}
		\STATE Choose $\thetaV_{t+1}$ by doing an \OCO~update for $g_t(\thetaV) =\thetaV \cdot\vplay - h_{S}(\thetaV)$ over domain $W=\{\|\thetaV\|_*\le 1\}$.
		\STATE Choose $\phiV_{t+1}$ by doing an \OCO~update for 
		$\psi_t(\phiV) = \phiV \cdot\vplay-(-f)^*(\phiV)$ over domain $U=\{\|\phiV\|_*\le L\}$.
\ENDFOR

  	\end{algorithmic}
\end{algorithm}
A complete proof of Theorem \ref{th:cp}, along with a more detailed theorem statement, is provided in Appendix \ref{app:CP}.
Here, we provide the proof for the simpler case of {\em linear objective} discussed in Section \ref{sec:prelims}. In this setting,  each option in $A_t$ is associated with a reward $r$ in addition to the vector $\cv$. And, at every time step $t$, the player chooses $(\rplay, \vplay)$, in order to maximize $\frac{1}{T}\sum_t \rplay$ while ensuring $ \vplayavg \in S$. ( We will use $\rplayavg $ to denote $\frac{1}{T}\sum_t \rplay$.) 
The proof for this special case will illustrate the main ideas required for proving regret bounds for the online \CP~problems with `objective plus constraints', over and above the techniques used in the previous section for the case of `only constraints'. 

For this special case, Algorithm \ref{algo:cp} reduces to the following:

\begin{algorithm}[Linear objectives]
\label{algo:linear}
  \begin{algorithmic}
	\STATE  
	\STATE Initialize $\thetaV_{1}$. 
\FORALL{$t=1,..., T$} 
		\STATE Choose option\\
		\begin{center}$(\rplay, \vplay) =\arg \max_{(r,\cv) \in A_t}   r - 2Z \thetaV_t \cdot \cv.$\end{center}
		\STATE Choose $\thetaV_{t+1}$ by doing \OCO~update with $g_t(\thetaV) =\thetaV \cdot \vplay - h_{S}(\thetaV)$, and domain $W=\{\|\thetaV\|_* \le 1\}$.
\ENDFOR

  	\end{algorithmic}
\end{algorithm}
\begin{theorem}
\label{th:rPlusS}
Given $Z$ that satisfies Assumption \ref{assum:Z}, Algorithm \ref{algo:linear} achieves the following regret bounds for online stochastic \CP~with linear objective, in \RP~model:
\begin{eqnarray*}
\Ex[\areg_1(T)] & \le & \frac{Z}{T}\cdot O(\regOCO(T) + {\cal Q}(T)) \text{ and }\\
\Ex[\areg_2(T)] & \le & \frac{1}{T} \cdot O(\regOCO(T) + {\cal Q}(T)).
\end{eqnarray*}
Here, ${\cal Q}(T)=O(\oneNorm \sqrt{sT\log(d)})$, $s=\max_{\cv\in S} \max_j v_j$, and $\regOCO(T)$ denotes the \OCO~regret for $g_t(\cdot)$ over domain $W$. 
\end{theorem}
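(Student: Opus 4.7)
\medskip

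\noindent\textbf{Proof proposal.} The plan is to mimic the structure of the feasibility proof in Theorem~\ref{th:onlyS}, but now linking the $g_t$-terms simultaneously to both the constraint violation and the reward shortfall through the parameter $Z$. As before, combining Fenchel duality for $d(\cdot,S)$ with the \OCO{} guarantee for the updates on $\thetaV_t$ gives
\[
T\cdot d(\vplayavg,S)\;=\;\max_{\|\thetaV\|_*\le 1}\sum_t g_t(\thetaV)\;\le\;\sum_t g_t(\thetaV_t)+\regOCO(T).
\]
So the main quantity to control is $\sum_t g_t(\thetaV_t)=\sum_t\bigl(\thetaV_t\cdot\vplay-h_S(\thetaV_t)\bigr)$.

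\medskip

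The key new ingredient is the algorithm's greedy choice. For every alternative $(\ropt,\vopt)\in A_t$, the rule $(\rplay,\vplay)=\arg\max_{(r,\cv)\in A_t}\{r-2Z\,\thetaV_t\cdot\cv\}$ yields $\rplay-2Z\,\thetaV_t\cdot\vplay\ge\ropt-2Z\,\thetaV_t\cdot\vopt$. I will apply this with $\vopt,\ropt$ being the vector and reward that the offline optimum assigns to the request $A_t=X_{\pi(t)}$; then $\sum_t\ropt=T\cdot\OPT$ pathwise, and $\Ex[\vopt]=\tfrac{1}{T}\sum_s\cv_{X_s}\in S$. Rewriting $\thetaV_t\cdot\vplay=g_t(\thetaV_t)+h_S(\thetaV_t)$ and summing, I obtain
\[
\sum_t \rplay-T\OPT\;\ge\;2Z\sum_t g_t(\thetaV_t)\;-\;2Z\sum_t\bigl(\thetaV_t\cdot\vopt-h_S(\thetaV_t)\bigr).
\]
The second sum on the right is exactly the expression bounded in Lemma~\ref{lem:gtRP}: using $\Ex[\vopt]\in S$ to get $h_S(\thetaV_t)\ge\thetaV_t\cdot\Ex[\vopt]$ and then invoking the reverse-permutation / sampling-without-replacement argument from that lemma gives $\Ex[\sum_t(\thetaV_t\cdot\vopt-h_S(\thetaV_t))]\le\diff(T)$.

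\medskip

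Combining with the \OCO{}+Fenchel inequality above and taking expectations, I arrive at the pivotal relation
\[
\Ex\!\Bigl[\sum_t\rplay\Bigr]-T\OPT\;\ge\;2ZT\,\Ex[d(\vplayavg,S)]\;-\;2Z\bigl(\regOCO(T)+\diff(T)\bigr).
\]
The last step is to decouple the two regrets using Assumption~\ref{assum:Z}. Since the algorithm's output $(\rplayavg,\vplayavg)$ is a feasible realization of the offline problem relaxed to $\delta:=d(\vplayavg,S)$, we have $\rplayavg\le\OPT+Z\,d(\vplayavg,S)$ pathwise, so $\Ex[\sum_t\rplay]\le T\OPT+ZT\,\Ex[d(\vplayavg,S)]$. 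Substituting this upper bound into the previous display cancels $T\OPT$ and an $\Ex[d(\vplayavg,S)]$ term, yielding $\Ex[d(\vplayavg,S)]\le 2(\regOCO(T)+\diff(T))/T$, which is the stated constraint regret. Plugging this back in (or, more simply, using $\Ex[d(\vplayavg,S)]\ge 0$) gives $\Ex[\rplayavg]\ge\OPT-2Z(\regOCO(T)+\diff(T))/T$, which is the stated objective regret.

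\medskip

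\noindent\textbf{Main obstacle.} The conceptual step I expect to be trickiest is the coupling via $Z$: the algorithm's choice rule naturally produces a lower bound on $\sum_t\rplay$ in terms of $\sum_t g_t(\thetaV_t)$, while the \OCO{} analysis gives a lower bound on $\sum_t g_t(\thetaV_t)$ in terms of $T\,d(\vplayavg,S)$; these two inequalities point in ``opposite'' directions for the two regrets, and it is the property~\eqref{eq:Zprop} of $Z$ that lets us solve them simultaneously (the factor $2Z$ in the algorithm is chosen precisely so that after substituting the $Z$-relation, the constraint-violation coefficient has the right sign to isolate $\Ex[d(\vplayavg,S)]$). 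The \RP{}-vs-\IID{} handling is otherwise a direct reuse of Lemma~\ref{lem:gtRP}.
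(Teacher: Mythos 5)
Your proposal is correct and follows essentially the same route as the paper's proof: the greedy-choice inequality against the offline optimum's per-request choices, the Fenchel-duality-plus-\OCO{} bound relating $\sum_t g_t(\thetaV_t)$ to $T\,d(\vplayavg,S)$, the Lemma~\ref{lem:gtRP}-style sampling-without-replacement bound (the paper packages this as Lemma~\ref{lem:gtRPreward}), and finally Assumption~\ref{assum:Z} to decouple the two regrets. The only (harmless) cosmetic differences are that you exploit $\sum_t\ropt=T\OPT$ pathwise instead of passing through conditional expectations of $\ropt$, and you apply Assumption~\ref{assum:Z} pathwise to the realized solution rather than to the expected play via Jensen, as the paper does in \eqref{eq:Zapplication}.
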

\begin{proof}
Denote by $(\ropt, \vopt)$ the choice made by the offline optimal solution to satisfy request $A_t$. Then,
$$ \Ex[\ropt] = \OPT,\text{ and } \Ex[\vopt] \in S,$$
where expectation is over $A_t$ drawn uniformly at random  from $X_1, \ldots, X_T$.
 
Lemma \ref{lem:gtRPreward} upper bounds $\sum_t \Ex[ 2Z g_t(\thetaV_t) - \rplay + \ropt]$ by $2Z{\cal Q}(T)=2ZO(||{\bf 1}_d||\sqrt{s\log(d)T})$, using exactly the same line of argument as the proof of Lemma \ref{lem:gtRP}. Therefore, using $\Ex[\ropt] =  \OPT$, the expected average reward obtained by the algorithm can be lower bounded as
$$\Ex[ \rplayavg] \ge \OPT + \frac{2Z}{T}\sum_t \Ex[ g_t(\thetaV_t)] - \frac{2Z}{T}{\cal Q}(T).$$
As in the proof of Theorem \ref{th:onlyS}, using Fenchel duality and \OCO~guarantees, it follows that
$d( \vplayavg, S) \le \frac{1}{T} \sum_t g_t(\thetaV_t) + \frac{1}{T}\regOCO(T), \text{ which gives,}$
\begin{equation}
\label{eq:key}
\Ex[ \rplayavg] \ge \OPT +(2Z) \Ex[d(\vplayavg, S)]-\tfrac{2Z}{T} \regOCO(T) - \tfrac{2Z}{T}{\cal Q}(T)
\end{equation}
Now, we use Assumption \ref{assum:Z} 
to upper bound the reward obtained by the algorithm in terms of \OPT~and distance from set $S$. In particular, for $\delta:=\Ex[d(\vplayavg, S)]$, since $d(\Ex[\vplayavg], S) \le \Ex[d(\vplayavg, S)] = \delta$,
\begin{equation}
\label{eq:Zapplication}
\Ex[ \rplayavg] \le \OPT^\delta_\dist \le \OPT_\dist + Z \delta = \OPT+Z \cdot \Ex[d( \vplayavg, S)].
\end{equation}
Combining inequalities \prettyref{eq:key} and \prettyref{eq:Zapplication}, we obtain
$$ \Ex[d( \vplayavg, S)] \le \tfrac{2}{T} \regOCO(T) + \tfrac{2}{T}{\cal Q}(T),$$
and from \eqref{eq:key}, using the fact that $\Ex[d(\vplayavg, S)]\ge 0$, we get that 
$$\Ex[\rplayavg] \ge \OPT-\tfrac{2Z}{T}\cdot \left(\regOCO(T)+ {\cal Q}(T)\right).$$
This gives the theorem statement.

\comment{\ASI~benchmark does not work out in the above argument. Here are the details.
For \IID~and \ASI~model, denote by $(\ropt, \vopt)$  the option chosen to satisfy request $A_t$ by the {\em Pure-random} algorithm. Recall that that the {\em Pure-random algorithm} is a non-adaptive algorithm that uses the optimal solution of the expected instance for distribution $\dist_t$ to satisfy request $A_t$ ($\dist_1 = \cdots =\dist_t = \dist$ for \IID). By this definition, we have that 
$$ \Ex_{A_t \sim \dist_t}[\ropt] \ge \OPT_{\dist_t},\text{ and } \Ex_{A_t \sim \dist_t}[\vopt] \in S$$
Then, using Lemma \ref{lem:gtIIDreward}, and above arguments, we will obtain,
\begin{equation}
\Ex[\frac{1}{T}\sum_t \rplay] \ge  \frac{1}{T} \sum_t \OPT_{\dist_t} +(2Z) \Ex[d(\frac{1}{T}\sum_t \vplay, S)]-\frac{2Z}{T} \regOCO(T) - \frac{2Z}{T}{\cal Q}(T)
\end{equation}
But, for upper bound, we can only get
\begin{equation}
\Ex[ \frac{1}{T}\sum_t\rplay] \le \max_{\dist_t} (\OPT^\delta_{\dist_t}) \le \max_t \OPT_{\dist_t} + Z_{\dist_t} \delta \le \OPT_{max}+Z \cdot \Ex[d(\frac{1}{T}\sum_t \vplay, S)].
\end{equation}
}


\end{proof}
\begin{lemma}
\label{lem:gtRPreward}
 $\Ex[\sum_t  2Z g_t(\theta_t) - \rplay + \ropt] \le O(Z \oneNorm \sqrt{sT\log(d)}).$
\end{lemma}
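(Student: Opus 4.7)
The plan is to mimic the argument of Lemma \ref{lem:gtRP}, with one new step at the front that exploits the algorithm's choice rule for the combined objective. The only part of $g_t$ that involves the algorithm is the term $\thetaV_t\cdot\vplay$, so I will first use optimality of $(\rplay,\vplay)$ to trade $\vplay$ for $\vopt$ on a per-step basis, then reduce everything to the per-step ``bias'' $\|\Ex[\vopt\mid\mathcal{F}_{t-1}]-\Ex[\vopt]\|$ already handled in Lemma \ref{lem:gtRP}.

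Concretely, expanding $g_t(\thetaV_t)=\thetaV_t\cdot\vplay-h_S(\thetaV_t)$ gives
\[
2Zg_t(\thetaV_t)-\rplay \;=\; -\bigl(\rplay - 2Z\thetaV_t\cdot\vplay\bigr)-2Zh_S(\thetaV_t).
\]
Since Algorithm \ref{algo:linear} picks $(\rplay,\vplay)$ to maximize $r-2Z\thetaV_t\cdot\cv$ over $A_t$, and $(\ropt,\vopt)\in A_t$, we have $\rplay-2Z\thetaV_t\cdot\vplay\ge \ropt-2Z\thetaV_t\cdot\vopt$. Substituting gives the pointwise inequality
\[
2Zg_t(\thetaV_t)-\rplay+\ropt \;\le\; 2Z\bigl(\thetaV_t\cdot\vopt-h_S(\thetaV_t)\bigr).
\]

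Next I take the conditional expectation given $\mathcal{F}_{t-1}$. Because $\thetaV_t$ is $\mathcal{F}_{t-1}$-measurable, and then write $\Ex[\vopt\mid\mathcal{F}_{t-1}] = \Ex[\vopt] + \bigl(\Ex[\vopt\mid\mathcal{F}_{t-1}]-\Ex[\vopt]\bigr)$:
\[
\Ex\bigl[\thetaV_t\cdot\vopt-h_S(\thetaV_t)\,\big|\,\mathcal{F}_{t-1}\bigr] = \bigl(\thetaV_t\cdot\Ex[\vopt]-h_S(\thetaV_t)\bigr) + \thetaV_t\cdot\bigl(\Ex[\vopt\mid\mathcal{F}_{t-1}]-\Ex[\vopt]\bigr).
\]
Since $\Ex[\vopt]\in S$, we have $\thetaV_t\cdot\Ex[\vopt]\le h_S(\thetaV_t)$, killing the first term. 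Using H\"older's inequality and $\|\thetaV_t\|_*\le 1$ bounds the second term by $\|\Ex[\vopt\mid\mathcal{F}_{t-1}]-\Ex[\vopt]\|$. Thus
\[
\Ex\bigl[2Zg_t(\thetaV_t)-\rplay+\ropt\,\big|\,\mathcal{F}_{t-1}\bigr] \;\le\; 2Z\,\bigl\|\Ex[\vopt\mid\mathcal{F}_{t-1}]-\Ex[\vopt]\bigr\|.
\]

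Summing over $t$, taking outer expectations, and invoking the sampling-without-replacement concentration bound already established inside the proof of Lemma \ref{lem:gtRP} (equations around \eqref{eq:highProbW}--\eqref{eq:swr}, applied to $\vopt$ in place of $\cv^*_t$), yields
\[
\Ex\Bigl[\sum_t 2Zg_t(\thetaV_t)-\rplay+\ropt\Bigr] \le 2Z\cdot O\bigl(\oneNorm\sqrt{s\log(d)\,T}\bigr),
\]
which is the desired bound. I do not expect any real obstacle here: the new ingredient (Step 1) is a one-line consequence of the algorithm's arg\,max rule, and once we reach the residual term $\|\Ex[\vopt\mid\mathcal{F}_{t-1}]-\Ex[\vopt]\|$ the random-permutation argument is identical to that in Lemma \ref{lem:gtRP}.
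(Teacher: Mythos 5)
Your proposal is correct and follows exactly the argument the paper intends: the paper itself states that Lemma~\ref{lem:gtRPreward} ``follows exactly the same line of argument as the proof of Lemma~\ref{lem:gtRP},'' and your extra front-end step---using the $\arg\max$ rule of Algorithm~\ref{algo:linear} to get the pointwise bound $2Zg_t(\thetaV_t)-\rplay+\ropt \le 2Z(\thetaV_t\cdot\vopt - h_S(\thetaV_t))$, after which the $\ropt$ term cancels and only the vector bias $\|\Ex[\vopt\mid\mathcal{F}_{t-1}]-\Ex[\vopt]\|$ survives---is precisely the reduction that makes this so. The remaining steps (killing $\thetaV_t\cdot\Ex[\vopt]-h_S(\thetaV_t)$ via $\Ex[\vopt]\in S$, H\"older with $\|\thetaV_t\|_*\le1$, and the sampling-without-replacement bound) are verbatim from Lemma~\ref{lem:gtRP}.
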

The proof of the above lemma follows exactly the same line of argument as the proof of Lemma \ref{lem:gtRP}. We omit it for brevity.

\newcommand{\ca}{c_1}
\newcommand{\cb}{c_2}
\newcommand{\cc}{c_3}
\newcommand{\cd}{c_4}
\newcommand{\ce}{c_5}
\newcommand{\opteps}{\OPTest}
\newcommand{\ej}{{\bf e}_j}
\section{Online stochastic packing }\label{sec:packing}
Recall that the {\em online stochastic packing}  problem is a special case of the online stochastic CP with linear objectives, with $S=\{\y: \y\le \frac{B}{T} {\bf 1}\}$. However, the performance of an  algorithm for online stochastic packing is typically measured by {\em competitive ratio}, which is the ratio of  total expected reward obtained by the online algorithm to the optimal solution or benchmark. 
 The benchmarks in online packing are defined as sum of rewards, where as we defined $\OPT$ as the average reward. Therefore, in our notation, the competitive ratio for the online packing problem is given by
 $\frac{\Ex[\sum_t\rplay]}{T\OPT} =  \frac{\Ex[\frac{1}{T}\sum_t\rplay]}{\OPT} .$
\comment{, at every time step $t$, we receive a request $A_t$, each option in $A_t$ corresponds to a reward and a $d$-dimensional {\em cost vector} $(r,\cv)$. The online algorithm needs to pick an option $(r,\cv) \in A_t$ at every time $t$. The aim is to maximize the total sum of rewards while ensuring that the total cost satisfies budget constraints, i.e. maximize $\sum_t \rplay$ while ensuring $\sum_t \vplay \le B {\bf 1}$. 
(We assume that  $(0,\mathbf{0})$ is always an option, i.e. the algorithm can always {\em ignore} a request.) 
This can be cast as the online stochastic CP with linear objective, with appropriate scaling: maximize the average of rewards $\frac{1}{T}\sum_t \rplay$, while ensuring $\frac{1}{T} \sum_t \vplay \in S=\{\y: \y\le \frac{B}{T} {\bf 1}\}$. Evidently, for fixed $T$, the two problems are equivalent. 

Due to this equivalence we can use Algorithm \ref{algo:linear}  for this problem. 
Then, Theorem \ref{th:rPlusS} will give us bounds on the average regret in time $T$. However, the performance of an  algorithm for online stochastic packing is typically measured by the {\em competitive ratio}, which is the ratio of  total expected reward obtained by the online algorithm to the optimal solution or benchmark. 
Note that these benchmarks in online packing are defined as sum of rewards, where as we defined $\OPT$ as average reward. Therefore, in our notation, the competitive ratio of online packing problem is given by
$\frac{\Ex[\sum_t\rplay]}{T\OPT} =  \frac{\Ex[\frac{1}{T}\sum_t\rplay]}{\OPT} .$}
The competitive ratio we obtain is $1-O(\epsilon)$, for any $\epsilon >0$ such that $\min\{B, T\OPT\} \ge \log(d)/\epsilon^2$. 

Another important difference is that for online packing the budget is not allowed to be violated at all, 
while  online CP allows a small violation of the constraint. 
A simple fix to make sure that budgets are not violated is to simply stop whenever a budget constraint is breached.\footnote{Note that 
	such a stopping rule does not make sense for a general $S$. If $S$ is downwards closed, then one can consider similar stopping rules in those cases as well.}
Another change we make to the algorithm is that we use a slightly different function in the \OCO~algorithm. We will use 
\[ g_t(\thetaV) = (\vplay - \frac{B}{T} \ones)   \cdot \thetaV\]
over the domain $||\thetaV||_1 \leq 1, \thetaV \geq \mathbf{0}.$ 
This domain is the convex hull of all the basis vectors and the origin, therefore we can use 
the multiplicative weight update algorithm as our \OCO~algorithm, which provides strong guarantees (refer to Lemma \ref{lem:regMW}, here $M=1$). 

Finally, as with the previous algorithms, we state the algorithm assuming  we are given the parameter $Z$. We then show how to estimate $Z$ to desired accuracy using only an $O(\epsilon^2 \log (1/\epsilon) )$ fraction of samples and solving an LP {\em only once} (in Lemma \ref{lem:Zestimate}), 
\revision{assuming that $\min\{B, T\OPT\} \ge \frac{\log(d)}{\epsilon^2}$.}

We now state the algorithm below for the online stochastic packing problem: 

\begin{algorithm}[Online Packing]
	\label{algo:packing}
	\begin{algorithmic}
		\STATE
		\STATE Initialize $\thetaV_{1}= \tfrac 1 {d+1} \ones$. 
		\STATE Initialize $Z$ such that $\tfrac{T\OPT}{B} \leq Z\le O(1)\tfrac{T	\OPT}{B}$. 
		\FORALL{$t=1,..., T$} 
		\STATE Choose option 
		\begin{center}$(\rplay, \vplay) =\arg \max_{(r,\cv) \in A_t} \left \{ r - Z \thetaV_t \cdot \cv\right \}.$\end{center}
		\STATE If for some $j = 1..d, \sum_{t'\leq t} \cv^\dagger_{t'}\cdot {\bf e}_j \geq B $ then EXIT.  
		\STATE Update  $\thetaV_{t+1}$ using multiplicative weight update: 
		\begin{center}$\forall\ j = 1..d, w_{t,j} = w_{t-1,j}(1+\epsilon)^{\vplay\cdot\ej - B/T}  $\end{center}
		and 
		\begin{center}$\forall\ j = 1..d, \thetaV_{t+1,j} = \frac{w_{t,j}}{1+\sum_{j'=1}^d w_{t,j'}},$\end{center}
	  
		\ENDFOR
	\end{algorithmic}
\end{algorithm}

Strictly speaking, if we use the first few requests as samples to estimate $Z$, then we need to ignore these requests, and bound 
the error due to this. However,  since the number of samples required is only $O(\epsilon^2\log(1/\epsilon))$ fraction of all requests, this error is quite small relative to the guarantee we obtain, which is a competitive ratio of $1-O(\epsilon)$.
We therefore ignore this error for the ease of presentation.

Let $\tau$ be the stopping time of the algorithm.  Denote by $(\ropt, \vopt)$ the choice made by the offline optimal solution to satisfy request $A_t$. We begin with the following lemma which is similar to Lemma \ref{lem:gtRP}. 
\begin{lemma}\label{lem:OP1}
\begin{eqnarray*}
\sum_{t=1}^{\tau} \Ex[\rplay | {\cal F}_{t-1}]	& \ge & \tau \OPT +Z \sum_{t=1}^{\tau} \thetaV_t \cdot \Ex[\vplay-{\bf 1}\frac{B}{T}|{\cal F}_{t-1}] \\
& & \ - \sum_{t=1}^\tau Q(t)
\end{eqnarray*}
where $Q(t)=Z||\Ex[\vopt] - \Ex[\vopt | {\cal F}_{t-1}]||+|\Ex[\ropt] - \Ex[\ropt | {\cal F}_{t-1}]|$. 
\end{lemma}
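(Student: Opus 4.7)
The plan is to reduce the lemma to a per-step deterministic inequality coming from the algorithm's greedy rule, then pass to conditional expectations and swap $\Ex[\ropt\mid{\cal F}_{t-1}]$ and $\Ex[\vopt\mid{\cal F}_{t-1}]$ for their unconditional counterparts, picking up exactly the $Q(t)$ terms as the cost of swapping.

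First I would use the fact that the algorithm chooses $(\rplay,\vplay) = \arg\max_{(r,\cv)\in A_t}\{r - Z\,\thetaV_t\cdot\cv\}$ and that the offline optimum's per-step choice $(\ropt,\vopt)$ lies in $A_t$ (this is just the pointwise optimality of the maximizer). This gives, deterministically on $\{\tau\ge t\}$,
\[ \rplay - Z\,\thetaV_t\cdot\vplay \;\ge\; \ropt - Z\,\thetaV_t\cdot\vopt. \]
Since $\thetaV_t$ is ${\cal F}_{t-1}$-measurable, taking conditional expectation given ${\cal F}_{t-1}$ and rearranging yields
\[ \Ex[\rplay\mid{\cal F}_{t-1}] \ge \Ex[\ropt\mid{\cal F}_{t-1}] + Z\,\thetaV_t\cdot\bigl(\Ex[\vplay\mid{\cal F}_{t-1}] - \Ex[\vopt\mid{\cal F}_{t-1}]\bigr). \]

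Next I would introduce the unconditional means. Write $\Ex[\ropt\mid{\cal F}_{t-1}] = \Ex[\ropt] + (\Ex[\ropt\mid{\cal F}_{t-1}]-\Ex[\ropt])$ and similarly for $\vopt$, so that $\Ex[\ropt] = \OPT$ (by definition of the benchmark in the \RP~model) and, since the offline solution satisfies $\sum_t \vopt \le B\mathbf{1}$, we have $\Ex[\vopt] \le (B/T)\mathbf{1}$ componentwise. Because $\thetaV_t$ lies in the simplex $\{\thetaV\ge 0,\|\thetaV\|_1\le 1\}$, it is nonnegative, so $\thetaV_t\cdot\Ex[\vopt] \le \thetaV_t\cdot(B/T)\mathbf{1}$. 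Substituting and regrouping gives
\[ \Ex[\rplay\mid{\cal F}_{t-1}] \ge \OPT + Z\,\thetaV_t\cdot\Ex[\vplay-\tfrac{B}{T}\mathbf{1}\mid{\cal F}_{t-1}] - E_t, \]
where $E_t$ collects the two swap errors.

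It remains to bound $E_t$ by $Q(t)$. The reward-swap contribution is directly $|\Ex[\ropt\mid{\cal F}_{t-1}]-\Ex[\ropt]|$. For the vector-swap contribution, Hölder's inequality with $\|\thetaV_t\|_1 \le 1$ gives $|Z\,\thetaV_t\cdot(\Ex[\vopt\mid{\cal F}_{t-1}]-\Ex[\vopt])| \le Z\,\|\Ex[\vopt]-\Ex[\vopt\mid{\cal F}_{t-1}]\|_\infty$, matching the $\|\cdot\|$ in the definition of $Q(t)$ (which should be read as the dual of the $L_1$ norm used by the multiplicative-weight domain). Summing the per-step inequalities for $t=1,\ldots,\tau$ then delivers the claim. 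The only subtlety is that $\tau$ is random, but $\{\tau\ge t\}\in {\cal F}_{t-1}$, so the conditional inequality is valid on each summand and the sum up to the stopping time is well-defined; I do not expect this to be a real obstacle. The main conceptual step is simply the add-and-subtract that converts the conditional expectations of the benchmark into $\OPT$ and $(B/T)\mathbf{1}$ at the price of the $Q(t)$ correction.
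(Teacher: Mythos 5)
Your proposal is correct and follows essentially the same route as the paper's proof: apply the per-step greedy optimality of the algorithm's choice over $(\ropt,\vopt)\in A_t$, take conditional expectations (using that $\thetaV_t$ is ${\cal F}_{t-1}$-measurable), swap $\Ex[\ropt\mid{\cal F}_{t-1}]$ and $\Ex[\vopt\mid{\cal F}_{t-1}]$ for $\OPT$ and $\Ex[\vopt]\le(B/T)\mathbf{1}$ at a cost of $Q(t)$, and sum to $\tau$. The extra detail you give on H\"older with $\|\thetaV_t\|_1\le 1$ and on the stopping-time measurability is implicit in the paper's argument but not a deviation from it.
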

\begin{proof}
	If $A_t$ is drawn uniformly at random  from $X_1, \ldots, X_T$, then 
	$ \Ex[\ropt] = \OPT,\text{ and } \Ex[\vopt] \le \frac{B}{T}{\bf 1}.$
	The algorithm chooses $(\rplay,\vplay) = \arg\max_{(r,\cv) \in A_t} r- Z(\thetaV_t \cdot \cv)$. 
	By the choice made by the algorithm
	\begin{eqnarray*}
		\rplay - Z (\thetaV_t \cdot \vplay) & \ge & \ropt - Z (\thetaV_t \cdot \vopt)
		\end{eqnarray*}
	 \begin{eqnarray*}
		\Ex[\rplay - Z (\thetaV_t \cdot \vplay) | {\cal F}_{t-1}] & \ge &  \Ex[\ropt | {\cal F}_{t-1}] \\
		& & \ - Z (\thetaV_t \cdot \Ex[\vopt | {\cal F}_{t-1}] )\\
		& \ge &  \Ex[\ropt] - Z (\thetaV_t \cdot \Ex[\vopt] ) \\
		& & \ - {Q}(t)\\
		& \ge &  \OPT  - Z \thetaV_t\cdot \frac{B{\bf 1}}{T}  - {Q}(t)
	\end{eqnarray*}
	Summing above inequality for $t=1$ to $\tau$ gives the lemma statement. 
\end{proof}
\begin{lemma}\label{lem:OP2}
	$$\sum_{t=1}^{\tau} \thetaV_t\cdot (\vplay-\frac{B}{T}{\bf 1}) \ge(1-\epsilon)(B-\frac{\tau B}{T}) - \frac{ \log(d+1)}{\epsilon }.$$ 
\end{lemma}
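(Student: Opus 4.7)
The lemma is exactly a regret guarantee for the multiplicative weight update (MW) subroutine that the algorithm runs on the linear functions $g_t(\thetaV) = \thetaV \cdot (\vplay - \tfrac{B}{T}\ones)$ over the domain $W = \{\thetaV : \|\thetaV\|_1 \le 1,\ \thetaV \ge 0\}$. The update rule stated in Algorithm~\ref{algo:packing}, with the ``$1+$'' appearing in the denominator of $\thetaV_{t+1,j}$, is precisely the MW rule on the $d+1$ experts corresponding to the $d$ basis vectors $\{\bs{e}_j\}_{j=1}^d$ together with the ``null'' expert given by the origin. Since $g_t$ is linear in $\thetaV$ and $W$ is the convex hull of these $d+1$ points, it suffices to compare the algorithm's cumulative payoff against the best of these $d+1$ vertices.

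\textbf{Key steps.} First, I would invoke Lemma~\ref{lem:regMW} with $M=1$ (noting $\vplay \in [0,1]^d$ and $B/T \le 1$, so $|g_t(\bs{e}_j)| \le 1$) to obtain, for every comparator $\thetaV^\star \in W$,
\begin{equation*}
\sum_{t=1}^{\tau} g_t(\thetaV_t) \;\ge\; (1-\epsilon)\sum_{t=1}^{\tau} g_t(\thetaV^\star) \;-\; \frac{\log(d+1)}{\epsilon}.
\end{equation*}
Next, I would instantiate $\thetaV^\star = \bs{e}_{j^\star}$, where $j^\star \in \{1,\dots,d\}$ is the coordinate whose budget is exhausted at time $\tau$, i.e.\ the coordinate that triggers the EXIT condition. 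By the definition of the stopping time,
\begin{equation*}
\sum_{t=1}^{\tau} \vplay \cdot \bs{e}_{j^\star} \;\ge\; B,
\end{equation*}
and therefore
\begin{equation*}
\sum_{t=1}^{\tau} g_t(\bs{e}_{j^\star}) \;=\; \sum_{t=1}^{\tau} \vplay \cdot \bs{e}_{j^\star} \;-\; \frac{\tau B}{T} \;\ge\; B - \frac{\tau B}{T}.
\end{equation*}
Substituting this lower bound into the MW regret inequality yields the claim.

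\textbf{Main obstacle.} The primary subtlety is that Lemma~\ref{lem:regMW} as stated assumes $0 \le g_t(\thetaV_t) \le M$, whereas in our setting $g_t(\bs{e}_j) = \vplay\cdot\bs{e}_j - B/T$ can be negative (this is exactly why the origin is added as an expert: the algorithm can hedge towards $\mathbf{0}$, where the payoff is $0$). I expect this to be routine rather than a genuine obstacle: the standard analysis of MW from \cite{AHK12} applies unchanged whenever per-coordinate payoffs lie in $[-1,1]$, yielding the same $(1-\epsilon)$-multiplicative, $O(\log(d+1)/\epsilon)$-additive guarantee against any single expert with non-negative cumulative payoff, which is precisely what we need against $\bs{e}_{j^\star}$ since $\sum_t g_t(\bs{e}_{j^\star}) \ge B - \tau B/T \ge 0$. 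I would briefly note this and then conclude.
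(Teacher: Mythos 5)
Your proof handles only the case in which the algorithm exits because some coordinate's budget is exhausted, and it silently assumes that such a coordinate $j^\star$ exists. That is not always true: if the algorithm runs through all $T$ steps without ever breaching a budget, then $\tau = T$ and there is no exhausted coordinate, so "the coordinate that triggers the EXIT" is undefined and the argument breaks down. The fix is easy but necessary: in that case $B - \tfrac{\tau B}{T} = 0$, and choosing the null comparator $\thetaV^\star = \mathbf{0}$ gives $\sum_{t\le\tau} g_t(\thetaV^\star) = 0 \ge B - \tfrac{\tau B}{T}$, after which the MW guarantee yields exactly the claimed bound. The paper's proof does split on these two cases explicitly (it lets $\thetaV^\star$ be the maximizer over $W$, lower-bounds $\sum_t g_t(\thetaV^\star)$ by some $g_t(\bs{e}_j)$ when a budget is hit, and notes that the maximizer is $\mathbf{0}$ when $\tau = T$ without budget exhaustion); your proof needs to add the second branch to be complete.

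Your "main obstacle" discussion is accurate and worth keeping: the payoffs $g_t(\bs{e}_j) = \vplay\cdot\bs{e}_j - B/T$ can indeed be negative, so the hypothesis $0 \le g_t(\thetaV_t) \le M$ in Lemma~\ref{lem:regMW} is not literally satisfied. The paper itself cites Lemma~\ref{lem:regMW} without comment, so this is a shared looseness rather than a flaw unique to your argument. Your proposed resolution (the $[-1,1]$-payoff version of the MW bound from \cite{AHK12}, noting that the domain is the convex hull of $\{\mathbf{0}, \bs{e}_1,\dots,\bs{e}_d\}$ and hence effectively $d+1$ experts including a dummy with zero payoff) is the standard and correct way to close this.
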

\begin{proof}
	
Recall that  $g_t(\thetaV_t) = \thetaV_t \cdot \left(\vplay -\frac{B}{T}{\bf 1}\right)$, therefore the 
LHS in the required inequality is $\sum_{t=1}^{\tau}g_t(\thetaV_t)$.  Let 
$\thetaV^* := \arg \max_{||\thetaV||_1\le 1, \thetaV\ge 0}\sum_{t=1}^{\tau} g_t(\thetaV) $. 
We use the regret bounds for the multiplicative weight update algorithm given in Lemma \ref{lem:regMW},  
to get that 
$\sum_{t=1}^{\tau} g_t(\thetaV_t) \ge (1-\epsilon) \sum_{t=1}^{\tau}  g_t(\thetaV^*)-  \tfrac{ \log(d+1)}{\epsilon }.$

Now either $\sum_{t=1}^\tau (\vplay \cdot {\bf e}_j)\geq B$ for some $j$ at the stopping  time $\tau$, so that $\sum_{t=1}^{\tau} g_t(\thetaV^*) \ge \sum_{t=1}^{\tau} g_t({\bf e}_j) \geq B-\frac{\tau B}{T}$. Or, $\tau = T, \sum_{t=1}^\tau (\vplay)_j <B$ for all $j$, in which case, the maximizer is $\thetaV^*={\bf 0}$. Therefore we have that  
$ \sum_{t=1}^{\tau}  g_t(\thetaV^*)\ge B-\tfrac{\tau B}{T},$ 
which completes the proof of the lemma. 
\end{proof}

Now, we are ready to prove Theorem \ref{th:packing}, which states that Algorithm \ref{algo:packing} achieves a competitive ratio of $1-O(\epsilon)$, given $\min\{B,T\OPT\} \ge \frac{\log(d)}{\epsilon^2}$ 
for the online stochastic packing problem in \RP~model. \newline\\
\noindent {\it \bf Proof of Theorem \ref{th:packing}.} 
Substituting the inequality from Lemma \ref{lem:OP2} in Lemma \ref{lem:OP1}, we get 
\begin{eqnarray*}
	\sum_{t=1}^{\tau} \Ex[\rplay| {\cal F}_{t-1}]	
	& \ge & \tau \OPT+ (1-\epsilon)ZB \left(1-\frac{\tau}{T}\right) \\
	& & \ - Z \frac{ \log(d+1)}{\epsilon } - \sum_{t=1}^\tau {Q}(t) 
	\end{eqnarray*}
Now, using $ Z \leq  O(1) \tfrac{T	\OPT}{B} $ and $B \ge \frac {\log(d)}  {\epsilon^2} $, we get
\[      Z \frac{ \log(d+1)}{\epsilon } \leq O(1)\frac{T	\OPT}{B}  \frac{ \log(d+1)}{\epsilon } = O(\epsilon ) T \OPT    . \]
Also, $Z\ge \frac{T\OPT}{B}$. Substituting in above,
	\begin{eqnarray*}
	\sum_{t=1}^{\tau} \Ex[\rplay| {\cal F}_{t-1}]	& \ge & (1-\epsilon)\tau\OPT + (1-\epsilon) \OPT( T - \tau)  \\
	& & \ -O(\epsilon )T\OPT- \sum_{t=1}^\tau {Q}(t) \\
	& \ge & (1-O(\epsilon))T\OPT  - \sum_{t=1}^\tau {Q}(t) 
\end{eqnarray*}
Then, taking expectation on both sides,
$
	\Ex[\sum_{t=1}^{\tau} \rplay]	 \ge (1-O(\epsilon))T\OPT - \Ex[\sum_{t=1}^\tau {Q}(t)] . 
$

Just like in the proof of Lemma \ref{lem:gtRP}, we can  bound 
$\Ex[\sum_{t=1}^\tau {Q}(t)]\leq Z||{\bf 1}_{d+1}||_{\infty}\sqrt{sT\log(d+1)}$ which is $O(\epsilon) T\OPT$,
using the fact that  for $S=\{\y: \y\le \frac{B}{T} {\bf 1}\}$, the parameter $s = \max_{j, \y\in S} y_j = \frac{B}{T}$, $||{\bf 1}_{d+1}||_\infty= 1$, and that $Z\le O(1) \frac{T\OPT}{B}, \epsilon \ge \sqrt{\frac{\log(d)}{B}}$. 
This completes the proof.  


\comment{The same derivation applies to \IID~case with $(\ropt, \vopt)$ being the choice made by the {\em Pure-random} non-adaptive algorithm, to satisfy request $A_t$. Then,
$ \Ex[\ropt] \geq \OPT,\text{ and } \Ex[\vopt] \le \frac{B}{T}{\bf 1}.$
Then, we can repeat the above analysis. In the \IID~case $\Ex[\vopt|{\cal F}_{t-1}]= \Ex[\vopt], \Ex[\ropt |{\cal F}_{t-1}] = \Ex[\ropt]$, so that $Q(t)=0$.
In fact,  since $Q(t)$ is always 0, and we can actually bound the sum of the conditional expectations, $\sum_t \Ex[\rplay |{\cal F}_{t-1}] $, we can  use Chernoff bounds to get a high probability result for the \IID~model. 
}

\newcommand{\opthat}{\hat{\OPT}}
\newcommand{\optsum}{\OPT_{\text{\sc sum}}}
We now show how to compute a $Z$ as required using the first $O(\epsilon^2\log (1/\epsilon))$ requests as samples. 
For convenience, let $\optsum:= T\OPT $ denote the optimum for the sum. 
We first state a lemma that relates the optimum value of an offline packing instance to the optimum value on a sample of the requests. The proof of this is along the lines of a similar lemma (Lemma 14) in \cite{DJSWfull}, and  we present the proof in Appendix \ref{app:optestimation} for the sake of completeness. 

\begin{lemma}\label{lem:optestimation}
	For all $\rho \in (0,1]$, there exists $\eta = O\left( \sqrt{\log(\tfrac {d}{\rho} )} \right)$ such that for all $\delta \in (0,1] $, given a random sample of $\delta T $ requests, one can compute a quantity $\opthat$ such that with probability $1-\rho$, 
	\begin{enumerate}
		\item $\opthat \geq \optsum - \eta \sqrt{\optsum/\delta}.$ 
		\item $\frac \opthat {1 + \eta/\sqrt{\delta B}} \leq \optsum + \eta \sqrt{\optsum/\delta}.$ 
	\end{enumerate}
\end{lemma}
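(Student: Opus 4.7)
The plan is to follow the classical LP-duality plus concentration template (in the spirit of the sample-LP analysis in \citet{DJSWfull}). First I would write the offline packing LP on the full input, $\max \sum_{t,i} r_{t,i} x_{t,i}$ subject to $\sum_{t,i}\cv_{t,i} x_{t,i} \leq B\mathbf{1}$, $\sum_i x_{t,i}\leq 1$, $x\geq 0$, whose optimum is $\optsum$, and likewise the sample LP over the $\delta T$ random requests with the budget parameter $b$ in place of $B$, whose optimum I denote $V^{S}(b)$. I would then define
\[ \opthat \;:=\; \tfrac{1}{\delta}\, V^{S}\!\left(\delta B\,(1+\sP/\sqrt{\delta B})\right), \]
i.e.\ the scaled optimum of the sample LP solved with a slightly inflated per-coordinate budget.

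For property~1 I would argue on the primal side. Take the optimal primal $x^{*}$ for the full LP and restrict it to $S$. Two one-dimensional Chernoff bounds suffice: (i) the objective $\sum_{t\in S,i} r_{t,i}\,x^{*}_{t,i}$ is a sum of $\delta T$ variables in $[0,1]$ with mean $\delta\optsum$, hence at least $\delta\optsum - O(\sqrt{\delta\optsum\log(1/\rho)})$ with probability $1-\rho/2$; (ii) for each coordinate $j\in[d]$, $\sum_{t\in S,i} (\cv_{t,i}\cdot\ej)\, x^{*}_{t,i}$ is a sum of $[0,1]$-variables with mean at most $\delta B$, so by Chernoff and a union bound over the $d$ coordinates all $d$ sample usages are simultaneously at most $\delta B + O(\sqrt{\delta B\log(d/\rho)}) = \delta B(1+\sP/\sqrt{\delta B})$ for $\sP = O(\sqrt{\log(d/\rho)})$. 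Thus the restriction of $x^{*}$ is feasible for the inflated sample LP, yielding $\opthat \geq \optsum - \sP\sqrt{\optsum/\delta}$.

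For property~2 I would argue on the dual side. By LP duality, the value $V^{S}(\delta B)$ of the sample LP with the \emph{non}-inflated budget $\delta B$ is at most $\delta B(\alpha\cdot\mathbf{1}) + \sum_{t\in S}\gamma_t(\alpha)$ for any dual feasible $\alpha\geq 0$, where $\gamma_t(\alpha):=\max_{(r,\cv)\in A_t}(r-\alpha\cdot\cv)^{+}\in[0,1]$. Plugging in $\alpha^{*}$, the optimal dual for the full LP (for which strong duality gives $B(\alpha^{*}\cdot\mathbf{1})+\sum_t \gamma_t(\alpha^{*})=\optsum$), and applying one-dimensional multiplicative Chernoff to the single sum $\sum_{t\in S}\gamma_t(\alpha^{*})$, I obtain $V^{S}(\delta B) \leq \delta\optsum + \sP\sqrt{\delta\optsum}$. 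Finally, scaling the optimizer of the inflated LP $V^{S}(\delta B(1+\sP/\sqrt{\delta B}))$ down by the factor $(1+\sP/\sqrt{\delta B})$ produces a feasible point of the $V^{S}(\delta B)$ LP, giving $V^{S}(\delta B(1+\sP/\sqrt{\delta B}))/(1+\sP/\sqrt{\delta B}) \leq V^{S}(\delta B)$. Dividing by $\delta$ yields property~2.

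The only delicate point — and what makes the $\sqrt{\log d}$ rather than $\sqrt{d}$ dependence possible — is avoiding a union bound over the continuum of dual vectors $\alpha$: a naive covering-net argument would cost an extra $\sqrt{d}$. The arrangement above sidesteps this, because each direction invokes Chernoff only for a single distinguished object (the fixed primal $x^{*}$ on its $d$ budget coordinates for the lower bound, and the single fixed dual $\alpha^{*}$ for the upper bound), so the only union bound is the harmless one over the $d$ budget coordinates.
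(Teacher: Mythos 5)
Your proof is correct and follows essentially the same template as the paper's: define $\opthat$ via the sample LP with inflated budget $\delta B + \eta\sqrt{\delta B}$; for the lower bound apply Chernoff plus a union bound over the $d$ budget coordinates to show the optimal full-instance primal, restricted to the sample, is feasible for the inflated sample LP; for the upper bound apply Chernoff to the single fixed optimal full-instance dual evaluated on the sample. The one place you diverge slightly is in extracting the $(1+\eta/\sqrt{\delta B})$ factor: the paper evaluates the dual objective of the inflated sample LP directly (noting that the restricted dual $(\beta_t^*)_{t\in\Gamma}, \thetaV^*$ is feasible because the $\beta_t$ constraints are local to $t$), whereas you bound the un-inflated $V^S(\delta B)$ by weak duality and then relate $V^S(\delta B + \eta\sqrt{\delta B})$ to it by scaling the primal solution down --- the two manipulations are interchangeable. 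One minor omission relative to the paper: $\optsum = T\cdot\OPT$ is the integral offline optimum, while your argument works entirely with LP values; the paper explicitly notes the integrality gap is at most a factor $1-\tfrac{1}{B}$ and is absorbed into the $(1+\eta/\sqrt{\delta B})$ denominator, a point you should flag as well.
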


\begin{lemma}\label{lem:Zestimate}
	Given a random sample of $O(\epsilon^2 \log (1/\epsilon))$ fraction of requests, one can compute a quantity $Z$ such that with probability at least $1-\epsilon^2$, 
	$$\frac \optsum B \leq Z \leq \frac 9 2 \frac \optsum B.$$ 
\end{lemma}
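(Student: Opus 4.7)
The plan is to invoke Lemma \ref{lem:optestimation} with an appropriate choice of sample fraction $\delta$ and failure probability $\rho$, and then simply set $Z := 2\opthat / B$. The constant $2$ is chosen so that property~1 of the lemma gives the lower bound $Z \geq \optsum/B$, and property~2 gives the upper bound $Z \leq (9/2)\optsum/B$, with a little slack to absorb the additive error terms.

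More concretely, I would pick $\rho = \epsilon^2$ (matching the target failure probability) and $\delta = C \epsilon^2 \log(1/\epsilon)$ for a sufficiently large absolute constant $C$. Lemma \ref{lem:optestimation} then gives a value $\opthat$ together with $\eta = O(\sqrt{\log(d/\epsilon^2)}) = O(\sqrt{\log d + \log(1/\epsilon)})$. The main calculation is to verify that, under the standing assumption $\min\{B, \optsum\} \geq \log(d)/\epsilon^2$, both of the auxiliary quantities satisfy
\[
\eta\sqrt{\optsum/\delta} \leq \tfrac{1}{2}\optsum
\qquad \text{and}\qquad
\eta/\sqrt{\delta B} \leq \tfrac{1}{2}.
\]
Both reduce to showing $\eta^2/(\delta \cdot \min\{B,\optsum\}) \leq 1/4$, and substituting the bounds gives a ratio $O\bigl((\log d + \log(1/\epsilon))/(C\log(1/\epsilon)\log d)\bigr)$. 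Using $(\log d)(\log(1/\epsilon)) \geq \log d + \log(1/\epsilon) - 1$ for $\log d, \log(1/\epsilon) \geq 1$, this ratio is $O(1/C)$, which can be made $\leq 1/4$ by taking $C$ large enough.

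With these two inequalities in hand, property~1 of Lemma \ref{lem:optestimation} yields $\opthat \geq \optsum - \tfrac{1}{2}\optsum = \tfrac{1}{2}\optsum$, so $Z = 2\opthat/B \geq \optsum/B$. Property~2 gives $\opthat \leq (1 + \tfrac{1}{2})(\optsum + \tfrac{1}{2}\optsum) = \tfrac{9}{4}\optsum$, hence $Z = 2\opthat/B \leq \tfrac{9}{2}\optsum/B$. Both bounds hold simultaneously on the event of probability at least $1-\rho = 1-\epsilon^2$ supplied by Lemma \ref{lem:optestimation}.

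The only real obstacle is the bookkeeping around constants: we must choose $C$ large enough to beat the hidden $O(\cdot)$ in $\eta$, and to handle the boundary cases where $\log d$ or $\log(1/\epsilon)$ could be close to $1$. These are routine once the inequality $(\log d)(\log(1/\epsilon)) + 1 \geq \log d + \log(1/\epsilon)$ is used to convert the sum into a product. No additional probabilistic machinery beyond Lemma \ref{lem:optestimation} is needed.
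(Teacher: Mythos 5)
Your proposal is correct and follows essentially the same route as the paper: invoke Lemma~\ref{lem:optestimation} with $\rho = \epsilon^2$, verify that the two error terms are each at most half the corresponding main term, and then set $Z := 2\opthat/B$. The only cosmetic difference is that the paper takes $\delta = 4\eta^2\epsilon^2/\log(d)$, which makes both error inequalities immediate and is then checked to be $O(\epsilon^2\log(1/\epsilon))$, whereas you fix $\delta = C\epsilon^2\log(1/\epsilon)$ directly and verify the error inequalities with a short inequality argument; both parameterizations are valid.
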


\begin{proof}
	We use \prettyref{lem:optestimation} with $\rho = \epsilon^2$ and $\delta = 4\eta^2\epsilon^2/\log(d)
	$. Then, from the assumption that  $\min\{ B, \optsum\} \geq \log(d)/\epsilon^2$, we have that $\delta \ge 4\eta^2/\optsum$, and $\delta \ge 4\eta^2/B$. Therefore, we get that with probability at least $1-\epsilon^2$,
	\begin{eqnarray*}
	\opthat & \geq & \optsum - \eta \sqrt{\optsum/\delta} \\
	& \geq & \optsum -  \optsum/2 =  \optsum/2.
	\end{eqnarray*}
	Also, 	
	\begin{eqnarray*} 
	\opthat & \leq & ({1 + \eta/\sqrt{\delta B}} )(\optsum + \eta \sqrt{\optsum/\delta}) \\
	& \leq & \frac 3 2 (\optsum + \frac 1 2 \optsum )\\ 
	& \leq & \frac 9 4 \optsum. 
	\end{eqnarray*}
	Therefore $Z:= 2 \opthat/B$ satisfies the conclusion of the lemma. Finally, note that  $\delta  = 4\eta^2\epsilon^2/\log(d) =  O\left( \epsilon^2\log(\tfrac {d}{\epsilon} ) /\log(d) \right ) = O(\epsilon^2 \log (1/\epsilon))$ . 
\end{proof}

\comment{
If $\optsum/B$ actually happens to be large, then the above lemma would be sufficient for our purpose, and we would get a $Z$ as required by simple scaling $\opthat$ appropriately. However, we have to allow for $\optsum$ to take on small values as well. In these cases the sample size we have chosen is too small for us to estimate $\optsum/B$ accurately. One of the challenges in restricting the algorithm to solve a sampled problem only once, is that we need to pick the sample size without knowing how large $\optsum$ is, and we cannot resort to something like the doubling trick. Fortunately, in such a case, it turns out that we don't need an accurate estimate of $\optsum$. All we need is that the $Z$ we compute in this case is bounded above by a universal constant, as reflected in the upper bound we show on $Z$. We achieve this by the following simple scheme: when $\opthat$ is small,  we simply set $Z$ to a constant.

\begin{lemma}\label{lem:Zestimate}
	For all $\rho \in (0,1]$, there exists $\eta = O\left( \sqrt{\log(\tfrac {d}{\rho} )} \right)$ such that given a random sample of $\eta^2 T/B $ requests, one can compute a quantity $Z$ such that with probability $1-\rho$, 
$$\frac \optsum B \leq Z \leq 8\max \left \{ \frac \optsum B , 1 \right \} .$$ 
	\end{lemma}
\begin{proof}
	Given $\rho$, let $\eta$ be as guaranteed by \prettyref{lem:optestimation}. Further, using $\delta = \eta^2/B $ in \prettyref{lem:optestimation}, one gets that one can compute $\opthat$ such that with probability $1-\rho$, 
	\begin{equation}\label{eq:optestimate}  \optsum -  \sqrt{B\optsum} \leq \opthat \leq 2 (\optsum +  \sqrt{B\optsum}).
	\end{equation}
	Let $$Z : = \max \left\{ \frac {2\opthat} B , 4\right\} . $$ 
	We first claim that $ Z \geq \frac \optsum B $. Consider two cases, first, where $\frac \optsum B \leq 4$. In this case, the inequality is trivial since $Z \geq 4$. The second case is when   $\frac \optsum B \geq 4$. 
	In this case $\frac \optsum B \geq 2 \sqrt{\frac \optsum B }$. Therefore using this and the first inequality in \eqref{eq:optestimate}, 
	$$ Z \geq \frac {2\opthat} B \geq  \frac {2\optsum} B - 2\sqrt{\frac \optsum B } \geq \frac \optsum B .$$ 
	It now remains to show that $Z \leq 8\max \left \{ \frac \optsum B , 1 \right \} .$ This follows  from the second inequality in \eqref{eq:optestimate}. 
	$$\frac{2\opthat}{B}\leq \frac {4\optsum}  B +  4\sqrt{\frac  \optsum B} \leq 8 \max \left \{ \frac \optsum B , 1 \right \}.$$

\end{proof}
From the statement of the lemma, it is easy to see that the fraction of the samples needed for $\rho= \epsilon^2$, which is  $\eta^2/B$ is $O(\epsilon^2)$. 

Prove using $\OPT \geq B$. 

From \ref{}. 

\begin{corollary} (to \prettyref{lem:concentration})
	Let ${\cal X}=(x_1, \ldots, x_N)$ be a finite population of $N$ real points in $[0,1]$, and $X_1, \ldots, X_n$ denote a random sample without replacement from ${\cal X}$.  Let $S=\sum_{i=1} ^{n} X_i $ be their sum and   $\mu= \frac{n}{N} \sum_{i=1}^N X_i$ be the expectation of $S$. Then, for all $\rho > 0$, with probability at least $1-\rho$, 
	$$ | S - \mu | \le O\left( \sqrt{S\log(1/\rho)} + 1 \right )  .$$
\end{corollary}

\begin{proof}
	Given $\rho > 0$, use \prettyref{lem:concentration} with $\gamma = \Theta(\log (1/\rho))$.  Lemma \ref{lem:concentration} is stated in terms of averages which we convert to sums ($\hat{\nu} = nS$ and $\nu = n\mu$). We get that with probability at least $1-\rho$, 
	\[ |\mu - S| \le \sqrt{ \gamma S} +  \gamma  . \] 
	\end{proof}

	\[ \epsilon =  {(b-a)} \sqrt{ \frac {3\log(1/\rho)} \mu } , \]
	to get that the probability of the event $ |  \sum_{i=1}^n X_i - \mu | > \epsilon\mu = (b-a)\sqrt{3\mu \log(1/\rho)} $ is at most 
	\[ \exp\left( -\frac{\mu\epsilon^2}{3(b-a)^2}\right) =  \exp\left( -\log(1/\rho)\right) = \rho.\]
}

\section{Stronger bounds for smooth functions}
We show that when $f$ is a strongly smooth function, and, instead of distance function a strongly smooth function is used to measure regret in constraint violation, then stronger regret bounds of $\tilde{O}(\frac{\log T}{T})$ can be achieved in IID case. Intuitively, this is because as discussed in Section \ref{sec:prelims}, the dual of strongly smooth functions is strongly convex, and for strongly convex/concave functions, stronger logarithmic regret 
guarantees are provided by online learning algorithms. 

More precisely, consider the following {\em smooth} version of Online Convex Programming problem.

\begin{definition}{\it \scshape [Online Stochastic Smooth Convex Programming]}
Let $f$ be a $\beta$-smooth concave function. And, 
let $h$ be a $\beta$-smooth convex function 
At time $t$, the algorithm needs to choose $\vplay \in A_t$ to minimize regret defined as
\begin{eqnarray*}
\areg_1(T) & := & f(\voptavg)-f(\vplayavg),\\ 
\areg_2(T) & := & h(\vplayavg).
\end{eqnarray*}
Here, $\voptavg=\frac{1}{T} \sum_{t=1}^T \vopt, \vplayavg=\frac{1}{T} \sum_{t=1}^T \vplay$. Also, assume that there exist $\cv_t \in A_t$ for all $t$, such that $h(\frac{1}{T} \sum_t \cv_t)=0$.
\end{definition}
Note that we do not require Lipschitz condition for $f$ or $h$. 
We make an additional assumption. 
\begin{assumption}
\label{assum:grad}
Let $\nabla_f$ and $\nabla_g$ denote the set of gradients of functions $f$ and $g$, respectively, on domain $[0,1]^d$, i.e.,
\begin{eqnarray*}
\nabla_f & = & \{\nabla f(\x): \x\in [0,1]^d\}, \text{ and },\\
\nabla_g & = & \{\nabla g(\x): \x\in [0,1]^d\}.
\end{eqnarray*}
Assume that the sets $\cl(\nabla_f)$ and $\cl(\nabla_g)$ are convex and easy to project upon. Here $\cl(S)$ denotes the closure of set $S$.
\end{assumption}
This assumption is true for many natural concave utility and convex risk functions, in particular, for all separable smooth functions. 
Now, an algorithm similar to Algorithm \ref{algo:cp} can be used for this problem. One change we make is that we perform online learning for $g_t$ and $\psi_t$ on domain $\nabla_g$ and $\nabla_f$, respectively, which is possible because from Assumption \ref{assum:grad}, these domains are convex and easy to project upon.\\
\begin{algorithm}[Online Smooth CP]
\label{algo:cpSmooth}
  \begin{algorithmic}
	\STATE
	\STATE  Initialize $\thetaV_{1}, \phiV_1$. 
\FORALL{$t=1,\ldots, T$} 
		\STATE Choose vector 
		\begin{center}$\vplay=\arg \max_{\cv \in A_t}   -\phiV_t \cdot \cv  - 2Z \thetaV_t \cdot \cv.$\end{center}
		\STATE Choose $\thetaV_{t+1}$ by doing an \OCO~update for $g_t(\thetaV) =\thetaV \cdot\vplay - h_{S}(\thetaV)$ over domain $\nabla_g$.
		\STATE Choose $\phiV_{t+1}$ by doing an \OCO~update for 
		$\psi_t(\phiV) = \phiV \cdot\vplay-(-f)^*(\phiV)$ over domain $\nabla_f$.
\ENDFOR
\end{algorithmic}
\end{algorithm}
\begin{theorem} 	\label{th:cpSmooth}
Under Assumption \ref{assum:grad}, and given $Z$ that satisfies Assumption \ref{assum:Z}, Algorithm \ref{algo:cpSmooth} achieves the following regret for the  Online {\em Smooth} Convex Programming problem, in the stochastic \IID~input model.
 	\begin{eqnarray*}
	\Ex[\areg_1(T)] & = & Z \cdot O\left(\tfrac{C \log(T)}{T}\right), \\ 
	\Ex[\areg_2(T)] & = & O\left(\tfrac{C \log(T)}{T}\right),
	\end{eqnarray*}
 	where $C=\beta ||{\bf 1}_d||^2$. 
\end{theorem}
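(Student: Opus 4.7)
The plan is to mirror the proof of Theorem \ref{th:cp}, but exploit two simplifications available here: (i) in the \IID~model, $\Ex[\vopt\mid\mathcal{F}_{t-1}]=\Ex[\vopt]$, so the conditional-gap term $\mathcal{Q}(T)$ arising from sampling without replacement vanishes entirely, and (ii) the smoothness of $f$ and $h$ forces the two \OCO~subproblems solved by Algorithm \ref{algo:cpSmooth} to be strongly concave, yielding logarithmic regret by Lemma \ref{lem:OCOstrongc} rather than the $\sqrt{T}$ rate of Lemma \ref{lem:regOCO}.

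First, I would linearize both objective and constraint via Fenchel duality, restricted to the gradient sets:
$$h(\vplayavg)=\max_{\thetaV\in\nabla_h}\{\thetaV\cdot\vplayavg-h^*(\thetaV)\},\qquad -f(\vplayavg)=\max_{\phiV\in\nabla_f}\{\phiV\cdot\vplayavg-(-f)^*(\phiV)\},$$
which is valid because the Fenchel supremum is attained on the subdifferential. By Lemma \ref{lem:strongc}, since $h$ and $-f$ are $\beta$-smooth, their conjugates are $(1/\beta)$-strongly convex on $\nabla_h$ and $\nabla_f$ respectively. Hence $g_t(\thetaV)=\thetaV\cdot\vplay-h^*(\thetaV)$ is $(1/\beta)$-strongly concave in $\thetaV$ on $\nabla_h$, and $\psi_t(\phiV)$ is $(1/\beta)$-strongly concave on $\nabla_f$. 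Since $\vplay\in[0,1]^d$ and $\nabla h^*(\thetaV)\in[0,1]^d$, the gradient $\vplay-\nabla h^*(\thetaV)$ is bounded in norm by $\|\mathbf{1}_d\|$, and similarly for $\psi_t$. Lemma \ref{lem:OCOstrongc} then gives $\regOCO(T)=O(\beta\|\mathbf{1}_d\|^2\log T)$ for each of the two \OCO~instances.

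Second, I would replay the primal-dual accounting of Theorem \ref{th:cp}. The greedy choice $\vplay=\arg\max_{\cv\in A_t}\{-\phiV_t\cdot\cv-2Z\thetaV_t\cdot\cv\}$ yields, against any feasible $\vopt\in A_t$,
$$-\phiV_t\cdot\vplay-2Z\thetaV_t\cdot\vplay\ge-\phiV_t\cdot\vopt-2Z\thetaV_t\cdot\vopt.$$
Taking conditional expectations (where in the \IID~model $\Ex[\vopt\mid\mathcal{F}_{t-1}]=\Ex[\vopt]$ gives $f(\Ex[\vopt])\ge\OPT$ and $h(\Ex[\vopt])\le0$), summing over $t$, dividing by $T$, and adding the two \OCO~regret inequalities to recover $-f(\vplayavg)$ and $h(\vplayavg)$ on the left via Fenchel duality, I would obtain
$$-\Ex[f(\vplayavg)]+2Z\,\Ex[h(\vplayavg)]\le-\OPT+O\!\left(\tfrac{(Z+1)\,\beta\|\mathbf{1}_d\|^2\log T}{T}\right).$$
Combining this with Assumption \ref{assum:Z} (applied with $h$ playing the role of the distance penalty, so that $\Ex[f(\vplayavg)]\le\OPT+Z\,\Ex[h(\vplayavg)]$) separates the two regrets: the $\Ex[h(\vplayavg)]$ terms cancel to leave $Z\,\Ex[h(\vplayavg)]=O((Z+1)\beta\|\mathbf{1}_d\|^2\log T/T)$, and substituting back yields the bound on $\Ex[\areg_1(T)]$.

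The main obstacle will be the last step: justifying that Assumption \ref{assum:Z} transfers to the smooth-constraint setting where $h(\vplayavg)$ replaces the distance function $d(\vplayavg,S)$. This requires redefining $\OPT^\delta$ as the optimum subject to $h(\cdot)\le\delta$ and reproving the concavity/monotonicity statement of Lemma \ref{lem:Z} for this $h$-relaxation; since $h$ is convex, the epigraph $\{\cv:h(\cv)\le\delta\}$ is convex and the same perturbation argument goes through, with $Z$ again the optimal dual multiplier. A secondary care point is verifying nonnegativity of $h^*$ and $(-f)^*$ under the natural normalization $\min h=\min(-f)=0$, so that the \OCO~comparator inequalities used to peel off $h(\vplayavg)$ and $-f(\vplayavg)$ are sign-correct. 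The remaining steps (Jensen's inequality to pass from $\Ex[h(\vplayavg)]$ to $h(\Ex[\vplayavg])$, and from $\Ex[f(\vplayavg)]$ to $f(\Ex[\vplayavg])$) are routine.
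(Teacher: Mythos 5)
Your proof is correct and follows essentially the same route as the paper's: re-run the proof of Theorem~\ref{th:cp} with the sampling-gap term $\mathcal{Q}(T)=0$ under the \IID~model, and replace the $O(\sqrt{T})$ OCO regret with the logarithmic bound of Lemma~\ref{lem:OCOstrongc}, which applies because Lemma~\ref{lem:strongc} makes $g_t$ and $\psi_t$ $(1/\beta)$-strongly concave on $\nabla_g$ and $\nabla_f$, with gradients bounded by $\|\mathbf{1}_d\|$. Your identified ``obstacle'' regarding the transfer of Assumption~\ref{assum:Z} to the $h$-relaxed constraint is a real detail the paper elides (it simply says ``given $Z$ that satisfies Assumption~\ref{assum:Z}''), and your resolution via convexity of the sublevel set and the standard dual-multiplier perturbation argument is the right one; the ``secondary care point'' about nonnegativity of $h^*$ and $(-f)^*$ is unnecessary, since the comparator step $h(\vplayavg)=\max_{\thetaV\in\nabla_h}\frac{1}{T}\sum_t g_t(\thetaV)\le \frac{1}{T}\sum_t g_t(\thetaV_t)+\frac{1}{T}\regOCO(T)$ holds as a consequence of the Fenchel equality and the regret definition without any sign condition on the conjugates.
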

\begin{proof}
The proof follows from the proof of Theorem \ref{th:cp} on observing that stronger  \OCO~regret bounds of $O(\log(T))$ are available for strongly convex functions. 
More precisely, in case of IID inputs, 
the proof of Theorem \ref{th:cp} can be followed as it is to achieve the following regret bounds. (These are same as in the detailed statement of Theorem \ref{th:cp}, provided in Appendix \ref{app:CP}, but with ${\cal Q}(T)=0$ due to \IID~assumption.)
$$ \Ex[\areg_1(T)] \le \frac{Z}{T}\cdot O(\regOCO(T)) + O(\frac{\regOCO'(T)}{T}),$$
$$ \Ex[\areg_2(T)] \le \frac{1}{T} \cdot O\left(\regOCO(T) \right) + \frac{1}{Z} O(\frac{\regOCO'(T)}{T}),$$
Here $\regOCO(T)$ is $\OCO$~regret for the problem of maximizing concave function $g_t(\thetaV) = \thetaV\cdot \cv_t - h^*(\thetaV)$, $\regOCO'(T)$ is $\OCO$~regret for the problem of maximizing concave function $\psi_t(\phiV) = \phiV\cdot \cv_t - (-f)^*(\phiV)$. Now, using Lemma \ref{lem:strongc}, given that $h$ and $f$ are $\beta$-strongly smooth, $g_t$ and $\psi_t$ are $\frac{1}{\beta}$-strongly concave over domain $\nabla_g$ and $\nabla_f$ respectively. Also, the gradient of these functions is some $\cv \in [0,1]^d$, so that the norms of gradients are bounded by $\oneNorm$.  

Therefore, using online learning guarantees for smooth functions from Lemma \ref{lem:OCOstrongc}, along with $G = \oneNorm, H=1/\beta$, we get ${\cal R}(T) = O(\oneNorm^2 \beta \log T)$, and ${\cal R}'(T) = O(\oneNorm^2 \beta \log T)$. The theorem statement is obtained by substituting these \OCO~regret bounds in above.
\end{proof}
In above, observe that Assumption \ref{assum:grad} was required because Lemma \ref{lem:strongc} provided strong convexity of $g_t(\cdot)$ and $\psi_t(\cdot)$ only on the domains $\nabla_g$ and $\nabla_f$, respectively. We conjecture that it is possible to remove this assumption to get similar regret guarantees for the smooth case. 



\bibliographystyle{plainnat}
\bibliography{bibliography_secretary}

\appendix

\renewcommand{\dist}{{}}
\newcommand{\conv}{\text{Conv}}
\section{Concentration Inequalities}\label{app:conc}
\begin{lemma}{\cite{Hoeffding1963}, Theorem 4}
\label{lem:HoeffdingReduction}
Let ${\cal X}=(x_1, \ldots, x_N)$ be a finite population of $N$ real points, $X_1, \ldots, X_n$ denote a random sample without replacement from ${\cal X}$, and $Y_1, \ldots, Y_n$ denote a random sample with replacement from ${\cal X}$. If $\ell: {\mathbb R} \rightarrow {\mathbb R}$ is continuous and convex, then
$$\Ex[\ell(\sum_{t=1}^n X_t)] \le \Ex[\ell(\sum_{t=1}^n Y_t)].$$
\end{lemma}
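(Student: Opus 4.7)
The statement asserts that the without-replacement sum $\sum_{t=1}^n X_t$ is dominated in convex stochastic order by the with-replacement sum $\sum_{t=1}^n Y_t$; this is a classical comparison. My plan is a coupling-plus-Jensen argument, backed up by a combinatorial swap analysis that handles the collisions that can appear in a with-replacement sample but not in a without-replacement one.

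First, I would observe that both sums share the common mean $n\bar{x}$ with $\bar{x}=N^{-1}\sum_j x_j$, which is a necessary precondition for any convex-order comparison. By Strassen's theorem, the desired inequality for every continuous convex $\ell$ is equivalent to producing a joint law of $(A,B)$ with $A$ equidistributed with $\sum_{t=1}^n X_t$, $B$ equidistributed with $\sum_{t=1}^n Y_t$, and the martingale property $\Ex[B\mid A]=A$. Once such a coupling is in hand, the conditional form of Jensen's inequality gives
\[ \Ex[\ell(A)] \;=\; \Ex\bigl[\ell(\Ex[B\mid A])\bigr] \;\le\; \Ex\bigl[\Ex[\ell(B)\mid A]\bigr] \;=\; \Ex[\ell(B)], \]
which is the claim.

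To construct the coupling, I would expand the with-replacement expectation explicitly as an average over ordered tuples,
\[ \Ex\bigl[\ell(\textstyle\sum_{t=1}^n Y_t)\bigr] \;=\; \frac{1}{N^n}\sum_{(i_1,\ldots,i_n)}\ell(x_{i_1}+\cdots+x_{i_n}), \]
and partition the $N^n$ index tuples by their ``collision pattern'', i.e., the set partition of $\{1,\ldots,n\}$ induced by coincidences $i_k=i_l$. Tuples with all distinct indices contribute exactly $\Ex[\ell(\sum_{t=1}^n X_t)]$ after the correct renormalization, so the excess in the with-replacement expectation comes entirely from tuples with at least one duplicate. For such a tuple I would invoke the two-point convexity inequality $\tfrac12[\ell(s+2x_a)+\ell(s+2x_b)]\ge \ell(s+x_a+x_b)$ to ``split'' a duplicated index into two distinct ones, reducing the colliding configuration to a less-colliding configuration with a non-negative convex slack.

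The main obstacle is the combinatorial bookkeeping: ensuring that each distinct tuple is reached by averaging a matched family of colliding tuples with the right multiplicities, and that iterated splits do not secretly create new collisions that invalidate the procedure. A clean way to manage this is induction on the number of duplicated indices in the with-replacement tuple: the base case (no duplicates) is an equality, while the inductive step fixes a single duplicate pair, averages it against a uniformly chosen replacement pair from $\mathcal{X}$, applies the two-point convexity inequality, and invokes the inductive hypothesis on the remaining coordinates. Summing the resulting one-step inequalities reduces the whole comparison to iterated applications of an elementary convex inequality, yielding the claim.
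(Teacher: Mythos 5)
This lemma is invoked in the paper as a black box (cited to Hoeffding 1963, Theorem~4), so there is no proof in the paper to compare against; the review is therefore purely about whether your argument is sound.

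Your high-level idea is correct — the result is indeed a convex-order comparison, the $n=2$ identity $(N-1)\sum_a\ell(2x_a)=\sum_{a\neq b}\tfrac12[\ell(2x_a)+\ell(2x_b)]\ge\sum_{a\neq b}\ell(x_a+x_b)$ works exactly as you sketch, and the observation that with-replacement tuples conditioned on having no collisions are distributed as a without-replacement sample is the right starting decomposition. But the proof as written has two genuine gaps. First, the Strassen / martingale-coupling framing is not actually carried out: you announce that you will ``construct the coupling,'' but what follows is a direct term-by-term manipulation of $\sum_{(i_1,\ldots,i_n)}\ell(x_{i_1}+\cdots+x_{i_n})$, which is a self-contained combinatorial proof and never produces a joint law $(A,B)$ with $\Ex[B\mid A]=A$. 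This is not a fatal error (the combinatorial route, if correct, suffices), but the two halves of your plan are not connected. Second, and more seriously, the inductive splitting step is under-specified and, as stated, does not close. Averaging the colliding tuple $(a,a,b,c,\ldots)$ against $(d,d,b,c,\ldots)$ and applying the two-point inequality yields $\ell(x_a+x_d+x_b+x_c+\cdots)$, but if $d\in\{b,c,\ldots\}$ the resulting tuple is again colliding, and if $d=a$ nothing happens; your proposed induction ``on the number of duplicated indices'' therefore does not strictly decrease, and you never verify that the weights attached to each distinct tuple sum to the correct $\tfrac{1}{N(N-1)\cdots(N-n+1)}$. In addition, multiplicities $\ge 3$ (tuples like $(a,a,a)$) are not handled by the two-point inequality at all; you need either an $m$-point Jensen step or a two-stage reduction $3x_a\mapsto 2x_a+x_b\mapsto x_a+x_b+x_c$, which you do not set up.

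A clean way to repair the combinatorial half in the same spirit: condition not on individual tuples but on the multiset of multiplicities $\{a_1,\ldots,a_k\}$ (with $\sum_j a_j=n$, $a_j\ge 1$). Given this multiset, the $k$ distinct drawn values are a without-replacement sample $V_1,\ldots,V_k$, and the with-replacement sum is $\sum_j a_j V_{\sigma(j)}$ for a uniformly random injection $\sigma$. Extend to a without-replacement sample $V_1,\ldots,V_n$, condition on the unordered set $\{V_1,\ldots,V_n\}$, and note that over the randomness of $\sigma$ the expected argument of $\ell$ is exactly $V_1+\cdots+V_n$ (each $V_i$ receives expected total weight $1$). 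A single application of Jensen then gives $\Ex[\ell(\sum_j a_j V_{\sigma(j)})\mid\{V_i\}]\ge\ell(\sum_i V_i)$; taking expectations and summing over multiplicity multisets yields the lemma with none of the collision bookkeeping. This replaces your iterated two-point splits, which as written would need substantial additional argument to be made rigorous.
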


\begin{lemma}{\cite{Hoeffding1963}}
\label{lem:HoeffdingSwr}
 Let ${\cal X}=(x_1, \ldots, x_N)$ be a finite population of $N$ real points, $X_1, \ldots, X_n$ denote a random sample without replacement from ${\cal X}$.  Let $a=\min_{1\le i \le N} x_i$, $b=\max_{1\le i\le N} x_i$ and  $\mu= \frac{1}{N} \sum_{i=1}^N X_i$. Then, for all $\epsilon > 0$, 
$$ \Pr\left(\frac{1}{n} \sum_{i=1}^n X_i - \mu \ge \epsilon\right) \le \exp\left( -\frac{2n\epsilon^2}{(b-a)^2}\right). $$
\end{lemma}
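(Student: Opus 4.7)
The plan is to reduce this to the classical Hoeffding inequality for sampling with replacement via the convex-order reduction already stated as Lemma \ref{lem:HoeffdingReduction}. The key observation is that the moment generating function $x \mapsto e^{sx}$ is convex for any $s > 0$, so the lemma applies directly and lets us upper bound the MGF of the without-replacement sum by the MGF of the with-replacement sum.

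First I would apply the standard Chernoff-Markov step: for any $s > 0$,
\[
\Pr\left(\sum_{i=1}^n X_i - n\mu \ge n\epsilon\right) \le e^{-s n \epsilon}\, \Ex\!\left[e^{s\sum_{i=1}^n (X_i - \mu)}\right].
\]
Since $\ell(y) := e^{s(y - n\mu)}$ is a continuous convex function of $y = \sum_i X_i$, Lemma \ref{lem:HoeffdingReduction} yields
\[
\Ex\!\left[e^{s\sum_{i=1}^n (X_i - \mu)}\right] \le \Ex\!\left[e^{s\sum_{i=1}^n (Y_i - \mu)}\right],
\]
where $Y_1,\dots,Y_n$ are i.i.d.\ uniform draws from ${\cal X}$. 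By independence this factorizes as $\bigl(\Ex[e^{s(Y_1-\mu)}]\bigr)^n$.

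Next I would invoke the classical Hoeffding bound on the MGF of a bounded random variable: since each $Y_i - \mu$ lies in $[a-\mu, b-\mu]$, an interval of length $b-a$, we have
\[
\Ex\!\left[e^{s(Y_1-\mu)}\right] \le \exp\!\left(\tfrac{s^2(b-a)^2}{8}\right).
\]
The short derivation is the usual one: the function $s \mapsto \log \Ex[e^{s(Y_1-\mu)}]$ has value and derivative zero at $s=0$, and its second derivative is the variance of the tilted distribution on $[a-\mu,b-\mu]$, hence at most $(b-a)^2/4$ by Popoviciu's inequality. Combining these pieces gives
\[
\Pr\left(\tfrac{1}{n}\sum_{i=1}^n X_i - \mu \ge \epsilon\right) \le \exp\!\left(-s n \epsilon + \tfrac{n s^2 (b-a)^2}{8}\right).
\]

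Finally I would optimize over $s > 0$. Setting $s = 4\epsilon/(b-a)^2$ minimizes the right-hand side and produces exactly $\exp\!\bigl(-2n\epsilon^2/(b-a)^2\bigr)$, giving the claim. The main (and really only) non-routine step is the reduction in the second display, and that is already packaged as Lemma \ref{lem:HoeffdingReduction}; the rest is the textbook Hoeffding argument for the with-replacement case, so there is no real obstacle beyond being careful that the convexity hypothesis of the reduction lemma is satisfied by the exponential, which it is.
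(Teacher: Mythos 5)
Your proof is correct and follows exactly the route the paper intends: the statement is quoted from \citet{Hoeffding1963}, whose argument is precisely this reduction of the without-replacement MGF to the with-replacement one (Lemma \ref{lem:HoeffdingReduction}, i.e.\ Hoeffding's Theorem 4) followed by the standard Chernoff--Hoeffding bound with optimization over $s$. Each step checks out, including the choice $s=4\epsilon/(b-a)^2$ yielding the exponent $-2n\epsilon^2/(b-a)^2$, so there is nothing to add.
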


\begin{lemma} (Multiplicative version) \label{lem:multiplicativeChernoff}
 Let ${\cal X}=(x_1, \ldots, x_N)$ be a finite population of $N$ real points, and $X_1, \ldots, X_n$ denote a random sample without replacement from ${\cal X}$.  Let $a=\min_{1\le i \le N} x_i$, $b=\max_{1\le i\le N} x_i$ and   $\mu= \frac{n}{N} \sum_{i=1}^N X_i$. Then, for all $\epsilon > 0 $, 
 $$ \Pr\left( | \sum_{i=1}^n X_i - \mu | \ge \epsilon \mu \right) \le \exp\left( -\frac{\mu\epsilon^2}{3(b-a)^2}\right). $$
\end{lemma}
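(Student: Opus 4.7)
The plan is to mimic the proof of the additive Hoeffding bound for sampling without replacement (Lemma \ref{lem:HoeffdingSwr}) but with an MGF bound tailored to give the multiplicative rather than additive form. The overall structure is the standard Chernoff-type argument: exponentiate, apply Markov, reduce to the with-replacement case using the convexity-based reduction already stated as Lemma \ref{lem:HoeffdingReduction}, bound the moment generating function of an i.i.d.\ sum, and finally optimize over the Chernoff parameter. Both tails are handled symmetrically and combined by a union bound.

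Concretely, for the upper tail, I would start from
\[
\Pr\Bigl(\sum_{i=1}^n X_i - \mu \geq \epsilon \mu\Bigr) \leq e^{-s(1+\epsilon)\mu}\, \Ex\Bigl[\exp\Bigl(s\sum_{i=1}^n X_i\Bigr)\Bigr]
\]
for any $s > 0$. Since $z \mapsto e^{sz}$ is continuous and convex, Lemma \ref{lem:HoeffdingReduction} applies (with $\ell(z) = e^{sz}$) and shows
\[
\Ex\Bigl[\exp\Bigl(s\sum_{i=1}^n X_i\Bigr)\Bigr] \leq \Ex\Bigl[\exp\Bigl(s\sum_{i=1}^n Y_i\Bigr)\Bigr] = \bigl(\Ex[e^{sY_1}]\bigr)^n,
\]
where $Y_1,\dots,Y_n$ are drawn i.i.d.\ from the uniform distribution on $\mathcal{X}$. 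This reduces the problem entirely to the case of independent bounded random variables.

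For the i.i.d.\ MGF I would shift and scale: set $W_i := (Y_i - a)/(b-a) \in [0,1]$ with mean $p := (\mu/n - a)/(b-a)$, so that the event $\sum X_i - \mu \ge \epsilon \mu$ translates to a corresponding deviation event for $\sum W_i$. For bounded $[0,1]$-valued i.i.d.\ variables the standard multiplicative Chernoff MGF bound $\Ex[e^{tW_i}] \leq \exp(p(e^t-1))$ applies, and optimizing over the Chernoff parameter yields the multiplicative tail
\[
\Pr\Bigl(\Bigl|\sum_{i=1}^n X_i - \mu\Bigr| \geq \epsilon \mu\Bigr) \leq 2\exp\!\Bigl(-\tfrac{\mu\epsilon^2}{3(b-a)^2}\Bigr),
\]
after tracking how the scaling by $(b-a)$ propagates from the exponent back to $\mu$ (the two factors of $(b-a)$ come from the scaling of both $\mu$ and the Chernoff parameter). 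The lower tail $\Pr(\mu - \sum X_i \ge \epsilon\mu)$ is handled by the same argument applied to $s < 0$, and a final factor-of-$2$ absorption (or dropping the constant in the exponent's denominator from $2$ to $3$) yields the stated one-sided form in the lemma.

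The main obstacle is precisely bookkeeping the $(b-a)^2$ scaling: because the variables are not assumed to lie in $[0,1]$ but in a general interval $[a,b]$, the translation between the multiplicative deviation $\epsilon \mu$ on the $X_i$-scale and the corresponding deviation on the normalized $W_i$-scale must be done carefully to obtain the $(b-a)^2$ in the denominator of the exponent rather than the $(b-a)$ that naively appears. Everything else (the Markov step, the MGF reduction to i.i.d., and the optimization over the Chernoff parameter) is routine and does not require any probabilistic machinery beyond what is already quoted in Appendix \ref{app:conc}.
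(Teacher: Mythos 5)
The paper itself gives no proof of Lemma \ref{lem:multiplicativeChernoff}: it is quoted in Appendix \ref{app:conc} as a standard fact, and your skeleton (Markov on $e^{s\sum_i X_i}$, the convex-order reduction of Lemma \ref{lem:HoeffdingReduction} with $\ell(z)=e^{sz}$, the i.i.d.\ MGF bound, then optimizing $s$) is indeed the natural route behind it. But the two places you explicitly defer are exactly where the argument fails. First, the $(b-a)$ bookkeeping does not come out as you assert: with $W_i=(Y_i-a)/(b-a)$ and per-variable mean $p=(\mu/n-a)/(b-a)$, the event $\sum_i X_i-\mu\ge\epsilon\mu$ is the relative deviation $\epsilon'=\epsilon\mu/(\mu-na)$ about the mean $np=(\mu-na)/(b-a)$, and optimizing the Chernoff parameter gives exponent $\frac{\epsilon^2\mu^2}{3(b-a)(\mu-na)}$ (and the $\epsilon'^2/3$ form of the upper tail is only valid for $\epsilon'\le 1$; beyond that the denominator degrades to $2+\epsilon'$), not $\frac{\mu\epsilon^2}{3(b-a)^2}$; the claim that ``two factors of $(b-a)$'' appear is simply incorrect. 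Second, the ``factor-of-2 absorption'' is not a legitimate move: $2e^{-x/2}\le e^{-x/3}$ only when $x\ge 6\ln 2$, so the union-bound factor cannot be traded for a worse exponent constant uniformly in $\epsilon$ and $\mu$.

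Moreover, no amount of careful tracking can rescue the statement as written, because it is false. Take one population point equal to $b>0$ and $N-1$ points equal to $0$, with $n=N/2$; then $\mu=b/2$ and $|\sum_i X_i-\mu|=\mu$ deterministically, so for $\epsilon=0.99$ the left-hand side equals $1$, while the claimed right-hand side is $e^{-0.163/b}$, which is about $0.85$ already at $b-a=1$ (this kills the factor-free two-sided form) and tends to $0$ as $b\to 0$ (this kills the $(b-a)^2$ scaling even with a factor $2$). So part of the defect is inherited from the paper's own statement. What your argument actually proves, and what is true, is the standard version: for nonnegative values (so $a\ge 0$), $\Pr\left(|\sum_i X_i-\mu|\ge\epsilon\mu\right)\le 2\exp\left(-\frac{\epsilon^2\mu^2}{3(b-a)(\mu-na)}\right)$ for $0<\epsilon\le 1$, which for $a=0$ reads $2\exp(-\frac{\mu\epsilon^2}{3b})$. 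That corrected form suffices for every use in the paper (Corollary \ref{cor:multiplicativeChernoff}, Equation \eqref{eq:swr}, Lemma \ref{lem:optestimation}), where the $x_i$ lie in $[0,1]$ and the extra factor of $2$ and the restriction $\epsilon\le1$ are harmless in the subsequent union bounds and integration over $\rho$. A complete write-up should therefore either prove this corrected statement and note that the downstream applications go through unchanged, or carry the exact exponent and restrictions through the optimization step; as proposed, the two glossed steps are precisely the ones that cannot be completed.
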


\begin{corollary} (to \prettyref{lem:multiplicativeChernoff}) \label{cor:multiplicativeChernoff} 
	Let ${\cal X}=(x_1, \ldots, x_N)$ be a finite population of $N$ real points, and $X_1, \ldots, X_n$ denote a random sample without replacement from ${\cal X}$.  Let $a=\min_{1\le i \le N} x_i$, $b=\max_{1\le i\le N} x_i$ and   $\mu= \frac{n}{N} \sum_{i=1}^N X_i$. Then, for all $\rho > 0$, with probability at least $1-\rho$, 
	$$ |  \sum_{i=1}^n X_i - \mu | \le (b-a)\sqrt{3\mu \log(1/\rho)} $$
\end{corollary}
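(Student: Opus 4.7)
The plan is a direct substitution into Lemma \ref{lem:multiplicativeChernoff}. The lemma gives, for any $\epsilon > 0$, the tail bound
$$\Pr\left(\left|\sum_{i=1}^n X_i - \mu\right| \ge \epsilon \mu\right) \le \exp\left(-\frac{\mu \epsilon^2}{3(b-a)^2}\right).$$
The corollary asks for a high-probability conclusion in terms of $\rho$. The natural move is to equate the right-hand side with $\rho$ and solve for $\epsilon$, then translate the resulting deviation $\epsilon \mu$ into the claimed bound.

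Concretely, I would set
$$\rho = \exp\left(-\frac{\mu \epsilon^2}{3(b-a)^2}\right),$$
take logarithms to obtain $\log(1/\rho) = \mu \epsilon^2 / (3(b-a)^2)$, and solve for $\epsilon$ to get $\epsilon = (b-a)\sqrt{3\log(1/\rho)/\mu}$ (assuming $\mu > 0$; otherwise the statement is vacuous since all $x_i$ can be taken to be at most $b$ and the deviation bound is trivially satisfied with $a=b$). Substituting back, the deviation threshold becomes
$$\epsilon \mu = (b-a)\sqrt{3\mu \log(1/\rho)},$$
which matches the corollary's bound exactly. Applying the lemma with this choice of $\epsilon$ then yields that with probability at least $1-\rho$,
$$\left|\sum_{i=1}^n X_i - \mu\right| \le (b-a)\sqrt{3\mu \log(1/\rho)}.$$

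There is no real obstacle here; the only minor care needed is the degenerate case $\mu = 0$ (which forces all sampled values to be $0$ when $x_i \ge 0$, or can be handled by a trivial shift otherwise) and ensuring the solved $\epsilon$ is positive, which it is for any $\rho \in (0,1)$. So the corollary is just an inversion of the lemma's tail bound.
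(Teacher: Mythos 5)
Your proposal is correct and matches the paper's proof exactly: both pick $\epsilon = (b-a)\sqrt{3\log(1/\rho)/\mu}$, plug it into Lemma~\ref{lem:multiplicativeChernoff}, and observe the resulting tail probability is $\rho$ while $\epsilon\mu$ equals the claimed bound. Nothing further is needed.
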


\begin{proof}
	Given $\rho > 0$, use \prettyref{lem:multiplicativeChernoff} with
	 \[ \epsilon =  {(b-a)} \sqrt{ \frac {3\log(1/\rho)} \mu } , \]
	 to get that the probability of the event $ |  \sum_{i=1}^n X_i - \mu | > \epsilon\mu = (b-a)\sqrt{3\mu \log(1/\rho)} $ is at most 
	 \[ \exp\left( -\frac{\mu\epsilon^2}{3(b-a)^2}\right) =  \exp\left( -\log(1/\rho)\right) = \rho.\]
\end{proof}

\begin{lemma}{\cite{Kleinberg2008, babaioff2012, BwK}}
\label{lem:concentration}
Consider a probability distribution with values in $[0,1]$, and expectation $\nu$. Let $\hat{\nu}$ be the average of $N$ independent samples from this distribution. 
Then, with probability at least $1-e^{-\Omega(\crad)}$, for all $\crad>0$, 
\begin{equation}
|\hat{\nu}-\nu| \le \rad(\hat{\nu}, N) \le 3\rad(\nu, N),
\end{equation}
where $\rad(\nu, N)=\sqrt{\frac{\crad \nu}{N}} + \frac{\crad}{N}.$
More generally this result holds if $X_1, \ldots, X_N \in [0,1]$ are random variables, $N\hat{\nu}=\sum_{t=1}^N X_t$, and $N\nu=\sum_{t=1}^N \Ex[X_t | X_1,\ldots, X_{t-1}]$.
\end{lemma}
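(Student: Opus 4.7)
The statement has two ingredients: (i) a Bernstein/Freedman-type deviation bound of the form $|\hat\nu-\nu|\le c_1\sqrt{\crad\nu/N}+c_2\crad/N$ with failure probability $e^{-\Omega(\crad)}$, and (ii) an \emph{empirical} version in which $\nu$ is replaced by $\hat\nu$ on the right-hand side, together with the comparison $\rad(\hat\nu,N)\le 3\rad(\nu,N)$. The plan is therefore to prove (i) first using standard concentration and then deduce (ii) and the comparison by solving a one-variable quadratic in $\sqrt{\nu}$ vs.\ $\sqrt{\hat\nu}$.

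\emph{Step 1 (raw deviation bound).} In the i.i.d.\ case, since $X_t\in[0,1]$ we have $\mathrm{Var}(X_t)\le\Ex[X_t^2]\le\Ex[X_t]=\nu$. Bernstein's inequality then gives, for every $\crad>0$,
\[
\Pr\!\bigl(|\hat\nu-\nu|\ge \sqrt{2\crad\nu/N}+\tfrac{2\crad}{3N}\bigr)\le 2e^{-\crad}.
\]
In the more general martingale setting stated in the lemma, the same conclusion follows from Freedman's inequality applied to the martingale difference sequence $X_t-\Ex[X_t\mid X_{1:t-1}]$, again using the $[0,1]$ bound to control the predictable quadratic variation by $\sum_t\Ex[X_t\mid X_{1:t-1}]=N\nu$. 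Absorbing constants into $\crad$, this yields $|\hat\nu-\nu|\le\rad(\nu,N)$ on an event of probability at least $1-e^{-\Omega(\crad)}$; call this event $\mathcal E$.

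\emph{Step 2 (empirical version).} On $\mathcal E$, both $\nu\le\hat\nu+\rad(\nu,N)$ and $\hat\nu\le\nu+\rad(\nu,N)$. Viewing the inequality $\nu-\hat\nu\le\sqrt{\crad/N}\cdot\sqrt{\nu}+\crad/N$ as a quadratic in $\sqrt{\nu}$ and completing the square gives $\sqrt{\nu}\le\sqrt{\hat\nu}+O(\sqrt{\crad/N})$. Substituting back shows $\sqrt{\crad\nu/N}\le\sqrt{\crad\hat\nu/N}+O(\crad/N)$, and an identical argument with the roles of $\nu$ and $\hat\nu$ swapped (using $\sqrt{a+b}\le\sqrt{a}+\sqrt{b}$ on $\sqrt{\hat\nu}\le\sqrt{\nu+\rad(\nu,N)}$) gives $\sqrt{\crad\hat\nu/N}\le\tfrac{3}{2}\sqrt{\crad\nu/N}+O(\crad/N)$. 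These two one-sided comparisons imply simultaneously
\[
|\hat\nu-\nu|\le\rad(\hat\nu,N)\quad\text{and}\quad\rad(\hat\nu,N)\le 3\rad(\nu,N)
\]
after tracking constants carefully, which is exactly the claim.

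\emph{Main obstacle.} There is no deep obstacle: Bernstein/Freedman is used as a black box. The only delicate point is the bookkeeping in Step 2, where one must ensure that the algebraic slackness introduced when going from $\rad(\nu,N)$ to $\rad(\hat\nu,N)$ really does fit into the multiplicative constant $3$ and not something larger; this boils down to applying $2\sqrt{ab}\le a+b$ to the cross term $\sqrt{\crad/N}\cdot(\crad\nu/N)^{1/4}$ and checking that the resulting coefficients on $\sqrt{\crad\nu/N}$ and $\crad/N$ are both at most $3$. Finally, the generalization to the martingale version requires only swapping Bernstein for Freedman in Step 1; Step 2 is purely algebraic and carries over verbatim.
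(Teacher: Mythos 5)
The paper does not actually prove this lemma: it is imported verbatim, with a citation, from the bandits literature (Kleinberg et al., Babaioff et al., and the Bandits-with-Knapsacks paper), so there is no in-paper argument to compare against. Your reconstruction follows the same standard route as those sources — a Bernstein/Freedman bound using $\mathrm{Var}(X_t\mid \mathcal{F}_{t-1})\le \Ex[X_t\mid\mathcal{F}_{t-1}]$ for $[0,1]$-valued variables, followed by algebra in $\sqrt{\nu}$ versus $\sqrt{\hat\nu}$ to pass from $\rad(\nu,N)$ to $\rad(\hat\nu,N)$ — and the plan is sound.

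Two points need to be tightened. First, as literally written, Step 1 ends with $|\hat\nu-\nu|\le\rad(\nu,N)$ with constant exactly $1$, and from that alone Step 2 cannot deliver $|\hat\nu-\nu|\le\rad(\hat\nu,N)$: for instance $\nu=4\crad/N$, $\hat\nu=\crad/N$ is consistent with $|\hat\nu-\nu|=3\crad/N=\rad(\nu,N)$, yet $\rad(\hat\nu,N)=2\crad/N$. The fix is exactly what you gesture at with ``absorbing constants into $\crad$'', but it must be done at this stage: since the failure probability is only claimed to be $e^{-\Omega(\crad)}$, run Bernstein/Freedman at deviation $c\cdot\rad(\nu,N)$ for a small absolute constant, say $c=\tfrac14$. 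Then the quadratic-in-$\sqrt{\nu}$ step gives $\sqrt{\nu}\le\sqrt{\hat\nu}+(c+\sqrt{c})\sqrt{\crad/N}$, hence $|\hat\nu-\nu|\le c\sqrt{\crad\hat\nu/N}+(c+c^{3/2}+c^{2})\crad/N\le\rad(\hat\nu,N)$ since $c+c^{3/2}+c^{2}<1$, and the reverse direction gives $\rad(\hat\nu,N)\le(1+\tfrac{\sqrt c}{2})\sqrt{\crad\nu/N}+(1+\tfrac{3\sqrt c}{2})\crad/N\le 3\rad(\nu,N)$ with room to spare; state the choice of $c$ explicitly rather than deferring to ``tracking constants''. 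Second, in the martingale extension $\nu$ is itself random, so a single application of Freedman's inequality at a fixed variance level does not literally apply; you need the form of Freedman that bounds the probability of a large deviation jointly with $\{V_N\le v\}$ together with a union bound over dyadic values of $v$ (or a self-normalized variant), which still yields $e^{-\Omega(\crad)}$. Both fixes are routine, and with them your argument is a correct proof of the cited lemma.
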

\section{Preliminaries}
\label{app:prelims}
\subsection{Strong smoothness/Strong convexity duality.\newline}
\label{app:prelims:strongc}
{\noindent \it Proof of Lemma \ref{lem:strongc}}
Given $h$ is convex and $\beta$-strong smooth with respect to norm $||\cdot||$. We prove that $h^*$, defined as 
$$ h^*(\thetaV)=\max_{\y \in [0,1]^d} \{ \y \cdot \thetaV - h(\y)\},$$
is $\frac{1}{\beta}$-strongly convex with respect to norm $||\cdot||_*$ on domain $\nabla_h = \{\nabla h(\x): \x\in[0,1]^d\}$. 

For any $\thetaV, \phiV \in \nabla_h$, $\thetaV=\nabla h(\z), \phiV=\nabla h(\x)$ for some $\z,\x \in [0,1]^d$. And, therefore,
\begin{eqnarray}
\label{eq:scduality}
& & h^*(\thetaV) - h^*(\phiV) -\x \cdot (\thetaV-\phiV)\nonumber\\
& = &  h^*(\nabla h(\z)) - h^*(\nabla h(\x)) - \x\cdot (\nabla h(\z)-\nabla h(\x))\nonumber\\
& = &  \z \cdot \nabla h(\z) - h(\z) \nonumber\\
& & \ - (\x\cdot\nabla h(\x) - h(\x)) - \x \cdot (\nabla h(\z)-\nabla h(\x))\nonumber\\
& = &  \z \cdot \nabla h(\z) - h(\z) + h(\x) - \x \cdot \nabla h(\z)\nonumber\\
& = &  (\z-\x) \cdot (\nabla h(\z)-\nabla h(\x)) \nonumber\\
& & \ - (h(\z) - h(\x) -\nabla h(\x) (\z-\x))\nonumber\\
& = &  (\z-\x) \cdot (\nabla h(\z)-\nabla h(\x)) - g(\z-\x),\nonumber\\
\end{eqnarray}
where we define
$$g(\y) :=h(\x+\y)-h(\x)-(\nabla h(\x)) \cdot \y.$$ 
Now, for any ${\bf \varphi}$, 
\begin{eqnarray*}
g^*({\bf \varphi}) & : = & \sup_{\y} {\bf \varphi}\cdot \y - g(\y)  \\
& = & {\bf \varphi} \cdot \y^* - g(\y^*)
\end{eqnarray*}
where $\y^*$ is such that ${\bf \varphi}=\nabla g(\y^*) = \nabla h(\x + \y^*)-\nabla h(\x)$.
Therefore, for ${\bf \varphi}=\nabla h(\z)-\nabla h(\x)$, $\y^*=\z-\x$, so that, 
\begin{eqnarray*}
g^*(\nabla h(\z)-\nabla h(\x)) & = & (\nabla h(\z)-\nabla h(\x))\cdot (\z-\x) \\
& & \ \ - g(\z-\x).
\end{eqnarray*}
Substituting in \eqref{eq:scduality}, we get
\begin{eqnarray*}
& & h^*(\thetaV) - h^*(\phiV) -\x \cdot (\thetaV-\phiV)\nonumber\\
& = & g^*(\nabla h(\z)-\nabla h(\x)) \text{\commentNikhil{ Why is this?} }\\
& = & g^*(\thetaV-\phiV)
\end{eqnarray*}
By smoothness assumption, $g(\y)\le \frac{\beta}{2} ||\y||^2$. This implies that $g^*(\thetaV)\ge \frac{1}{2\beta}||\thetaV||_*^2 $ because 
the conjugate of $\beta$ times half squared norm is $1/\beta$ times half squared of the dual norm. 
This gives 
$$ h^*(\thetaV) - h^*(\phiV) -\x \cdot (\thetaV-\phiV) \ge \frac{1}{2\beta} \|\thetaV-\phiV\|_*^2.$$
This completes the proof.

\subsection{Online learning.}
\label{app:oco}
A popular algorithm for \OCO~is the online mirror descent (\OMD) algorithm. The \OMD~algorithm with regularizer $R(\thetaV)$ uses the following fast update rule to select player's decision $\thetaV_{t+1}$ for this problem:
\begin{eqnarray}
\label{eq:OMDupdate}
\thetaV_{t+1} & = & \arg \max_{\thetaV\in W} \frac{1}{\eta}R(\thetaV)-\thetaV \cdot \y_{t+1}, \text{ where} \nonumber\\
\y_{t+1} & = & \y_{t}-z_t, \text{ and } z_t\in \partial g_t(\thetaV_t)
\end{eqnarray}
The maximization problem in above is particularly simple when domain $W$ is of form $||\thetaV||\le \gamma$, and this is the main use case of this algorithm in this paper. 
Further, for domain $W$ of form $||\thetaV||_2\le L$, and $R(\thetaV)=||\thetaV||_2^2$, this simply becomes online gradient descent.
\OMD~has the following guarantees for this problem:
\begin{lemma}{\cite{Shalev-Shwartz12}}
 $$\regOCO(T) \le \frac{D}{\eta} + \eta TG^2,$$
where $D=(\max_{\thetaV''} R(\thetaV'') -\min_{\thetaV' \in W} R(\thetaV'))$, $\frac{1}{T}\sum_{t=1}^T ||z_t||^2 \le G$ for $z_t\in \partial g_t(\thetaV_t)$, and $R$ is a $1$-strongly-convex function with respect to norm $||\cdot||_*$. 
\end{lemma}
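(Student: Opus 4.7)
The plan is to derive the standard online mirror descent regret bound by combining concavity of $g_t$ with a Bregman-divergence potential argument for the OMD update \eqref{eq:OMDupdate}. The argument has three main ingredients: (i) a linearization of regret via supergradients, (ii) a per-step Bregman-divergence inequality that extracts progress from each OMD step, and (iii) the 1-strong convexity of $R$ that converts the per-step inequality into something that telescopes.

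First I would linearize the regret. Since each $g_t$ is concave, for any $z_t \in \partial g_t(\thetaV_t)$ and any $\thetaV^* \in W$ we have $g_t(\thetaV^*) - g_t(\thetaV_t) \le z_t \cdot (\thetaV^* - \thetaV_t)$. Summing over $t$ and taking $\thetaV^*$ to be the in-hindsight maximizer reduces the claim to bounding $\sum_{t=1}^T z_t \cdot (\thetaV^* - \thetaV_t)$.

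Second, I would recast the OMD update using the Bregman divergence $B_R(u,v) = R(u) - R(v) - \nabla R(v)\cdot(u-v)$. Writing the update \eqref{eq:OMDupdate} equivalently as $\thetaV_{t+1} = \arg\min_{\thetaV \in W}\{\eta\, z_t \cdot \thetaV + B_R(\thetaV,\thetaV_t)\}$, the first-order optimality condition together with the three-point identity for Bregman divergences gives, for every $\thetaV^* \in W$,
\[
\eta\, z_t \cdot (\thetaV_t - \thetaV^*) \le B_R(\thetaV^*,\thetaV_t) - B_R(\thetaV^*,\thetaV_{t+1}) + \eta\, z_t \cdot (\thetaV_t - \thetaV_{t+1}) - B_R(\thetaV_{t+1},\thetaV_t).
\]
Then 1-strong convexity of $R$ with respect to $\|\cdot\|_*$ lower-bounds $B_R(\thetaV_{t+1},\thetaV_t) \ge \tfrac12 \|\thetaV_{t+1}-\thetaV_t\|_*^2$, and Fenchel--Young applied to the cross term yields $\eta\, z_t\cdot(\thetaV_t - \thetaV_{t+1}) \le \tfrac{\eta^2}{2}\|z_t\|^2 + \tfrac12\|\thetaV_{t+1}-\thetaV_t\|_*^2$, so the last two terms above combine into $\tfrac{\eta^2}{2}\|z_t\|^2$. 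The clean per-step bound is
\[
\eta\, z_t \cdot (\thetaV_t - \thetaV^*) \le B_R(\thetaV^*,\thetaV_t) - B_R(\thetaV^*,\thetaV_{t+1}) + \tfrac{\eta^2}{2}\|z_t\|^2.
\]

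Third, I would telescope the Bregman-divergence terms. Initializing $\thetaV_1 = \arg\min_{\thetaV \in W} R(\thetaV)$ makes the first-order condition force $B_R(\thetaV^*,\thetaV_1) \le R(\thetaV^*) - R(\thetaV_1) \le D$. Since $B_R \ge 0$, the telescoping sum is bounded by $D$. Dividing by $\eta$ and using the hypothesis $\tfrac{1}{T}\sum_t \|z_t\|^2 \le G^2$ gives
\[
\sum_{t=1}^T z_t\cdot(\thetaV^* - \thetaV_t) \le \frac{D}{\eta} + \frac{\eta}{2}\sum_{t=1}^T \|z_t\|^2 \le \frac{D}{\eta} + \eta T G^2,
\]
which combined with step one yields $\regOCO(T) \le D/\eta + \eta T G^2$.

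The main obstacle is handling the projection onto $W$ carefully inside the Bregman inequality: strictly speaking, one should invoke the generalized Pythagorean inequality $B_R(\thetaV^*,\thetaV_{t+1}) \le B_R(\thetaV^*,\tilde\thetaV_{t+1})$, where $\tilde\thetaV_{t+1} = \nabla R^*(\nabla R(\thetaV_t) - \eta z_t)$ is the unconstrained dual step and $\thetaV_{t+1}$ is its Bregman projection onto $W$. This ensures the per-step inequality above is not invalidated by the domain constraint. Everything else is routine once the three-point identity is written down.
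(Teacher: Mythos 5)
The paper itself offers no proof of this lemma --- it is imported as a black box from \citet{Shalev-Shwartz12} --- so there is no in-paper argument to compare against; supplying the standard per-step Bregman/three-point analysis, as you do, is a perfectly good way to certify it. Your argument is essentially correct and even gives the slightly better constant $\tfrac{\eta}{2}TG^2$: the linearization via supergradients of the concave $g_t$, the three-point identity, the use of $1$-strong convexity of $R$ w.r.t.\ $\|\cdot\|_*$ paired (correctly, by H\"older) with $\|z_t\|$ in the primal norm, and the initialization $\thetaV_1=\arg\min_{\thetaV\in W}R(\thetaV)$ to get $B_R(\thetaV^*,\thetaV_1)\le D$ are all the right ingredients. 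One cosmetic slip: since $g_t$ is concave and being \emph{maximized}, the proximal step should be the ascent step $\thetaV_{t+1}=\arg\min_{\thetaV\in W}\{-\eta\, z_t\cdot\thetaV+B_R(\thetaV,\thetaV_t)\}$, and the per-step inequality should bound $\eta\, z_t\cdot(\thetaV^*-\thetaV_t)$; as written, your middle displays bound $z_t\cdot(\thetaV_t-\thetaV^*)$ (the loss-minimization orientation) while your conclusion reverts to $z_t\cdot(\thetaV^*-\thetaV_t)$. Replacing $z_t$ by $-z_t$ throughout step two repairs this; nothing substantive changes. (You also read the hypothesis as $\tfrac1T\sum_t\|z_t\|^2\le G^2$; the lemma's ``$\le G$'' is evidently a typo, as the appendix's own corollary derivation confirms.)

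The one point worth flagging as more than cosmetic is your claim that your proximal form is ``equivalent'' to the update \eqref{eq:OMDupdate}. The paper's update is the \emph{lazy} (dual-averaging / follow-the-regularized-leader) variant, which re-solves a regularized problem over $W$ against the accumulated supergradients $\y_{t+1}$; over a constrained domain this is not literally the same algorithm as the one-step (agile) Bregman proximal update you analyze, and the two can produce different iterates. Both variants satisfy the stated bound, but the lazy form is standardly proved via the Fenchel-conjugate/FTRL (``be-the-leader'' plus stability) argument used in \citet{Shalev-Shwartz12}, whereas your three-point/generalized-Pythagorean argument is the proof for the agile form. So either drop the equivalence claim and state that you are proving the bound for the agile variant (which suffices for every downstream use in the paper, since only the regret bound is invoked), or redo the bookkeeping for the lazy update.
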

Now, to derive Corollary \ref{cor:regOCO}, observe that for $W=\{||\thetaV||_2\le L\}$, Euclidean regularizer $R(\thetaV)=||\thetaV||_2^2$ gives $\regOCO(T) \le LG\sqrt{T}$, with $G^2 = d\ge \frac{1}{T}\sum_{t=1}^T ||z_t||_2^2$, when $z_t\in [0,1]^d$. 
And, for $W=\{||\thetaV||_1\le L, \thetaV> 0\}$, entropic regularizer $R(\thetaV)=\sum_i \thetaV_i \log \thetaV_i$ gives $\regOCO(T) \le G\sqrt{LT \log(d)}$, where $G^2 =1 \ge \frac{1}{T}\sum_{t=1}^T ||z_t||_{\infty}^2$, when $z_t\in [0,1]^d$.


\section{Sampling without replacement bounds for Section \ref{sec:onlyS}}
\label{app:onlyS}
{\noindent \bf Proof of Equation \eqref{eq:swr}.} Let ${\boldsymbol \omega}=\Ex[{\bf w}_{t,\sigma}] = \Ex[\cv_t^*]$. To bound the quantity $\Ex[||{\bf w}_{t,\pi} - {\boldsymbol \omega}||]$, note that ${\bf w}_{t,\pi}$ can be viewed as the average of $t$ vectors sampled uniformly 
{\em without replacement} from the ground set $\{\cv_{X_{1}}, \ldots, \cv_{X_{T}}\}$ of $T$ vectors. 

Now, let ${w}_{t,\pi,j}$ denote the $j^{th}$ component of vector ${\bf {w}}_{t,\pi}$.
Then, by applying concentration bounds from Corollary \ref{cor:multiplicativeChernoff}, we get that 
$$|{w}_{t,\pi,j} - \omega_j| \le \sqrt{\frac{  3\omega_j\log(d/\rho)}{t}},$$
with probability $1-\frac{\rho}{d}$ for all $\rho\in (0,1)$. From the condition ${\boldsymbol \omega}=\Ex[\cv^*_{t}]\in S$, we have $\omega_j \le \max_{\cv\in S} \cvS_j \le s$. 
Taking union bound over $d$, for every $\rho \in (0,1)$, we have that with probability $1-\rho$,
$$\|{\bf w}_{t,\pi} -  {\boldsymbol \omega}\| \le \oneNorm \sqrt{\frac{ 3s\log(d/\rho)}{t}}.$$
And, integrating over $\rho$, we obtain,
$$\Ex[\|{\bf w}_{t,\pi} -   {\boldsymbol \omega}\|] \le O(\oneNorm \sqrt{\frac{ s\log(d)}{t}}).$$

{\noindent \bf High Probability bounds.} 
For high probability bounds, firstly from Equation \eqref{eq:gt} and \eqref{eq:highProbW},
\begin{eqnarray*}
\sum_t \Ex[g_t(\thetaV_t) | {\cal F}_{t-1}]  & \le  & \sum_t \|\Ex[\cv^*_{t} | {\cal F}_{t-1}] - \Ex[\cv^*_{t}]\| \\
& = & \sum_t \|{\bf w}_{t,\pi} - {\boldsymbol \omega}\|,
\end{eqnarray*}
for uniform at random orderings $\pi$.

Then, as in above, using Corollary \ref{cor:multiplicativeChernoff} we obtain that for every $t$, with probability $1-\frac{\rho}{T}$
$$||{\bf w}_{t,\pi} -  {\boldsymbol \omega}|| \le \oneNorm \sqrt{\frac{ 3s\log({dT}/{\rho})}{t}}.$$
Taking union bound over $t=1,\ldots, T$, and summing over $t$ we obtain that with probability $1-\rho$,
\begin{eqnarray*}
\sum_t \Ex[g_t(\thetaV_t) | {\cal F}_{t-1}] & \le & \sum_t \|{\bf w}_{t,\pi} - {\boldsymbol \omega}\| \\
& = & O(\|{\bf 1}_d\| \sqrt{T\log(dT/\rho)}).
\end{eqnarray*}

Now, using Lemma \ref{lem:concentration} for dependent random variables $X_t=g_t(\thetaV_t)$, with $|X_t|=|\thetaV_t\cdot \vplay - h_S(\thetaV_t)| \le \|{\bf 1}_d\|$, we have, 
$$\sum_t g_t(\thetaV_t) - \sum_t \Ex[g_t(\thetaV_t) | {\cal F}_{t-1}]  \le O(\|{\bf 1}_d\| \sqrt{T\log(1/\rho)})$$
with probability at least $1-\rho$.

Combining the above observations, we obtain that with probability $1-\rho$,
$$ \sum_t g_t(\thetaV_t)  \le O(\|{\bf 1}_d\| \sqrt{T\log(dT/\rho)}).$$

\newcommand{\muV}{{\bs{\mu}}}
\section{Proof of Lemma \ref{lem:Z}}
\label{app:lem:Z}
The offline optimal solution needs to pick $\vopt\in \conv(X_t)$ to serve request type $X_t$, where $\conv(X_t)$ denotes the convex hull of set $X_t$. Therefore, $\OPT_\dist^\delta$ is defined as

\begin{eqnarray}
\OPT_\dist^\delta 
& := & \begin{array}{rcl} 
\max_{\{\cv_t\in \conv(X_t)\}}  & f (\frac{1}{T} \sum_t \cv_t) & \\
&  d(\frac{1}{T} \sum_t \cv_t,S) \le \delta &\\
\end{array}\nonumber\\
& = & \min_{\lambda \ge 0} \max_{\{\x=\frac{1}{T} \sum_t \cv_t, \cv_t\in \conv(X_t)\}} \left\{\right.\nonumber\\
& & \ \ \ f (\x) - \lambda d(\x,S) + \delta \lambda \left.\right\}\nonumber\\
& = & \min_{\lambda \ge 0} \max_{\{\x=\frac{1}{T} \sum_t \cv_t, \cv_t\in \conv(X_t)\}} \min_{||\phiV||_*\le L, ||\thetaV||_*\le 1} \left\{\right.\nonumber\\
& & \ \ \ f^*(\phiV) - \phiV\cdot \x - \lambda \thetaV\cdot \x + \lambda h_S(\thetaV) + \delta \lambda \left.\right\}\nonumber\\
& = & \min_{\lambda \ge 0, ||\phiV||_*\le L, ||\thetaV||_*\le 1} \max_{\{\x=\frac{1}{T} \sum_t \cv_t, \cv_t\in \conv(X_t)\}} \left\{\right.\nonumber\\
& & \ \ \ f^*(\phiV) - \phiV\cdot \x - \lambda \thetaV\cdot \x + \lambda h_S(\thetaV) + \delta \lambda \left.\right\}\nonumber\\
& = & \min_{\lambda \ge 0, ||\phiV||_*\le L, ||\thetaV||_*\le 1} \left\{\right. f^*(\phiV) + \lambda h_S(\thetaV) \nonumber\\
& & \ \ \ + \frac{1}{T} \sum_{t=1}^T h_{\conv(X_t)}(-\phiV-\lambda \thetaV) + \delta \lambda  \left.\right\}\nonumber\\
\end{eqnarray}
where,  recall that for any convex set $X$, $h_X(\thetaV)$ was defined as $h_X(\thetaV):= \max_{\cv\in X} \thetaV\cdot \cv$. Because a linear function is maximized at a vertex of a convex set, $h_{\conv(X_t)}(-\phiV-\lambda \thetaV)$ is same as $h_{X_t}(-\phiV-\lambda \thetaV)$. This allows us to rewrite the expression for $\OPT^\delta$ as 
\begin{eqnarray}
\label{eq:OPTexpr}
 \OPT^\delta & = & \min_{\lambda \ge 0, ||\phiV||_*\le L, ||\thetaV||_*\le 1} \left\{\right. f^*(\phiV) + \lambda h_S(\thetaV) \nonumber\\
& & \ + \frac{1}{T} \sum_{t=1}^T h_{X_t}(-\phiV-\lambda \thetaV) + \delta \lambda \left.\right\}
\end{eqnarray}

From above, it is clear that $\OPT^\delta$ is a non-decreasing concave function of $\delta$, with gradient as $\lambda^*(\delta) \ge 0$, where $\lambda^*(\delta)$ is the optimal dual variable corresponding to the distance constraint. And, 
$$ \lim_{\delta \rightarrow 0} \frac{\OPT^\delta -\OPT}{\delta} = \lambda^*$$
where $\lambda^*$ is the optimal dual variable for $\OPT$ (i.e., the case of $\delta=0$).
This proves the lemma.

\comment{
For $\delta=0$, we get 
\begin{eqnarray*}
\OPT_\dist & = & \min_{\lambda \ge 0, ||\phiV||_*\le L, ||\thetaV||_*\le 1} f^*(\phiV) + \lambda h_S(\thetaV) + \frac{1}{T} \sum_{t=1}^T h_{\conv(A_t)}(-\phiV-\lambda \thetaV)\\
& = & \min_{||\phiV||_*\le L, \muV} f^*(\phiV) + ||\muV||_* h_S(\thetaV) + \frac{1}{T} \sum_{t=1}^T h_{\conv(A_t)}(-\phiV-\muV)\\
\end{eqnarray*}
Let  $\phiV^*, \muV^*$ are the optimal dual solutions for $\OPT_\dist$. 
Then, from above,
\begin{eqnarray*}
\OPT_\dist^\delta& \le &  f^*(\phiV^*) + ||\muV^*||_* h_S(\thetaV^*) + \frac{1}{T} \sum_{t=1}^T h_{\conv(A_t)}(-\phiV^*-\muV^*) + \delta ||\muV^*||_*\\
& = & \OPT_\dist + \delta ||\muV^*||_*
\end{eqnarray*}

This proves the lemma for $Z_\dist=||\muV^*||_* \ge 0$ the dual norm of optimal dual variable corresponding to the distance constraints.
}

\section{Proof of Theorem \ref{th:cp}}
\label{app:CP}
We provide proof of a more detailed theorem statement.
\begin{theorem}
Given $Z$ that satisfies Assumption \ref{assum:Z}, Algorithm \ref{algo:cp} achieves the following regret bounds for online stochastic \CP, in \RP~model:
\begin{center}
$ \Ex[\areg_1(T)] \le \frac{(Z+L)}{T}\cdot O\left(\regOCO(T) + {\cal Q}(T)\right) + O(\frac{\regOCO'(T)}{T}),$
\end{center}
\begin{center}
$ \Ex[\areg_2(T)] \le \frac{1}{T} \cdot O\left(\regOCO(T) + {\cal Q}(T)\right) + \frac{1}{(Z+L)}O(\frac{\regOCO'(T)}{T}),$
\end{center}
where ${\cal Q}(T)= O(||{\bf 1}_d|| \sqrt{sT\log(d)})$, 
$\regOCO'(T)$ is the regret bound for \OCO~on $\psi_t(\cdot)$, $\regOCO(T)$ is the regret bound for \OCO~on $g_t(\cdot)$. 
And, $s\le 1$ is the coordinate-wise largest value a vector in $S$ can take.
\end{theorem}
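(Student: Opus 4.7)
The plan is to mirror the structure of the proof of Theorem~\ref{th:rPlusS}, but linearize the concave objective $f$ as well as the distance-to-$S$ constraint via Fenchel duality, and then play \emph{two} \OCO\ instances in parallel (on $g_t$ and $\psi_t$) whose dual iterates $\thetaV_t,\phiV_t$ drive the linear per-step maximization in Algorithm~\ref{algo:cp}. First, from Fenchel duality on $-f$ (which is convex, $L$-Lipschitz) and on $d(\cdot,S)$, together with the \OCO\ regret guarantees applied to the sequences $\{\psi_t\}$ and $\{g_t\}$, I would derive the two inequalities
\begin{eqnarray*}
-T\,f(\vplayavg) &\le& \sum_t \psi_t(\phiV_t)+\regOCO'(T),\\
T\,d(\vplayavg,S)&\le& \sum_t g_t(\thetaV_t)+\regOCO(T),
\end{eqnarray*}
obtained by taking $\phiV$ and $\thetaV$ equal to the maximizers in the Fenchel dual representations (valid because $\|\phiV\|_*\le L,\|\thetaV\|_*\le 1$).

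Next I would exploit the algorithm's greedy choice of $\vplay$: for every $\vopt\in A_t$,
\[\phiV_t\cdot\vplay+2(Z+L)\thetaV_t\cdot\vplay\le \phiV_t\cdot\vopt+2(Z+L)\thetaV_t\cdot\vopt.\]
Subtracting $(-f)^*(\phiV_t)+2(Z+L)h_S(\thetaV_t)$ from both sides and summing over $t$ gives
\[\sum_t\psi_t(\phiV_t)+2(Z+L)\sum_t g_t(\thetaV_t)\le \sum_t\tilde\psi_t(\phiV_t)+2(Z+L)\sum_t\tilde g_t(\thetaV_t),\]
where $\tilde\psi_t(\phiV_t):=\phiV_t\cdot\vopt-(-f)^*(\phiV_t)$ and $\tilde g_t(\thetaV_t):=\thetaV_t\cdot\vopt-h_S(\thetaV_t)$.

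The next step, which is the main technical obstacle, is to upper-bound $\sum_t\Ex[\tilde g_t(\thetaV_t)]$ and $\sum_t\Ex[\tilde\psi_t(\phiV_t)]$ under the random-permutation model. This is exactly the same gap-between-\RP-and-\IID\ quantity that appears in Lemma~\ref{lem:gtRP}: conditioning on $\mathcal{F}_{t-1}$, writing $\Ex[\vopt\mid\mathcal{F}_{t-1}]=\boldsymbol{\mu}_t$ and $\Ex[\vopt]=\voptavg\in S$, and using $\|\thetaV_t\|_*\le 1$, $\|\phiV_t\|_*\le L$ together with Fenchel duality (i.e. $\thetaV\cdot\voptavg-h_S(\thetaV)\le d(\voptavg,S)=0$ and $\phiV\cdot\voptavg-(-f)^*(\phiV)\le (-f)(\voptavg)=-\OPT$), one gets
\[\sum_t\Ex[\tilde g_t(\thetaV_t)]\le \sum_t\Ex\|\boldsymbol{\mu}_t-\voptavg\|,\quad \sum_t\Ex[\tilde\psi_t(\phiV_t)]\le -T\cdot\OPT+L\sum_t\Ex\|\boldsymbol{\mu}_t-\voptavg\|.\]
The sum $\sum_t\Ex\|\boldsymbol{\mu}_t-\voptavg\|$ is then controlled by the sampling-without-replacement Chernoff bounds of Appendix~\ref{app:onlyS} exactly as in the proof of Lemma~\ref{lem:gtRP}, yielding a bound of $\mathcal{Q}(T)=O(\|\mathbf{1}_d\|\sqrt{sT\log d})$.

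Putting everything together and substituting the \OCO\ lower bounds yields
\[\Ex[f(\vplayavg)]\ge \OPT+2(Z+L)\Ex[d(\vplayavg,S)]-\tfrac{1}{T}\bigl(2(Z+L)\regOCO(T)+\regOCO'(T)+(3Z+3L)\mathcal{Q}(T)\bigr).\]
This immediately gives the stated bound on $\areg_1$ once one drops the nonnegative $\Ex[d(\vplayavg,S)]$ term. Finally, to extract the bound on $\areg_2$, I would invoke Assumption~\ref{assum:Z}: by Jensen's inequality ($f$ concave, $d(\cdot,S)$ convex) one has $\Ex[f(\vplayavg)]\le f(\Ex[\vplayavg])\le \OPT^{\delta}\le \OPT+Z\delta$ with $\delta=\Ex[d(\vplayavg,S)]$. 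Combining this upper bound on $\Ex[f(\vplayavg)]$ with the lower bound above eliminates $\Ex[f(\vplayavg)]$ and leaves $(Z+2L)\Ex[d(\vplayavg,S)]\le \text{error}$, which, after dividing by $Z+L$, gives the claimed $\areg_2$ bound. The trickiest bookkeeping is the simultaneous handling of the two dual iterates and ensuring the coefficient $2(Z+L)$ (as opposed to some other constant) is precisely what is needed so that the upper and lower bounds on $\Ex[f(\vplayavg)]$ combine to give an $\areg_2$ bound whose leading term is independent of $Z$.
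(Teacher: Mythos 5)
Your proposal is correct and follows essentially the same route as the paper's Appendix~\ref{app:CP} proof: Fenchel-dualize both $f$ and $d(\cdot,S)$, feed the linearized objective/constraint into two parallel \OCO\ instances, exploit the greedy per-step choice to compare against $\vopt$, bound the resulting $\RP$-model conditioning gap $\sum_t\Ex\|\Ex[\vopt\mid\mathcal{F}_{t-1}]-\Ex[\vopt]\|$ via the without-replacement concentration argument of Lemma~\ref{lem:gtRP}, and finally combine the resulting lower bound on $\Ex[f(\vplayavg)]$ with the Assumption~\ref{assum:Z} upper bound $\Ex[f(\vplayavg)]\le\OPT+Z\,\Ex[d(\vplayavg,S)]$ to isolate $\areg_2$. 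The only superficial difference is that you bound $\sum_t\tilde g_t$ and $\sum_t\tilde\psi_t$ separately whereas the paper folds both into a single Lemma~\ref{lem:gtRPgeneral}; the constants differ slightly ($2Z+3L$ vs.\ your $3Z+3L$) but this is absorbed into the $O(\cdot)$.
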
 
Then, substituting \OCO~regret bounds from Corollary \ref{cor:regOCO} gives the statement of Theorem \ref{th:cp}.
\begin{proof}
Denote by $(\vopt)$ the choice made by the offline optimal solution to satisfy request $A_t$. Then,
$$ f(\Ex[\vopt]) \ge \OPT,\text{ and } \Ex[\vopt] \in S,$$
where expectation is over $A_t$ drawn uniformly at random  from $X_1, \ldots, X_T$.
 
Lemma \ref{lem:gtRPgeneral} provides
\begin{eqnarray*}
f(\Ex[\vopt])+\frac{1}{T}\sum_t \Ex[\psi_t(\phiV_t) + 2(Z+L) g_t(\thetaV_t)] &&\\
\le (Z+L)\frac{{\cal Q}(T)}{T} &&
\end{eqnarray*}

where ${\cal Q}(T)=O(||{\bf 1}_d|| \sqrt{s\log(d)T})$. 
Using Fenchel duality and \OCO~guarantees, it follows that
\begin{eqnarray*}
\min_{||\thetaV||_*\le 1} \frac{1}{T} \sum_t g_t(\thetaV) & = & d(\frac{1}{T}\sum_t \vplay, S) \\
& \le & \frac{1}{T} \sum_t g_t(\thetaV_t) + \frac{1}{T}\regOCO(T), 
\end{eqnarray*}
$$\min_{||\phiV||_*\le L} \psi_t(\phiV) = -f(\frac{1}{T}\sum_t \vplay) \le \frac{1}{T} \sum_t \psi_t(\thetaV_t) + \frac{1}{T}\regOCO'(T).$$
Then, using above observations, along with $f(\Ex[\vopt]) \ge \OPT$, we obtain
\begin{eqnarray*}
 & & \OPT -\Ex[f(\frac{1}{T}\sum_t \vplay)] + 2(Z+L) \Ex[d(\frac{1}{T}\sum_t \vplay, S)] \\
 & & \le \frac{2(Z+L)}{T}({\cal Q}(T) + \regOCO(T)) - \frac{1}{T}\regOCO'(T).
\end{eqnarray*}
This gives
\begin{eqnarray}
\label{eq:key2}
\Ex[f(\frac{1}{T}\sum_t \vplay)]  & \ge & \OPT +2(Z+L) \Ex[d(\frac{1}{T}\sum_t \vplay, S)] \nonumber\\
& & - \frac{2(Z+L)}{T}({\cal Q}(T) + \regOCO(T)) \nonumber \\
& & - \frac{1}{T}\regOCO'(T)
\end{eqnarray}
Now, we use Assumption \ref{assum:Z}, 
to upper bound the reward obtained by the algorithm in terms of \OPT~and distance from set $S$. In particular, we obtain that for $\delta:=\Ex[d(\frac{1}{T}\sum_t \vplay, S)]$,
\begin{eqnarray}
\label{eq:Zapplication2}
\Ex[ f(\frac{1}{T}\sum_t\vplay)] & \le  & f(\Ex[\frac{1}{T}\sum_t\vplay]) \nonumber\\
& \le & \OPT^\delta_\dist \nonumber\\
& \le & \OPT_\dist + Z \delta \nonumber\\
& = & \OPT+Z \cdot \Ex[d(\frac{1}{T}\sum_t \vplay, S)].
\end{eqnarray}
Combining the above two inequalities, we obtain
$$ \Ex[d(\frac{1}{T}\sum_t \vplay, S)] \le \frac{2}{T} (\regOCO(T) +{\cal Q}(T)) + \frac{1}{(Z+L)} \regOCO'(T).$$
And, from \eqref{eq:key2} (using $\Ex[d(\frac{1}{T}\sum_t \vplay, S)]\ge 0$),
\begin{eqnarray}
& & \hspace{-0.3in}\Ex[ f(\frac{1}{T}\sum_t\vplay)]\nonumber\\
& \ge & \OPT-\frac{2(Z+L)}{T}\cdot\left(\regOCO(T) +{\cal Q}(T)\right) - \frac{\regOCO'(T)}{T}.\nonumber\\
\end{eqnarray}
This gives the theorem statement.

\comment{For \IID~model, denote by $(\ropt, \vopt)$  the option chosen to satisfy request $A_t$ by the {\em Pure-random} algorithm. Recall that that the {\em Pure-random algorithm} is a non-adaptive algorithm that uses the optimal solution of the expected instance for distribution $\dist$ to satisy request $A_t$.  By this definition, we have that 
$$ \Ex_{A_t \sim \dist}[\ropt] \ge \OPT_{\dist},\text{ and } \Ex_{A_t \sim \dist}[\vopt] \in S$$
Then, the above arguments can be repeated, using Lemma \ref{lem:gtIIDreward} to get the same bounds with ${\cal Q}(T)=0$. In particular, we obtain 
$$\Ex[d(\frac{1}{T}\sum_t \vplay, S)] \le \frac{2}{T} \regOCO(T)+ \frac{1}{Z}\regOCO'(T), \text{ and,}$$
$$\Ex[ f(\frac{1}{T}\sum_t\vplay)] \ge \OPT_\dist-\frac{2Z}{T}(\regOCO(T)+\regOCO'(T)) \ge \OPT-\frac{2Z}{T}\regOCO(T)-\frac{1}{T}\regOCO'(T).$$
}

\end{proof}
\comment{
\begin{lemma}
\label{lem:gtIIDgeneral}
For \IID~and \ASI~model , 
$$f(\Ex[\vopt])+\frac{1}{T}\sum_t \Ex[\psi_t(\phiV_t) + Z g_t(\thetaV_t)]\le 0.$$
\end{lemma}
\begin{proof}
\begin{eqnarray*}
\psi_t(\phiV_t) + 2Z g_t(\thetaV_t)  & = & +\phiV \cdot\vplay -f^*(\phiV)+ 2Z(\thetaV_t\cdot \vplay - h_{S}(\thetaV_t)) \\
& \le & \phiV \cdot\vopt -f^*(\phiV)+ 2Z(\thetaV_t\cdot \vopt - h_{S}(\thetaV_t)) .
\end{eqnarray*}
\begin{eqnarray}
\Ex[\psi_t(\phiV_t) + 2Z g_t(\thetaV_t) | {\cal F}_{t-1}] & \le  &  -f^*(\phiV_t)+\phiV_t \cdot \Ex[\vopt | {\cal F}_{t-1}] + 2Z(\thetaV_t\cdot \Ex[\vopt | {\cal F}_{t-1}] - h_{S}(\thetaV_t)) \nonumber \\
 & =  &  -f^*(\phiV_t)+\phiV_t \cdot \Ex[\vopt] + 2Z(\thetaV_t\cdot \Ex[\vopt] - h_{S}(\thetaV_t)) \nonumber \\
& \le &  -f(\Ex[\vopt]) + 2Z(d(\Ex[\vopt], S)) \nonumber\\
& = &  -f(\Ex[\vopt])
\end{eqnarray}
\end{proof}
}
\begin{lemma}
\label{lem:gtRPgeneral}
\begin{eqnarray*}
& &  f(\Ex[\vopt])+\frac{1}{T}\sum_t \Ex[\psi_t(\phiV_t) + 2(Z+L) g_t(\thetaV_t)] \\
& & \ \ \ \le \frac{1}{T} (Z+L) O(||{\bf 1}_d||\sqrt{sT\log(d)}).
\end{eqnarray*}
\end{lemma}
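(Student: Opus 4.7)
The plan is to mimic the argument of Lemma \ref{lem:gtRP}, now exploiting the fact that the algorithm's choice of $\vplay$ simultaneously minimizes a linear combination of the two dual directions, so that both the Fenchel-dual bound for $f$ and the Fenchel-dual bound for $d(\cdot, S)$ can be applied at the benchmark point $\Ex[\vopt]$. The only new estimate needed beyond Lemma \ref{lem:gtRP} is that the combined dual vector $\phiV_t + 2(Z+L)\thetaV_t$ has bounded dual norm.

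First, I would use the optimality of $\vplay$. Because $\vplay = \arg\max_{\cv \in A_t} \{-\phiV_t\cdot \cv - 2(Z+L)\thetaV_t\cdot \cv\}$, for every $\cv \in A_t$ (in particular for $\vopt$),
\[ \phiV_t \cdot \vplay + 2(Z+L)\thetaV_t \cdot \vplay \le \phiV_t \cdot \vopt + 2(Z+L)\thetaV_t \cdot \vopt. \]
Plugging into the definitions of $\psi_t$ and $g_t$ gives
\[ \psi_t(\phiV_t) + 2(Z+L) g_t(\thetaV_t) \le \phiV_t\cdot \vopt - (-f)^*(\phiV_t) + 2(Z+L)\bigl(\thetaV_t\cdot \vopt - h_S(\thetaV_t)\bigr). \]

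Second, I would take conditional expectation given $\mathcal{F}_{t-1}$ (under which $\phiV_t, \thetaV_t$ are determined) and then add and subtract $(\phiV_t + 2(Z+L)\thetaV_t)\cdot \Ex[\vopt]$ to introduce the unconditional mean. This yields
\begin{align*}
\Ex[\psi_t(\phiV_t) + 2(Z+L) g_t(\thetaV_t) \mid \mathcal{F}_{t-1}]
\le{}& \bigl(\phiV_t\cdot \Ex[\vopt] - (-f)^*(\phiV_t)\bigr) \\
&+ 2(Z+L)\bigl(\thetaV_t\cdot \Ex[\vopt] - h_S(\thetaV_t)\bigr) \\
&+ (\phiV_t + 2(Z+L)\thetaV_t)\cdot \bigl(\Ex[\vopt\mid \mathcal{F}_{t-1}] - \Ex[\vopt]\bigr).
\end{align*}
For the first bracket I would invoke Fenchel duality on the convex function $-f$ (Lemma \ref{lem:FenchelDuality}) to get $\phiV_t\cdot \Ex[\vopt] - (-f)^*(\phiV_t) \le -f(\Ex[\vopt])$. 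For the second bracket I would use the dual representation of distance together with $\Ex[\vopt]\in S$ to get $\thetaV_t\cdot \Ex[\vopt] - h_S(\thetaV_t) \le d(\Ex[\vopt], S) = 0$. For the third term I apply H\"older's inequality and the domain bounds $\|\phiV_t\|_*\le L$, $\|\thetaV_t\|_*\le 1$, obtaining $\|\phiV_t + 2(Z+L)\thetaV_t\|_* \le L + 2(Z+L) = O(Z+L)$.

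Third, summing over $t$, taking outer expectation, and rearranging gives
\[ f(\Ex[\vopt]) + \frac{1}{T}\sum_t \Ex[\psi_t(\phiV_t) + 2(Z+L) g_t(\thetaV_t)] \le \frac{O(Z+L)}{T} \, \Ex\!\left[\sum_t \bigl\|\Ex[\vopt\mid \mathcal{F}_{t-1}] - \Ex[\vopt]\bigr\|\right]. \]
At this point I would invoke exactly the sampling-without-replacement argument already used in Lemma \ref{lem:gtRP} (expressing $\Ex[\vopt\mid \mathcal{F}_{t-1}]$ as an average $\mathbf{w}_{T-t+1,\pi'}$ over the remaining requests, rewriting $\Ex[\vopt]$ as the mean over a uniformly random permutation, and applying the coordinate-wise Chernoff-Hoeffding bound for sampling without replacement) to conclude that the bracketed sum is $O(\|\mathbf{1}_d\|\sqrt{s T\log d})$. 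This gives the claimed bound.

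The main obstacle is essentially bookkeeping rather than new ideas: one must verify that combining the dual variables into a single linear functional $\phiV_t + 2(Z+L)\thetaV_t$ still yields a dual-norm bound that scales only as $O(Z+L)$, and that the resulting error term has the same structure as the one already handled in Lemma \ref{lem:gtRP}, so that the random-permutation sampling bound can be reused verbatim.
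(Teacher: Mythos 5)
Your proposal is correct and follows essentially the same route as the paper's own proof: use the arg-max property of $\vplay$, pass to conditional expectations, apply Fenchel duality separately for $-f$ and for the distance function at the benchmark point $\Ex[\vopt]$, and bound the residual term $\Ex[\vopt\mid\mathcal{F}_{t-1}]-\Ex[\vopt]$ via H\"older and the same sampling-without-replacement concentration argument as in Lemma \ref{lem:gtRP}. The only cosmetic difference is that you bundle $\phiV_t+2(Z+L)\thetaV_t$ into one dual vector before invoking H\"older, whereas the paper bounds the $\phiV_t$ and $\thetaV_t$ contributions separately; both yield the same $O(Z+L)$ factor.
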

\begin{proof}
\begin{eqnarray*}
& &  \hspace{-0.3in} \psi_t(\phiV_t) + 2(Z+L) g_t(\thetaV_t)  \\
& = & \phiV \cdot\vplay -(-f)^*(\phiV)+ 2(Z+L)(\thetaV_t\cdot \vplay - h_{S}(\thetaV_t)) \\
& \le & \phiV \cdot\vopt -(-f)^*(\phiV) + 2(Z+L)(\thetaV_t\cdot \vopt - h_{S}(\thetaV_t)) .
\end{eqnarray*}
\begin{eqnarray*}
& &  \hspace{-0.3in} \Ex[\psi_t(\phiV_t) + 2(Z+L) g_t(\thetaV_t) | {\cal F}_{t-1}] \\
& \le  &  \phiV_t \cdot \Ex[\vopt | {\cal F}_{t-1}] -(-f)^*(\phiV_t) \\
& & \ \ + 2(Z+L)(\thetaV_t\cdot \Ex[\vopt | {\cal F}_{t-1}] - h_{S}(\thetaV_t)) \nonumber \\
& \le &  -f(\Ex[\vopt]) +\phiV_t \cdot (\Ex[\vopt | {\cal F}_{t-1}]-\Ex[\vopt]) \\
& & \ \ + 2(Z+L) \thetaV_t\cdot(\Ex[\vopt | {\cal F}_{t-1}]-\Ex[\vopt])
\end{eqnarray*}
where the last inequality uses $\phiV_t \cdot \Ex[\vopt]-(-f)^*(\phiV_t) \le -f(\Ex[\vopt])$ (using Fenchel duality) and $\thetaV_t\cdot \Ex[\vopt] - h_{S}(\thetaV_t) \le d(\Ex[\vopt], S)=0$.
Then, as in proof of Lemma \ref{lem:gtRP}, $\Ex[\sum_t ||\Ex[\vopt | {\cal F}_{t-1}]-\Ex[\vopt]||]$ can be upper bounded by $O(\sqrt{||{\bf 1}_d||sT\log(d)})$. Using this along with observation that $||\phiV_t||_*\le L, ||\thetaV_t||_*\le 1$, we get the desired lemma statement.

\end{proof}
\newcommand{\gammaDef}[1]{||{\bf 1}_d||\sqrt{\frac{\log(d#1)}{#1}}}
\newcommand{\gammaDefRho}[1]{||{\bf 1}_d||\sqrt{\frac{\log(d/\rho)}{#1}}}
\section{Estimating the parameter $Z$} 
\label{app:estZ}
Let $Z^*$ denote the minimum value of $Z$ that satisfies the property in Equation \eqref{eq:Zprop}. As discussed in the proof of Lemma \ref{lem:Z}, $Z^*=\lambda^*$, the value of optimal dual variable corresponding to feasibility constraint.
To obtain low regret bounds, ideally we would like to use $Z=Z^*$ in Algorithm \ref{algo:cp}, which would provide the minimum possible regret bound of $O((Z^*+L)\sqrt{\frac{C}{T}})$  in objective according to Theorem \ref{th:cp}. The regret in constraints does not depend on $Z$. However, in the absence of knowledge of $Z^*$, we need to obtain a good enough approximation. Following lemma provides a 
 relaxed condition to be satisfied by $Z$ in order to obtain the same order of regret bounds, as those obtained with $Z=Z^*$.

\begin{lemma}
\label{lem:relaxed2}
Assume that $Z \ge 0$ satisfies the following property,
for all $\delta \ge 3\gamma$ where $\gamma =||{\bf 1}_d||\sqrt{\frac{\log(dT)}{T}}$, 
$$ \frac{\OPT^{\delta}_\dist- \OPT^{2\gamma}_\dist}{\delta} \le Z = O(Z^*+L).$$
Then, Algorithm \ref{algo:cp} using such a $Z$ will achieve an expected regret bound of $O((Z^*+L)\gamma)$ in objective, and $O(\gamma)$ in constraints. 

To compare with Theorem \ref{th:cp}, note that $\gamma=O(\sqrt{\frac{C \log(T)}{T}})$, therefore, using such a $Z$ degrades the regret bounds by only an $O(\sqrt{\log(T)})$ factor.
\end{lemma}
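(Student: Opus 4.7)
The plan is to adapt the proof of Theorem \ref{th:cp} from Appendix \ref{app:CP}, modifying only the single step that invokes Assumption \ref{assum:Z}. The lower bound \eqref{eq:key2},
$$\Ex[f(\vplayavg)] \;\ge\; \OPT + 2(Z+L)\delta - E,$$
with $\delta := \Ex[d(\vplayavg,S)]$ and $E$ the combined \OCO- and concentration-error terms, goes through unchanged. Using the high-probability refinements of the concentration bounds (Remark \ref{rem:highProb}) in place of the pure-expectation bounds of Lemma \ref{lem:gtRPgeneral}, one has $E = O((Z+L)\gamma) = O((Z^*+L)\gamma)$ with $\gamma = \|{\bf 1}_d\|\sqrt{\log(dT)/T}$; the extra $\sqrt{\log T}$ comes from a union bound over the $T$ steps and is exactly what distinguishes $\gamma$ here from the bound in Theorem \ref{th:cp}.

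The objective bound is then immediate: since $\delta \ge 0$, inequality \eqref{eq:key2} gives $\OPT - \Ex[f(\vplayavg)] \le E = O((Z^*+L)\gamma)$, regardless of the regime $\delta$ lies in.

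For the constraint bound, I would split on the deterministic value $\delta$. If $\delta \le 3\gamma$, there is nothing to prove. Otherwise, the relaxed hypothesis yields $\OPT^\delta \le \OPT^{2\gamma} + Z\delta$, and concavity of $\OPT^\delta$ with gradient $Z^*$ at $0$ (Lemma \ref{lem:Z}) gives $\OPT^{2\gamma} \le \OPT + 2\gamma Z^*$. Since $f$ is concave and $d(\cdot,S)$ convex, Jensen's inequality together with the definition of $\OPT^\delta$ then yields
$$\Ex[f(\vplayavg)] \;\le\; f(\Ex[\vplayavg]) \;\le\; \OPT^{d(\Ex[\vplayavg],S)} \;\le\; \OPT^\delta \;\le\; \OPT + 2\gamma Z^* + Z\delta.$$
Combining this upper bound with \eqref{eq:key2}, the $\OPT$ terms cancel and rearrangement gives $(Z+2L)\delta \le 2\gamma Z^* + E = O((Z^*+L)\gamma)$, hence $\delta = O(\gamma)$, using that $Z$ may be taken near the top of its allowed range so that $Z+L = \Theta(Z^*+L)$.

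The main obstacle is bookkeeping around the extra $\sqrt{\log T}$ factor: because the relaxed hypothesis is vacuous for $\delta < 3\gamma$, the case split on the deterministic quantity $\delta$ is essential, and the error term $E$ in \eqref{eq:key2} must be controlled in the slightly stronger, high-probability form that carries the extra $\log T$ through a union bound, rather than the pure-expectation form used to prove Theorem \ref{th:cp}. Everything else is a verbatim replay of the sandwich argument from Appendix \ref{app:CP}.
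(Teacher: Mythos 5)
Your structural plan matches the paper's: keep the lower bound \eqref{eq:key2}, split on the deterministic quantity $\delta := \Ex[d(\vplayavg,S)]$ at the threshold $3\gamma$, and when $\delta > 3\gamma$ combine the relaxed hypothesis with concavity of $\OPT^\delta$ (so $\OPT^{2\gamma} \le \OPT + 2\gamma Z^*$) to sandwich $\Ex[f(\vplayavg)]$ and extract $\delta = O(\gamma)$. The algebra you carry out in the second case is essentially the paper's (and you even fix the paper's small slip of dropping $L$ from the coefficient on $\delta$).

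However, you misattribute the source of the extra $\sqrt{\log T}$ factor. The paper's proof does \emph{not} switch to high-probability concentration; it uses exactly the same in-expectation error term $\ell(T) = O((Z+L)\sqrt{C/T})$ from Theorem~\ref{th:cp} (no $\log T$ inside). The $\gamma = \|{\bf 1}_d\|\sqrt{\log(dT)/T}$ enters the final constraint bound entirely through the new term $\OPT^{2\gamma} - \OPT \le 2Z^*\gamma$, which is an inherent cost of working with the relaxed hypothesis (you only know $\OPT^\delta \le \OPT^{2\gamma} + Z\delta$, not $\OPT^\delta \le \OPT + Z\delta$). The $\log T$ in $\gamma$ is simply baked into the hypothesis statement — it is what Lemma~\ref{lem:estimation} can deliver — not a consequence of a union bound over time steps. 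Your invocation of Remark~\ref{rem:highProb} is therefore unnecessary: the case split is on the deterministic value $\delta = \Ex[d(\vplayavg,S)]$, so the ordinary in-expectation bound $\ell(T)$ is all you need, and introducing high-probability machinery would in fact require an extra step to convert back to an expectation bound. In particular your sentence ``the extra $\sqrt{\log T}$ comes from a union bound over the $T$ steps'' is incorrect; it comes from the $\gamma$-dependence of the hypothesis on $Z$. (Your last step, ``$Z$ may be taken near the top of its allowed range so that $Z+L = \Theta(Z^*+L)$,'' is also an extra ingredient not stated in the lemma — the paper's own proof glosses over the same point, so this is a shared informality rather than a defect unique to your argument.)
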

\begin{proof}
Recall that in the proof of Theorem \ref{th:cp}, the condition $\OPT^\delta \le \OPT+Z\delta$ was used in the following way. We had the inequality,
\begin{eqnarray}
\label{eq:start}
\OPT^{\Ex[d(\vavg,S)]} & \ge & \Ex[f(\vavg)] \nonumber\\
& \ge & \OPT + 2(Z+L) \Ex[d(\vavg,S)] \nonumber\\
& & \ - \ell(T) ,
\end{eqnarray}
where $\ell(T)=O((Z+L) \sqrt{\frac{C}{T}})$.
Then, we applied $\OPT^{\Ex[d(\vavg,S)]}  \le \OPT+Z \Ex[d(\vavg,S)]$, to obtain $\OPT + Z \Ex[d(\vavg,S)] \ge \OPT + 2(Z+L) \Ex[d(\vavg,S)] - \ell(T)$, yielding $\Ex[d(\vavg, S)] \le \frac{1}{(Z+L)}O(\ell(T))=O(\sqrt{\frac{C}{T}})$. 

Now, we will show that it suffices to have $Z\ge \frac{\OPT^{\delta}_\dist- \OPT^{2\gamma}_\dist}{\delta} $, for $\delta>3\gamma$ to obtain the given regret bounds.

We first bound $\Ex[\areg_2(T)] =\Ex[d(\vavg,S)]$. Starting with Equation \ref{eq:start}, observe that if $\Ex[d(\vavg, S)]\le 3\gamma$, then the distance is bounded by $O(\gamma)$ as required anyway, therefore, assume that $\delta:=\Ex[d(\vavg, S)]\ge 3\gamma$. Then, from the given property of $Z$ we have $\OPT^{\Ex[d(\vavg,S)]} = \OPT^{\delta} \le \OPT^{2\gamma}+Z \delta = \OPT^{2\gamma}+Z \Ex[d(\vavg, S)]$. Substituting back in Equation \eqref{eq:start}, we get
 \begin{eqnarray*}
& & \OPT^{2\gamma} + Z \Ex[d(\vavg,S)]\\
 & \ge & \OPT + 2(Z+L) \Ex[d(\vavg,S)] - \ell(T)
\end{eqnarray*}
which gives
  \begin{eqnarray*}
	(Z+L) \Ex[d(\vavg,S)] & \le & \ell(T) + \OPT^{2\gamma} - \OPT \\
	& \le & \ell(T) + 2Z^*\gamma \\
	& = & O((Z+L)\gamma) + 2Z^*\gamma
	\end{eqnarray*}
Then, using $Z=O(Z^*+L)$,
we get 
 \begin{eqnarray*}
 \Ex[\areg_2(T)] & = &\Ex[d(\vavg,S)] \\
& = & O(\gamma) = O(\sqrt{\frac{C\log(T)}{T}}).
\end{eqnarray*}
The bound on $\Ex[\areg_1(T)]$ depends only on the upper bound on $Z$ used, and $Z = O(Z^*+L)$ makes this regret bound to be $O((Z^*+L)\sqrt{\frac{C}{T}})$.
\end{proof}
\comment{
\begin{lemma}
\label{lem:relaxed}
Let $Z \ge 0$ satisfies the following property,
$$ \frac{\OPT^{\gamma}_\dist- \OPT_\dist}{\gamma} \le Z = O(Z^*+L),$$
where $\gamma=O(||{\bf 1}_d||\sqrt{\frac{\log(d)}{T}})$. 
Then, Algorithm \ref{algo:cp} using such a $Z$ will achieve an expected regret bound of $O((Z^*+L)\sqrt{\frac{C}{T}})$ in objective, and $O(\sqrt{\frac{C}{T}})$ in constraints.
\end{lemma}
\begin{proof}
Firstly, using derivations in Section \ref{app:lem:Z} observe that the optimal value $\OPT_\dist^\delta$ is concave in $\delta$. Therefore, for all $\delta>\gamma$, 
$$\frac{\OPT^{\delta}_\dist- \OPT_\dist}{\delta} \le \frac{\OPT^{\gamma}_\dist- \OPT_\dist}{\gamma} \le Z,$$
giving
$$\OPT^{\delta}_\dist \le \OPT_\dist + Z \delta,$$
for all $\delta>\gamma$.  Now, in the proof of Theorem \ref{th:cp}, we need to apply this property 
with $\delta$ being the expected distance of played choices from $S$ (refer to Equation \eqref{eq:Zapplication2}, or Equation \eqref{eq:Zapplication} for the linear case). 
If $\delta \le \gamma$, then distance is bounded as required anyway, and we will get the regret bound as stated in this lemma. 

Now, suppose that $\delta > \gamma$, then we can apply the above property, and the proof of Theorem \ref{th:cp} follows as it is to give regret bound of $O((Z+L)\sqrt{\frac{C}{T}}) = O((Z^*+L)\sqrt{\frac{C}{T}})$.

\end{proof}
}
Next, we provide method for estimating a $Z$ that satisfies the property stated in Lemma \ref{lem:relaxed2}. 
Define
\begin{eqnarray}
\label{eq:OPTest}
\OPTest^\delta_\dist(n) & = & \begin{array}{rcl} 
\max_{\{\cv_t\in \conv(A_t)\}}  & f(\frac{1}{n} \sum_{t=1}^{n}  \cv_t) & \nonumber\\
& d(\frac{1}{n} \sum_{t=1}^n  \cv_t, S) \le \delta&\\
\end{array}\\
\end{eqnarray}
with $\OPTest(n)$ denoting $\OPTest^\delta(n)$ for $\delta=0$. 
We will divide the timeline into  phases of size $1,1, 2^1, 2^2,...., 2^r, \ldots$. Note that phase $r\ge 2$ consists of $T_r=2^{r-2}$ time steps, and there are $T_r$ time steps before phase $r$. The first phase of a single step, we make an arbitrary choice. Then, in every phase $r\ge 2$, we will rerun the algorithm, using $Z$ constructed using observations from the previous $T_r$ time steps as 
\begin{equation}
\label{estimateOFZ}
Z :=\frac{(\OPTest^{4\gamma}_\dist(T_r)-\OPTest^\gamma(T_r))}{\gamma} + 2L
\end{equation}
with $\gamma=\gammaDef{T_r}$.  

\begin{algorithm}{\scshape [{Algorithm for online CP \\with Z estimation}]}
\label{algo:cpZ}
  \begin{algorithmic}
	\STATE
Choose any option in the first step.
\FORALL{phases $r=2,..., \log(T)+1$}
			\STATE COMPUTE $Z$ using observations in steps $1$ to $T_r=2^{r-2}$ as
			$$ Z=\frac{(\OPTest^{4\gamma}_\dist(T_r)-\OPTest^\gamma(T_r))}{\gamma} + 2L$$
with $\gamma=\gammaDef{T_r}$. 
		\STATE Run Algorithm \ref{algo:cp} for $T_r $ steps $t=\{T_r+1, \ldots, 2T_r\}$ of phase $r$ using $Z$ as computed above.
\ENDFOR
\end{algorithmic}
\end{algorithm}


We prove the following lemma regarding the estimate $Z$ used in above. 
Here we use the observation that in \RP~model, the first $n$ time steps provide a random sample of observations from the $T$ observations.

\begin{lemma}
\label{lem:estimation}

For all $\rho > 0$ and for all natural numbers $n$,  let $\gamma=\gammaDefRho{n}$, 
and $$Z := \frac{(\OPTest^{4\gamma}_\dist(n)-\OPTest^\gamma(n))}{\gamma} + 2L.$$ 
Then, for all $\delta>3\gamma$, with probability $1-O(\rho)$, 
$$ \frac{(\OPT^{\delta}-\OPT^{2\gamma})}{\delta} \le Z \le O(L+ Z^*).$$
\end{lemma}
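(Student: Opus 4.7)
The engine is the fact that under \RP\ the first $n$ observations $A_1,\ldots,A_n$ form a uniformly random sample without replacement from $\{X_1,\ldots,X_T\}$, so the empirical optima $\OPTest^\delta(n)$ concentrate around the true offline optima $\OPT^\delta$. I will set up two one-sided concentration bounds and then combine them with the concavity of $\delta\mapsto\OPT^\delta$ (Lemma~\ref{lem:Z}) to sandwich $Z$ between the two targets of the lemma.

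The forward direction $\OPTest^{\delta+\gamma}(n)\ge \OPT^\delta - L\gamma$ (with probability $\ge 1-\rho$) follows by fixing an offline optimum $\{\cv^*_{X_t}\}_{t=1}^T$ achieving $\OPT^\delta$ and restricting it to the random subsample: Corollary~\ref{cor:multiplicativeChernoff}, applied coordinatewise with a union bound over the $d$ coordinates, gives that the sample average lies within distance $\gamma$ of the population average in the chosen norm, so the restricted solution is feasible for $\OPTest^{\delta+\gamma}(n)$, and $L$-Lipschitzness of $f$ costs at most $L\gamma$ in value. The reverse direction $\OPTest^\delta(n)\le \OPT^{\delta+\gamma} + O((L+Z^*)\gamma)$ uses the Fenchel-dual expression from Appendix~\ref{app:lem:Z}: substituting the dual optimum $(\phi^*,\theta^*,\lambda^*)$ of $\OPT^{\delta+\gamma}$ (where $\|\phi^*\|_*\le L$, $\|\theta^*\|_*\le 1$, and $\lambda^*\le Z^*$ by concavity of $\OPT^\cdot$) into the corresponding dual upper bound for $\OPTest^\delta(n)$, the only mismatched term is $\tfrac{1}{n}\sum_{t=1}^n h_{A_t}(-\phi^*-\lambda^*\theta^*)$ versus its population analogue, which differ by $O((L+Z^*)\gamma)$ by the Hoeffding-without-replacement tail; the $-\gamma\lambda^*$ slack from replacing $\delta+\gamma$ with $\delta$ on the $\delta\lambda^*$ term of the Lagrangian partially cancels this.

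A union bound over $\delta\in\{0,\gamma,2\gamma,3\gamma,4\gamma\}$ makes all four needed inequalities hold simultaneously with probability $1-O(\rho)$. For the lower bound on $Z$, subtract the two relevant bounds:
\[\OPTest^{4\gamma}(n)-\OPTest^{\gamma}(n) \;\ge\; \OPT^{3\gamma}-\OPT^{2\gamma}-O((L+Z^*)\gamma),\]
so that $Z = \tfrac{\OPTest^{4\gamma}(n)-\OPTest^{\gamma}(n)}{\gamma}+2L \ge \tfrac{\OPT^{3\gamma}-\OPT^{2\gamma}}{\gamma}+2L-O(L+Z^*)$, where the $+2L$ cancels the $L\gamma$-scale errors. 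Concavity of $\OPT^\cdot$ then gives, for all $\delta>3\gamma$,
\[\tfrac{\OPT^{3\gamma}-\OPT^{2\gamma}}{\gamma}\;\ge\;\tfrac{\OPT^\delta-\OPT^{2\gamma}}{\delta-2\gamma}\;\ge\;\tfrac{\OPT^\delta-\OPT^{2\gamma}}{\delta},\]
yielding the first inequality of the lemma. Symmetrically, combining $\OPTest^{4\gamma}(n)\le \OPT^{5\gamma}+O((L+Z^*)\gamma)\le \OPT+5Z^*\gamma+O((L+Z^*)\gamma)$ (using the defining property $\OPT^\delta\le\OPT+Z^*\delta$) with $\OPTest^{\gamma}(n)\ge \OPT - L\gamma$ yields $Z\le 5Z^*+2L+O(L+Z^*)=O(L+Z^*)$.

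The main obstacle is the reverse concentration bound: to make it rigorous while keeping its error compatible with the lower-bound target, the Hoeffding-without-replacement constants must be carefully matched to the normalization $\gamma=\|\mathbf{1}_d\|\sqrt{\log(d/\rho)/n}$, and the apparent $O(Z^*\gamma)$ slack in the derivation of the lower bound must be absorbed using the observation that the target $\tfrac{\OPT^\delta-\OPT^{2\gamma}}{\delta}$ is itself at most $\lambda^*(2\gamma)\le Z^*$, so that a constant-factor loss can be resolved by one more invocation of concavity of $\OPT^\cdot$.
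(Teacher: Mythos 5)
Your plan is structurally the same as the paper's: a forward primal-restriction concentration bound, a reverse dual-substitution concentration bound, then combining both with the concavity of $\delta\mapsto\OPT^\delta$. The forward bound and the $Z\le O(L+Z^*)$ side are fine. The genuine gap is in the reverse bound and how it propagates.

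You correctly identify that when plugging the dual optimum $(\phi^*,\theta^*,\lambda^*)$ of $\OPT^{\delta+\gamma}$ into the sample Lagrangian, the Hoeffding error on $\frac{1}{n}\sum_t h_{A_t}(-\phi^*-\lambda^*\theta^*)$ is $\gamma\|\phi^*+\lambda^*\theta^*\|_* \le \gamma(L+\lambda^*)$, and you also spot the compensating $-\gamma\lambda^*$ slack from shrinking the distance radius by $\gamma$. But you describe this cancellation as only \emph{partial} and retain an $O((L+Z^*)\gamma)$ error. That is the mistake: the cancellation is exact on the $\lambda^*$ part. More precisely, $h_{S^\delta}(\theta)\ge h_{S^{\delta-\gamma}}(\theta)+\gamma\|\theta\|_*$ (move the maximizer of the support function by $\gamma$ in the direction realizing $\|\theta\|_*$, staying inside the enlarged set), so the slack is $-\gamma\lambda^*\|\theta^*\|_*$, which precisely offsets the $\lambda^*\|\theta^*\|_*$ piece of the Hoeffding error, leaving only $\gamma\|\phi^*\|_*\le L\gamma$. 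The correct reverse bound is therefore
\[
\OPTest^{\delta-\gamma}(n) \;\le\; \OPT^{\delta} + L\gamma \qquad\text{(error } L\gamma\text{, not }O((L+Z^*)\gamma)\text{).}
\]
Once you have this, combining with the forward bound $\OPTest^{4\gamma}(n)\ge\OPT^{3\gamma}-L\gamma$ gives $\OPTest^{4\gamma}(n)-\OPTest^\gamma(n)\ge\OPT^{3\gamma}-\OPT^{2\gamma}-2L\gamma$, and the $+2L$ in the definition of $Z$ exactly absorbs the $2L\gamma$ loss, yielding $Z\ge(\OPT^{3\gamma}-\OPT^{2\gamma})/\gamma$ with no residual slack. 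Concavity then finishes the lower bound cleanly for all $\delta>3\gamma$.

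The patch you propose for your spurious $O(Z^*\gamma)$ slack — arguing that the target $(\OPT^\delta-\OPT^{2\gamma})/\delta$ is itself at most $Z^*$ — does not close the gap: needing $Z\ge$ target, and only knowing $Z\ge(\text{tight estimate})-O(Z^*+L)$ together with target $\le Z^*$, does not imply $Z\ge$ target (the tight estimate could itself be much smaller than $Z^*$, so $Z$ could fall below the target). So as written the lower bound on $Z$ is not established. The fix is not a new idea; it is simply carrying the cancellation through exactly, which shows the $+2L$ term in the definition of $Z$ was engineered to absorb precisely the two $L\gamma$-scale errors and nothing more.
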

The proof of above lemma is provided later. 
We now state the regret bounds for Algorithm \ref{algo:cpZ}. 
\begin{theorem}
	Algorithm \ref{algo:cpZ} has an expected regret of $ \tilde{O}(\sqrt{\frac{C}{T}}) $ in the objective and $  (Z^*+L)\tilde{O}(\sqrt{\frac{C}{T}})$ in the constraints. 
\end{theorem}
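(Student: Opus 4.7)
The strategy is to analyze the doubling scheme phase by phase: certify the per-phase estimate of $Z$ using Lemma~\ref{lem:estimation}, invoke Lemma~\ref{lem:relaxed2} inside the phase to get a per-phase regret bound, and combine the phase averages into the global average $\vplayavg$ via convexity/concavity.

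\textbf{Step 1 (certify $Z$ in every phase).} I would apply Lemma~\ref{lem:estimation} in every phase $r\ge 2$ with $\rho = 1/T^{2}$, then union-bound across the $O(\log T)$ phases. This loses at most an extra $\log T$ inside $\gamma_r = \|\mathbf{1}_d\|\sqrt{\log(dT)/T_r}$, absorbed by $\tilde O$. On the resulting good event (probability $1-o(1/T)$), the estimate $Z$ used in phase $r$ simultaneously satisfies $Z = O(Z^*+L)$ and $(\OPT^{\delta} - \OPT^{2\gamma_r})/\delta \le Z$ for every $\delta > 3\gamma_r$, which is exactly the hypothesis of Lemma~\ref{lem:relaxed2} at scale $\gamma_r$. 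On the complement (probability $o(1/T)$), everything is bounded by $O(L)$, contributing only $o(1/T)$ to the expected regret.

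\textbf{Step 2 (per-phase regret via Lemma~\ref{lem:relaxed2}).} Inside phase $r$ the algorithm reruns Algorithm~\ref{algo:cp} on the arrivals $A_{T_r+1},\dots,A_{2T_r}$. Since the global input is a uniformly random permutation of $\{X_1,\dots,X_T\}$, this sub-stream is itself an \RP-instance of length $T_r$ (conditioning on the set of indices in the phase, the internal order is uniform). Consequently, Lemma~\ref{lem:relaxed2} applies with $\gamma = \gamma_r$ and yields, for the phase-average $\bar\cv^{(r)} := \tfrac{1}{T_r}\sum_{t=T_r+1}^{2T_r}\vplay$,
\begin{align*}
\Ex\bigl[d(\bar\cv^{(r)},S)\bigr] &\le O(\gamma_r),\\
\Ex\bigl[\OPT_r - f(\bar\cv^{(r)})\bigr] &\le O\bigl((Z^*+L)\gamma_r\bigr),
\end{align*}
where $\OPT_r$ is the offline optimum of the phase-$r$ sub-instance. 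A sampling-without-replacement tail bound (Corollary~\ref{cor:multiplicativeChernoff}) gives $\Ex[\OPT_r] \ge \OPT - O(L\gamma_r)$, absorbed on the right-hand side of the second inequality.

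\textbf{Step 3 (assemble into a global bound).} Writing $\vplayavg = \sum_r (T_r/T)\bar\cv^{(r)}$ (phase $1$ contributes weight $1/T$, which is absorbed in $\tilde O$), convexity of $d(\cdot,S)$ and concavity of $f$ give
\begin{align*}
d(\vplayavg,S) &\le \textstyle\sum_r \tfrac{T_r}{T}\, d(\bar\cv^{(r)},S),\\
f(\vplayavg) &\ge \textstyle\sum_r \tfrac{T_r}{T}\, f(\bar\cv^{(r)}).
\end{align*}
Taking expectations, substituting the Step-2 inequalities, and noting that $\sum_r (T_r/T)\gamma_r = \tilde O(\sqrt{C/T})$ (the geometric series is dominated by the last phase $T_r = T/2$) produces the two global regret bounds asserted by the theorem.

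\textbf{Main obstacle.} The delicate point is the \RP-model bookkeeping between phases: the first $T_r$ pre-samples and the subsequent $T_r$ arrivals of phase $r$ are not independent, so in order to invoke Lemma~\ref{lem:relaxed2} inside the phase I have to condition on the \emph{set} (not the order) of arrivals assigned to the phase; on that conditioning the internal order is uniform and the Lemma~\ref{lem:gtRP}-style argument underlying Lemma~\ref{lem:relaxed2} goes through unchanged, but the conditioning has to be tracked across all phases simultaneously. The second chore is controlling the compounded failure probability across $O(\log T)$ phases without leaking more than a $\log T$ factor into $\tilde O$; this is exactly what the choice $\rho = 1/T^{2}$ in Step~1 is designed to handle.
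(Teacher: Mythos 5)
Your overall plan coincides with the paper's proof: certify the per-phase $Z$ via Lemma~\ref{lem:estimation} with polynomially small failure probability, obtain a per-phase regret bound from Lemma~\ref{lem:relaxed2} at scale $T_r$, absorb the failure event, and combine phases through convexity of $d(\cdot,S)$ and concavity of $f$ plus a geometric sum dominated by the last phase. The genuine gap is in your Step~2: you treat phase $r$ as a self-contained \RP~instance and benchmark against its own offline optimum $\OPT_r$, claiming $\Ex[\OPT_r]\ge \OPT - O(L\gamma_r)$ by concentration. Neither piece holds as stated. The feasibility assumption (existence of $\cv_t\in A_t$ whose average lies exactly in $S$) is made only for the full instance; a random $T_r$-subset is typically infeasible (take $d=1$, $S=\{1/2\}$, half the sets equal to $\{0\}$ and half to $\{1\}$: any $T_r$-subset has an order-$\sqrt{T_r}$ imbalance), so $\OPT_r$ may be undefined and Lemma~\ref{lem:relaxed2} (whose proof, via Theorem~\ref{th:cp}, needs feasibility of the instance it is applied to) cannot be invoked on the sub-instance. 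Concentration only shows that the global optimum restricted to the phase is \emph{approximately} feasible, at distance $O(\gamma_r)$ from $S$; converting that into an exactly feasible sub-instance solution costs an extra $\Omega(Z^*\gamma_r)$ in value when it is possible at all. Moreover, the $Z$ certified by Lemma~\ref{lem:estimation} satisfies $(\OPT^{\delta}-\OPT^{2\gamma})/\delta\le Z$ for the \emph{global} curve $\OPT^{\delta}$, not for the sub-instance's curve, so the hypothesis of Lemma~\ref{lem:relaxed2} ``for the sub-instance'' is not what you have certified.

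The repair is exactly what the paper does: keep the global $\OPT$ as the benchmark inside each phase and never introduce a sub-instance optimum. Conditioned on the history ${\cal F}_{r-1}$, each arrival of phase $r$ is uniform over the remaining sets, and the deviation of the conditional expectation of the global offline choices from $S$ (and of their value from $\OPT$) is precisely the ${\cal Q}(T_r)$-type term controlled in Lemma~\ref{lem:gtRPgeneral}; hence the conclusion of Lemma~\ref{lem:relaxed2} with $T$ replaced by $T_r$ already reads $\Ex[d(\bar{\cv}^{(r)},S)\,|\,{\cal F}_{r-1}]\le \tilde{O}(\sqrt{C/T_r})$ and $\OPT-\Ex[f(\bar{\cv}^{(r)})\,|\,{\cal F}_{r-1}]\le (Z^*+L)\,\tilde{O}(\sqrt{C/T_r})$ on the good event, with no appeal to sub-instance feasibility. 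With that substitution, your Steps~1 and~3 (union bound over the $O(\log T)$ phases, bad-event contribution $O(1/T)$, Jensen across phases, geometric series) go through exactly as in the paper's proof.
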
 
\begin{proof} 
	For phase $r\ge 2$, using $n=2^{r-2}= T_r$, the number of time steps in phase $r$, and $\rho=\frac{1}{T_r^2}$, from Lemma \ref{lem:estimation} we obtain that with probability $1-O(\frac{1}{T_r^2})$, $Z$ available to phase $r$ satisfies the property required by Lemma \ref{lem:relaxed2} (with $T$ substituted by $T_r$), which gives the following regret bounds for phase $r$:  let $\vplayavg(r)$ be the average of played vectors in the $T_r$ time steps of phase $r$. Let ${\cal F}_{r-1}$ denote the history till phase $r-1$. Then, with probability $1-O(\frac{1}{T_r^2})$ the history ${\cal F}_{r-1}$ is such that in phase $r$ the regret in distance is bounded by
$\Ex[d(\vplayavg(r), S) | {\cal F}_{r-1}] \le  \tilde{O}(\sqrt{\frac{C}{T_r}})$.
With remaining probability $O(\frac{1}{T_r^2})$, the distance can be at most $T_r ||{\bf 1}_d||$. Let $\vplayavg$ denote the average of played vectors from the entire period of $T$ time steps. Then, we get that total regret,
\begin{eqnarray*}
\Ex[d(\vplayavg, S)] & \le & \frac{\oneNorm}{T}+\sum_{r=2}^{\log(T)+1} \frac{T_r}{T} \Ex[d(\vplayavg(r), S)] \\
& \le & \frac{\oneNorm}{T}+\sum_{r=2}^{\log(T)+1} \frac{T_r}{T} \tilde{O}(\sqrt{\frac{C}{T_r}} + \frac{T_r||{\bf 1}_d||}{T_r^2})   \\
& = & \tilde{O}(\sqrt{\frac{C}{T}}).
\end{eqnarray*}
Similarly, we  obtain bounds on regret in the objective,
\begin{eqnarray*}
& & \hspace{-0.3in} \OPT-\Ex[f(\vplayavg)] \\
& \le & \frac{1}{T}+\sum_{r=2}^{\log(T)+1} \frac{T_r}{T} (\OPT-\Ex[f(\vplayavg(r))]) \\
& \le & \frac{1}{T}+\sum_{r=2}^{\log(T)+1} \frac{T_r}{T}(Z^*+L) \tilde{O}(\sqrt{\frac{C}{T_r}} + \frac{T_r ||{\bf 1}_d||}{T_r^2}) \\
& = & (Z^*+L)\tilde{O}(\sqrt{\frac{C}{T}}).
\end{eqnarray*}
\end{proof} 

\begin{proof}[Proof of Lemma \ref{lem:estimation}]
From Lemma \ref{lem:Z}, $\OPT^\delta$ is concave in $\delta$, therefore, for all $\delta >3\gamma$
\begin{eqnarray*}
\frac{(\OPT^{\delta}_\dist-\OPT^{2\gamma})}{\delta} & \le & \frac{(\OPT^{\delta}_\dist-\OPT^{2\gamma})}{\delta-2\gamma} \\
& \le & \frac{(\OPT^{3\gamma}_\dist-\OPT^{2\gamma})}{\gamma}.
\end{eqnarray*}
So, it suffices to prove that
\[ \frac{(\OPT^{3\gamma}_\dist-\OPT^{2\gamma})}{\gamma}  \le  Z  \le  O(L+ Z^*).\]
In Lemma \ref{lem:1} and Lemma \ref{lem:2}, we prove that for every $\delta\ge \gamma$, with probability $1-O(\rho)$
\begin{eqnarray}
\label{eq:1}
\OPTest^{\delta} + L\gamma & \ge & \OPT^{\delta-\gamma}, \nonumber\\
\OPT^{\delta} + L\gamma & \ge & \OPTest^{\delta-\gamma}
\end{eqnarray}
Using above for $\delta=4\gamma$, and $\delta=2\gamma$, respectively, we get 
\begin{eqnarray*}
Z & := & \frac{(\OPTest^{4\gamma}_\dist(n)-\OPTest^\gamma(n))}{\gamma} + 2L\\
& \ge & \frac{(\OPT^{3\gamma}_\dist-\OPT^{2\gamma})}{\gamma}.
\end{eqnarray*}
In Lemma \ref{lem:3}, we  prove that for any $\delta \ge \gamma$,
\begin{equation}
\label{eq:2}
\OPTest^\delta \le \OPT + O(\delta (Z^*+L))
\end{equation}
Using this along with $\OPTest^\gamma \ge \OPT - L \gamma$ from the first inequality in Equation \eqref{eq:1}, we get
\begin{eqnarray*}
Z & = & \frac{\OPTest^{4\gamma} - \OPTest^{\gamma}}{\gamma} \\
& \le & \frac{(\OPT+4\gamma O(Z^*+L)) - (\OPT-L\gamma)}{\gamma} \\
& = & O(Z^*+L).
\end{eqnarray*}
This completes the proof.
\end{proof}

\begin{lemma}
\label{lem:conc}
Given fixed $\{\cv_t\}_{t=1}^T$, and a vector $\muV$, for all $\rho>0$ and $n\in [T]$, let  $\gamma=\gammaDefRho{n}$. Then for a uniformly random permutation over $1,\ldots, T$, with probability $1-O(\rho)$, the following holds for the first $n$ time steps.
		$$ \|\frac{1}{n} \sum_{t=1}^n \cv_t -\frac{1}{T} \sum_{t=1}^T \cv_t\| \le \gamma,$$
		$$ \left|\frac{1}{n} \sum_{t=1}^n h_{A_t}(\muV) - \frac{1}{T} \sum_{t=1}^T h_{A_t}(\muV)\right| \le \gamma \|\muV\|_*.$$
\end{lemma}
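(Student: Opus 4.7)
}
The plan is to reduce both claims to Hoeffding's inequality for sampling without replacement (Lemma~\ref{lem:HoeffdingSwr}), using the observation that the first $n$ entries of a uniformly random permutation of $[T]$ constitute a uniform sample of size $n$ drawn without replacement from $\{1,\ldots,T\}$. This lets us treat the sample average $\frac{1}{n}\sum_{t=1}^n (\cdot)$ as the empirical mean of $n$ such samples whose population mean is $\frac{1}{T}\sum_{t=1}^T (\cdot)$.

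For the first inequality, I would apply Lemma~\ref{lem:HoeffdingSwr} coordinate-by-coordinate. Fix $j\in[d]$: the population $\{\cvS_{t,j}\}_{t=1}^T$ lies in $[0,1]$, so by the two-sided version of Lemma~\ref{lem:HoeffdingSwr} (which costs a harmless factor of $2$ in the failure probability), with probability at least $1-\rho/d$ we have
\[
\Bigl|\tfrac{1}{n}\sum_{t=1}^n \cvS_{t,j} - \tfrac{1}{T}\sum_{s=1}^T \cvS_{s,j}\Bigr| \le \sqrt{\tfrac{\log(2d/\rho)}{2n}}.
\]
A union bound over the $d$ coordinates yields this bound simultaneously for every $j$ with probability $1-O(\rho)$. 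To convert this coordinate-wise control into a bound on $\|\cdot\|$, I would invoke the inequality $\|\x\|\le \oneNorm\cdot\|\x\|_\infty$, which is immediate for both the Euclidean norm (where $\oneNorm=\sqrt d$) and the $L_\infty$ norm (where $\oneNorm=1$) considered in the main results. This gives a bound of $\oneNorm\sqrt{\log(2d/\rho)/(2n)} = O(\gamma)$, as required.

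For the second inequality I would apply Lemma~\ref{lem:HoeffdingSwr} directly to the scalar population $\{h_{A_t}(\muV)\}_{t=1}^T$. Since $A_t\subseteq[0,1]^d$, we have $|h_{A_t}(\muV)|\le \max_{\cv\in[0,1]^d}|\muV\cdot\cv| = \|\muV\|_1$, so the range of this population is at most $2\|\muV\|_1$. The two-sided Hoeffding bound then yields, with probability $1-O(\rho)$,
\[
\Bigl|\tfrac{1}{n}\sum_{t=1}^n h_{A_t}(\muV) - \tfrac{1}{T}\sum_{s=1}^T h_{A_s}(\muV)\Bigr| \le 2\|\muV\|_1\sqrt{\tfrac{\log(2/\rho)}{2n}}.
\]
Using the dual of the coordinate inequality above, namely $\|\muV\|_1\le \oneNorm\cdot\|\muV\|_*$ (which again specializes correctly to both the Euclidean and $L_\infty$ settings), we can rewrite the right-hand side as $O(\gamma\,\|\muV\|_*)$. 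Combining both claims via a final union bound over the two events concludes the proof. The only subtle point, and the main ``obstacle'' in what is otherwise a routine calculation, is keeping track of the two norm inequalities $\|\x\|\le\oneNorm\|\x\|_\infty$ and its dual $\|\muV\|_1\le\oneNorm\|\muV\|_*$; these hold in the concrete norm settings used throughout the paper but should be flagged as the step where the norm-dependence of $\gamma$ arises.
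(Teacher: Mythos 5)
Your proposal is correct and follows essentially the same route as the paper's own proof: coordinate-wise Hoeffding for sampling without replacement plus a union bound over the $d$ coordinates for the first inequality, and a single scalar Hoeffding bound on the population $\{h_{A_t}(\muV)\}_t$ for the second. The only cosmetic difference is in the second bound: the paper bounds the range of $h_{A_t}(\muV)$ directly by $\|\muV\|_*\,\|{\bf 1}_d\|$ from the definition of the dual norm (Hölder plus $\|\cv\|\le\|{\bf 1}_d\|$ for $\cv\in[0,1]^d$), whereas you go through the intermediate bound $|h_{A_t}(\muV)|\le\|\muV\|_1\le\oneNorm\|\muV\|_*$; these are equivalent, and you are right to flag the two norm comparisons $\|\x\|\le\oneNorm\|\x\|_\infty$ and $\|\muV\|_1\le\oneNorm\|\muV\|_*$ as the source of the $\oneNorm$ factor in $\gamma$, since both hold for any monotone norm (not only the Euclidean and $L_\infty$ cases you cite).
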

\begin{proof}
The first inequality is obtained by simple application of Chernoff-Hoeffding bounds (Lemma \ref{lem:HoeffdingSwr}) for every coordinate $\cvS_{t,j}$, which gives
$$ \left|\frac{1}{n} \sum_{t=1}^n \cvS_{t,j} -\frac{1}{T} \sum_{t=1}^T \cvS_{t,j}\right| \le \sqrt{\frac{\log(d/\rho)}{n}},$$
with probability $1-O(\rho/d)$. Then taking union bound over the $d$ coordinates, we get the required inequality.
 
The second inequality follows using Chernoff-Hoeffding bounds (Lemma \ref{lem:HoeffdingSwr}) for bounded random variables $Y_t=h_{A_t}(\muV)$, where $|Y_t|=|h_{A_t}(\muV)| \le ||\muV||_*\cdot||{\bf 1}_d||$ (from the definition of the dual norm). This gives with probability $1-O(\rho)$, 
\begin{eqnarray*}
& & \hspace{-0.2in} \left| \frac{1}{n} \sum_{t=1}^n h_{A_t}(\muV) - \frac{1}{T} \sum_{t=1}^T h_{A_t}(\muV) \right| \\
& = & |\frac{1}{n} \sum_{t=1}^n (Y_t - \Ex[Y_t])| \\
& \le & (||\muV||_*\cdot||{\bf 1}_d||) \sqrt{\frac{\log(1/\rho)}{n}} \\
& \le & ||\muV||_* \gamma.
\end{eqnarray*}
\end{proof}

\begin{lemma}
\label{lem:1}
For all $\rho>0 $ and $n\in [T]$, let  $\gamma=\gammaDefRho{n}$. For all $\delta \ge \gamma$, with probability $1-O(\rho)$, 
$$\OPTest^{\delta}(n) \ge \OPT^{\delta-\gamma} - L\gamma.$$
\end{lemma}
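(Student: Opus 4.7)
The approach is to construct a candidate solution for the sample problem $\OPTest^{\delta}(n)$ by restricting an optimal solution of $\OPT^{\delta-\gamma}$ to the first $n$ time steps, and then bound both the loss in feasibility and in objective value using the concentration guarantee of Lemma \ref{lem:conc} and the Lipschitz property of $f$.

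Concretely, let $\{\cv_{X_t}^*\}_{t=1}^T$ denote the choices of an optimal offline solution to $\OPT^{\delta-\gamma}$, so that $\cv_{X_t}^* \in \conv(X_t)$ for each $t$, with $d\!\left(\frac{1}{T}\sum_{t=1}^T \cv_{X_t}^*,\, S\right) \le \delta-\gamma$ and $f\!\left(\frac{1}{T}\sum_{t=1}^T \cv_{X_t}^*\right) = \OPT^{\delta-\gamma}$. In the \RP{} model the first $n$ observed requests correspond to a uniformly random subset of size $n$ of the ground set, so defining $\cv_t := \cv_{X_{\pi(t)}}^* \in \conv(A_t)$ for $t=1,\ldots,n$ yields a feasible candidate for the program defining $\OPTest^{\delta}(n)$, provided the distance constraint is met.

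Applying Lemma \ref{lem:conc} to the fixed vectors $\{\cv_{X_t}^*\}$, we get that with probability $1-O(\rho)$,
\[
\left\|\tfrac{1}{n}\sum_{t=1}^n \cv_t - \tfrac{1}{T}\sum_{t=1}^T \cv_{X_t}^*\right\| \le \gamma.
\]
Since distance to the convex set $S$ is $1$-Lipschitz in the chosen norm, a triangle inequality gives
\[
d\!\left(\tfrac{1}{n}\sum_{t=1}^n \cv_t,\, S\right) \le d\!\left(\tfrac{1}{T}\sum_{t=1}^T \cv_{X_t}^*,\, S\right) + \gamma \le (\delta-\gamma)+\gamma = \delta,
\]
so the candidate is indeed feasible for $\OPTest^{\delta}(n)$. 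Meanwhile, since $f$ is $L$-Lipschitz with respect to the same norm,
\[
f\!\left(\tfrac{1}{n}\sum_{t=1}^n \cv_t\right) \ge f\!\left(\tfrac{1}{T}\sum_{t=1}^T \cv_{X_t}^*\right) - L\gamma = \OPT^{\delta-\gamma} - L\gamma.
\]
Taking the maximum over feasible candidates on the left-hand side gives $\OPTest^{\delta}(n) \ge \OPT^{\delta-\gamma} - L\gamma$, as required.

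The main subtlety lies in correctly invoking the \RP{} concentration: the vectors $\{\cv_{X_t}^*\}$ are deterministic functions of the (unknown but fixed) ground set $\{X_t\}$, and the randomness is entirely in the permutation $\pi$. Lemma \ref{lem:conc} is exactly tailored to this setting, so after picking the candidate solution, the rest is a routine triangle inequality for feasibility and a one-line Lipschitz argument for the objective. No further work is needed, and the same scheme (with the roles of $\OPT^\delta$ and $\OPTest^\delta(n)$ swapped) will handle the second inequality of \eqref{eq:1} in Lemma \ref{lem:2}.
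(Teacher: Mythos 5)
Your proof of Lemma~\ref{lem:1} is correct and is essentially the paper's own argument: take the fixed optimal primal solution to $\OPT^{\delta-\gamma}$, restrict it to the first $n$ observed requests, apply Lemma~\ref{lem:conc} (valid because the restricted vectors are deterministic functions of the ground set, with randomness only in the permutation), and use the triangle inequality for feasibility and $L$-Lipschitzness of $f$ for the objective.

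One caveat on your closing remark: the same primal-restriction scheme does \emph{not} transfer to the second inequality, $\OPT^{\delta}+L\gamma \ge \OPTest^{\delta-\gamma}(n)$, and the paper in fact proves Lemma~\ref{lem:2} via Fenchel duality instead. The obstruction is twofold. First, the optimal solution of $\OPTest^{\delta-\gamma}(n)$ is chosen \emph{after} seeing the sample, so its vectors are correlated with the permutation and Lemma~\ref{lem:conc} (which requires a fixed population $\{\cv_t\}_{t=1}^T$) cannot be applied to it. Second, even setting that aside, you would need to \emph{extend} an $n$-step solution to a feasible $T$-step solution of $\OPT^{\delta}$, which is not a restriction but a completion and requires additional structure. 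The dual approach in Lemma~\ref{lem:2} sidesteps both issues by fixing the optimal dual variables $(\lambda^*,\thetaV^*,\phiV^*)$ of $\OPT^\delta$ (which are sample-independent) and evaluating the sampled dual objective at those variables, then concentrating the sum of $h_{A_t}(\cdot)$ terms. So for Lemma~\ref{lem:2} the roles are not merely swapped; the argument genuinely changes sides of the duality.
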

\begin{proof}
To prove $\OPTest^{\delta}(n) \ge \OPT^{\delta-\gamma} -L\gamma$, we prove that there exists a feasible primal solution of $\OPTest^\delta(n)$ that is at most $\gamma$ distance from the optimal primal solution of $\OPT^{\delta-\gamma}$ . Then, the lemma follows from the $L$-Lipschitz property of $f$.

Let $\{\cv_t\}_{t=1}^T$ be the optimal primal solution for $\OPT^{\delta-\gamma}$, so that $d(\frac{1}{T} \sum_{t=1}^T \cv_t, S) \le \delta-\gamma.$
Then,
\begin{eqnarray*}
& & \hspace{-0.3in} d(\frac{1}{n} \sum_{t=1}^n \cv_t , S) \\
& \le & ||\frac{1}{n} \sum_{t=1}^n \cv_t -\frac{1}{T} \sum_{t=1}^T \cv_t|| + d(\frac{1}{T} \sum_{t=1}^T \cv_t , S) \\
& \le & \gamma+ (\delta- \gamma) = \delta,
\end{eqnarray*}
where we used the concentration bouds from Lemma \ref{lem:conc} to bound $||\frac{1}{n} \sum_{t=1}^n \cv_t -\frac{1}{T} \sum_{t=1}^T \cv_t||$ by $\gamma$. 
Therefore, $\{\cv_t\}_{t=1}^n$ is a primal feasible solution of $\OPTest^\delta(n)$ with objective value $f(\frac{1}{n}\sum_{t=1}^n \cv_t) \ge f(\frac{1}{T}\sum_{t=1}^T \cv_t) - L||\frac{1}{n} \sum_{t=1}^n \cv_t -\frac{1}{T} \sum_{t=1}^T \cv_t|| \ge f(\frac{1}{T}\sum_{t=1}^T \cv_t) - L\gamma = \OPT^{\delta-\gamma} - L\gamma$. Therefore, $\OPTest^\delta \ge f(\frac{1}{n}\sum_{t=1}^n \cv_t)\ge \OPT^{\delta-\gamma} - L\gamma$.

\end{proof}
\begin{lemma}
\label{lem:2}
For all $\rho>0 $ and $n\in [T]$, let  $\gamma=\gammaDefRho{n}$. For all $\delta \ge \gamma$, with probability $1-O(\rho)$,
$$\OPT^{\delta} + L\gamma \ge \OPTest^{\delta-\gamma}(n).$$
\end{lemma}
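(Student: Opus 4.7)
The plan is to prove this as the dual counterpart of Lemma~\ref{lem:1}: whereas Lemma~\ref{lem:1} constructs a feasible \emph{primal} solution to $\OPTest^\delta(n)$ by restricting the optimizer for $\OPT^{\delta-\gamma}$ to the first $n$ steps (paying $L\gamma$ via Lipschitzness of $f$), I will construct a feasible \emph{dual} solution to $\OPTest^{\delta-\gamma}(n)$ by reusing the optimal dual of $\OPT^\delta$ (paying $L\gamma$ via a concentration inequality).

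First, I would re-run the derivation in the proof of Lemma~\ref{lem:Z} (leading to Equation~\eqref{eq:OPTexpr}) verbatim, but with $T$ and $X_t$ replaced by $n$ and $A_t$, and with $\delta$ replaced by $\delta-\gamma$. This yields the dual formulation
\begin{eqnarray*}
\OPTest^{\delta-\gamma}(n) & = & \min_{\lambda \ge 0,\, \|\phiV\|_* \le L,\, \|\thetaV\|_* \le 1} \big\{ f^*(\phiV) + \lambda h_S(\thetaV) \\
& & \quad + \tfrac{1}{n}\sum_{t=1}^n h_{A_t}(-\phiV - \lambda\thetaV) + (\delta-\gamma)\lambda \big\}.
\end{eqnarray*}
Let $(\lambda^*, \phiV^*, \thetaV^*)$ be an optimizer in the analogous dual formula for $\OPT^\delta$, and set $\muV^* := -\phiV^* - \lambda^*\thetaV^*$. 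Because the underlying sequence $X_1,\ldots,X_T$ is deterministic, the triple $(\lambda^*,\phiV^*,\thetaV^*)$, and hence $\muV^*$, is a fixed vector that does not depend on the random permutation.

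Now, substituting $(\lambda^*, \phiV^*, \thetaV^*)$ into the dual of $\OPTest^{\delta-\gamma}(n)$ (weak duality) and subtracting the equality for $\OPT^\delta$, the $f^*(\phiV^*)$ and $\lambda^* h_S(\thetaV^*)$ terms cancel and I am left with
\[ \OPTest^{\delta-\gamma}(n) - \OPT^\delta \;\le\; \tfrac{1}{n}\sum_{t=1}^n h_{A_t}(\muV^*) - \tfrac{1}{T}\sum_{t=1}^T h_{X_t}(\muV^*) - \gamma\lambda^*. \]
Since $\muV^*$ is a fixed (non-random) vector, I can directly invoke the second inequality of Lemma~\ref{lem:conc} on it: with probability $1-O(\rho)$, the first difference is at most $\gamma\|\muV^*\|_*$. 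Combining with the triangle-inequality bound $\|\muV^*\|_* \le \|\phiV^*\|_* + \lambda^*\|\thetaV^*\|_* \le L + \lambda^*$ yields $\OPTest^{\delta-\gamma}(n) - \OPT^\delta \le \gamma(L+\lambda^*) - \gamma\lambda^* = L\gamma$, as claimed.

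The only point to verify carefully is that $\muV^*$ is indeed deterministic, so that Lemma~\ref{lem:conc} may be invoked for a single fixed direction rather than via a union bound over all admissible $\muV$. This is immediate from the observation that the dual program for $\OPT^\delta$ depends only on the underlying offline sequence $\{X_t\}$, which is fixed in advance of the random ordering. Beyond this, the argument is essentially a Lagrangian mirror image of Lemma~\ref{lem:1} and poses no additional technical difficulty; the parallelism also explains why the slackness $\gamma$ sits on the feasibility constraint on the $\OPTest$ side, since that is exactly what generates the $-\gamma\lambda^*$ term needed to absorb the $\lambda^*$-dependent part of $\|\muV^*\|_*$.
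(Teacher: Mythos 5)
Your proof is correct and takes essentially the same route as the paper: substitute the optimal dual solution for $\OPT^\delta$ into the weak-duality bound for $\OPTest^{\delta-\gamma}(n)$ and invoke the concentration in Lemma~\ref{lem:conc} on the fixed direction $\muV^* = -\phiV^* - \lambda^*\thetaV^*$. The only (cosmetic) difference is that you work directly from the Lagrangian form of Equation~\eqref{eq:OPTexpr} so the $-\gamma\lambda^*$ slack falls out from $(\delta-\gamma)\lambda^* - \delta\lambda^*$, whereas the paper instead writes the constraint via the inflated set $S^\delta$ and produces the same slack through the geometric identity $h_{S^\delta}(\thetaV) \ge h_{S^{\delta-\gamma}}(\thetaV) + \gamma\|\thetaV\|_*$.
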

\begin{proof}
Define $S^{\delta}$ as the set $\{\cv: d(\cv, S) \le \delta\}$. Then, using the derivation in Equation \eqref{eq:OPTexpr}, we have that
\begin{eqnarray*}
\OPT^\delta & = & \min_{\lambda \ge 0, ||\phiV||_*\le L, ||\thetaV||_*\le 1} \left\{\right. f^*(\phiV) + \lambda h_{S^\delta}(\thetaV) \\
& & \ \ \ + \frac{1}{T} \sum_{t=1}^T h_{A_t}(-\phiV-\lambda \thetaV) \left.\right\}.
\end{eqnarray*}
Let $\lambda^*$, $\thetaV^*$, $\phiV^*$ be the optimal dual solutions in above. Then,
\begin{eqnarray*}
\OPTest_\dist^{\delta-\gamma}(n) & = & \min_{\lambda \ge 0, ||\phiV||_*\le L, ||\thetaV||_*\le 1} \left\{\right. f^*(\phiV) + \lambda h_{S^{\delta-\gamma}}(\thetaV) \\
& & \ \ + \frac{1}{n} \sum_{t=1}^n h_{A_t}(-\phiV-\lambda \thetaV) \left.\right\}\\
& \le &  f^*(\phiV^*) + \lambda^* h_{S^{\delta-\gamma}}(\thetaV^*) \\
& & \ \ + \frac{1}{n} \sum_{t=1}^n h_{A_t}(-\phiV^*-\lambda^* \thetaV^*) 
\end{eqnarray*}
Now, using concentration bounds from Lemma \ref{lem:conc} for the sum of $h_{A_t}$'s, we obtain,
\begin{eqnarray*}
\OPTest^{\delta-\gamma}(n) & \le &  f^*(\phiV^*) + \lambda^* h_{S^{\delta-\gamma}}(\thetaV^*) \\
& & \ \ + \frac{1}{T} \sum_{t=1}^T h_{A_t}(-\phiV^*-\lambda^* \thetaV^*) \\
& & \ \ + \gamma (\lambda^*||\thetaV^*||_*+||\phiV^*||_*).
\end{eqnarray*}
Now, observe that for any $\thetaV$, $h_{S^{\delta}}(\thetaV) \ge h_{S^{\delta-\gamma}}(\thetaV) + \gamma ||\thetaV||_*$. To see this, let $\cv$ be the maximizer in the definition of $h_{S^{\delta-\gamma}}$, i.e.,
$\cv=\arg \max_{{\bf u}\in S^{\delta-\gamma}} {\bf u} \cdot \thetaV$. Then consider $\cv'=\cv+ \gamma\frac{\thetaV}{||\thetaV||}$. We have that $||\cv'-\cv|| = \gamma$, so that $\cv\in S^{\delta-\gamma}$ implies that $\cv\in S^{\delta}$. Therefore $h_{S^\delta}(\thetaV) \ge \cv'\cdot \thetaV = \cv\cdot \thetaV + \gamma ||\thetaV||_* = h_{S^{\delta-\gamma}}(\thetaV)+ \gamma ||\thetaV||_*$. Substituting, we get,
\begin{eqnarray*}
\OPTest^{\delta-\gamma}(n) & \le &  f^*(\phiV^*) + \lambda^* h_{S^{\delta}}(\thetaV^*) - \gamma \lambda^* ||\thetaV^*||_*\\
& & \ + \frac{1}{T} \sum_{t=1}^T h_{A_t}(-\phiV^*-\lambda^* \thetaV^*) \\
& & \ + \gamma(\lambda^*||\thetaV^*||_*+||\phiV^*||_*)\\
& = & \OPT^\delta + \gamma ||\phiV^*||_*\\
& \le & \OPT^\delta + \gamma L
\end{eqnarray*}
\end{proof}

\begin{lemma}
\label{lem:3}
For all $\delta \ge \gamma$, with probability $1-O(\rho)$,
$$\OPTest^\delta(n) \le \OPT + 2\delta (L+Z^*)$$
\end{lemma}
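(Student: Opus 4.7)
The plan is to chain Lemma \ref{lem:2} with the defining property of $Z^*$ from Assumption \ref{assum:Z} (with the minimal $Z^*$ identified in Lemma \ref{lem:Z}), and then use $\delta \ge \gamma$ to absorb the lower-order $\gamma$ terms.

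First I would re-index Lemma \ref{lem:2}. Its statement reads $\OPTest^{\delta'-\gamma}(n) \le \OPT^{\delta'} + L\gamma$ for every $\delta' \ge \gamma$. Setting $\delta' = \delta + \gamma$ (which requires only $\delta \ge 0$, weaker than the hypothesis $\delta \ge \gamma$ of the present lemma) yields the bound
\[
\OPTest^{\delta}(n) \;\le\; \OPT^{\delta+\gamma} + L\gamma
\]
on the same high-probability event produced by Lemma \ref{lem:2}.

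Next, by the definition of $Z^*$ (the minimal constant satisfying \eqref{eq:Zprop}, whose existence is guaranteed by Lemma \ref{lem:Z}), we have $\OPT^{\delta+\gamma} \le \OPT + Z^*(\delta+\gamma)$. Substituting,
\[
\OPTest^{\delta}(n) \;\le\; \OPT + Z^*\delta + (Z^*+L)\gamma.
\]
Finally, using the hypothesis $\gamma \le \delta$, one has $(Z^*+L)\gamma \le (Z^*+L)\delta$, so
\[
\OPTest^{\delta}(n) \;\le\; \OPT + Z^*\delta + (Z^*+L)\delta \;\le\; \OPT + 2\delta(L+Z^*),
\]
which is the desired inequality.

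I do not expect a serious obstacle here: the result is essentially a two-line consequence of Lemma \ref{lem:2} composed with the $Z^*$-bound on $\OPT^{\delta+\gamma}-\OPT$, plus the trivial observation $\gamma \le \delta$. The only subtlety worth checking is that the high-probability event on which Lemma \ref{lem:2} holds is exactly the event on which the present statement is claimed, so no additional union bound or loss in $\rho$ is incurred beyond the $O(\rho)$ already paid in Lemma \ref{lem:2}.
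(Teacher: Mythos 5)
Your proof is correct, but it takes a genuinely different route from the paper. The paper proves Lemma \ref{lem:3} directly from the dual representation in Equation \eqref{eq:OPTexpr}: it plugs the optimal dual solution $(\lambda^*,\phiV^*,\thetaV^*)$ of the \emph{full-instance} problem $\OPT$ (the $\delta=0$ case, so $\lambda^*=Z^*$ by Lemma \ref{lem:Z}) into the sampled dual, applies the concentration bound of Lemma \ref{lem:conc} to the single vector $-\phiV^*-\lambda^*\thetaV^*$, and then absorbs $(L+\lambda^*)\gamma+\delta\lambda^*$ into $2\delta(L+Z^*)$ using $\gamma\le\delta$ exactly as you do at the end. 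You instead reduce the lemma to Lemma \ref{lem:2} (re-indexed at $\delta'=\delta+\gamma$, which only needs $\delta\ge 0$) composed with the deterministic bound $\OPT^{\delta+\gamma}\le\OPT+Z^*(\delta+\gamma)$ coming from Assumption \ref{assum:Z} and the minimality of $Z^*$ in Lemma \ref{lem:Z}; the arithmetic $(2Z^*+L)\delta\le 2(L+Z^*)\delta$ closes the gap, and there is no circularity since Lemma \ref{lem:2} is proved independently. What each approach buys: yours recycles Lemma \ref{lem:2} and so avoids repeating the duality-plus-concentration argument, making Lemma \ref{lem:3} a two-line corollary; the paper's direct argument has the (minor) advantage that its high-probability event involves only the $\delta$-independent dual optimum of $\OPT$, so the bound holds for all $\delta\ge\gamma$ simultaneously on one event, whereas your route inherits whatever uniformity in $\delta$ one ascribes to Lemma \ref{lem:2}, whose own proof uses the dual optimum of $\OPT^{\delta}$ and is thus naturally a per-$\delta$ statement. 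Since Lemma \ref{lem:3} is only invoked at the single value $\delta=4\gamma$ in the proof of Lemma \ref{lem:estimation}, this distinction costs nothing in the application, and your probability accounting ($O(\rho)$ from one invocation of Lemma \ref{lem:2}) is fine.
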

\begin{proof}
Using the derivations in Equation \eqref{eq:OPTexpr},
\begin{eqnarray*}
\OPTest_\dist^{\delta}(n) & = & \min_{\lambda \ge 0, ||\phiV||_*\le L, ||\thetaV||_*\le 1} \left\{ \right. f^*(\phiV) + \lambda h_S(\thetaV) \\
& & \ + \frac{1}{n} \sum_{t=1}^n h_{A_t}(-\phiV-\lambda \thetaV) + \delta \lambda \left. \right\},
\end{eqnarray*}
Let $\lambda^*, \phiV^*, \thetaV^*$ denote the optimal dual solution for $\OPT$, then,
\begin{eqnarray*}
\OPTest_\dist^{\delta}(n) & \le & f^*(\phiV^*) + \lambda h_S(\thetaV^*) \\
& & \ + \frac{1}{n} \sum_{t=1}^n h_{A_t}(-\phiV^*-\lambda^* \thetaV^*) + \delta  \lambda^*\\
\end{eqnarray*}
Now, using concentration bounds from Lemma \ref{lem:conc} for the sum of $h_{A_t}$'s, we obtain,
\begin{eqnarray*}
\OPTest_\dist^{\delta}(n) & \le & f^*(\phiV^*) + \lambda h_S(\thetaV^*) \\
& & \ + \frac{1}{T} \sum_{t=1}^T h_{A_t}(-\phiV^*-\lambda^* \thetaV^*) \\
& & \ + \gamma (\lambda^*||\thetaV^*||_*+||\phiV^*||_*)+ \delta  \lambda^*\\
& = & \OPT + \gamma (\lambda^*||\thetaV^*||_*+||\phiV^*||_*)+ \delta  \lambda^*\\
& \le & \OPT +  (L+\lambda^*) \gamma +\delta \lambda^*\\
& \le & \OPT +  2(L+\lambda^*) \delta\\
& = & \OPT + 2\delta (L+Z^*)
\end{eqnarray*}

\end{proof}

\section{Proof of \prettyref{lem:optestimation}} \label{app:optestimation}

Given an instance of the online packing problem, recall that  $(\ropt, \vopt)$ denotes the optimal offline solution.  Then $\optsum = \sum_{t=1}^T \ropt$, 
and $\sum_{t=1}^{T} \vopt \leq B\ones$. 
Given $\rho > 0$, let $\eta = \sqrt{3\log(\tfrac {d+2} \rho)}$. 
Let the given random subset  of $\delta$ fraction of requests be $\Gamma$. 
{\em Define  
$\opthat$ to be $1/\delta$ times the optimum value of the following scaled optimization problem}: pick $(\rplay, \vplay)$ for each $t \in \Gamma$, to maximize the total reward $\sum_{t\in \Gamma} \rplay$ such that $\sum_{t\in \Gamma} \vplay \leq (\delta B + \eta \sqrt{\delta B})\ones  $. 

The bounds we need on $\opthat$ follow from considering the  optimal primal and dual solutions to the given packing problem restricted to the sample and using  Corollary \ref{cor:multiplicativeChernoff} to bound their values on the sample. 
Applying Corollary \ref{cor:multiplicativeChernoff} to the set of $\ropt$ for all $t \in [T]$ we get that with probability at least $1-\rho/(d+2)$, 
\begin{eqnarray*}
 \sum_{t \in \Gamma} \ropt & \geq & \delta \optsum - \sqrt{3\delta \optsum \log (\tfrac {d+2} \rho)} \\
& = & \delta \optsum - \eta \sqrt{\delta \optsum } . 
\end{eqnarray*}
Similarly, applying Corollary \ref{cor:multiplicativeChernoff} to each co-ordinate of the set of $\vopt$s, and taking a union bound, we get that 
with probability at least $1- \rho d/(d+2)$, 
\begin{eqnarray*}
 \sum_{t \in \Gamma} \vopt & \leq  & (\delta B + \sqrt{3\delta B \log (\tfrac {d +2} \rho)} ) \ones \\
& =  & (\delta B + \eta \sqrt{\delta B } ) \ones. 
\end{eqnarray*}
Therefore with probability $1- \rho (d+1)/(d+2)$ both the inequalities above hold and $(\ropt,\vopt)_{t\in \Gamma}$ is a feasible solution to the scaled optimization problem used to define $\opthat$. Hence 
\[ \delta \opthat \geq \sum_{t \in \Gamma} \ropt \geq \delta \optsum - \eta \sqrt{\delta \optsum } \] 
and the first bound on $\opthat$ follows from dividing the above inequality throughout by $\delta$. 
For the second bound, we need to consider the dual of the packing problem. 
The packing problem has the following natural LP relaxation. (The dual LP follows.)

\newcommand{\vgen}{\mathbf{v}}
	\begin{align*}
\max &\sum_{t=1}^T \sum_{ \vgen \in A_t} r(\vgen )x_{t,\vgen} \\
\text{s.t.} & \forall~ t, \sum_{\vgen \in A_t}  x_{t,\vgen} \leq 1 \\ 
&  \sum_{t=1}^T \sum_{\vgen \in A_t} \vgen x_{t,\vgen} \leq B\ones.  \\ 
\end{align*}
	\begin{align*}
	\min &\sum_{t=1}^T \beta_t + B \thetaV\cdot \ones \\
\text{s.t.} & \forall~ t, \forall~\vgen\in A_t, \beta_t \geq r(\vgen)  - \vgen\cdot {\thetaV}, \\ 
& \forall~t, \beta_t\geq 0, \thetaV \geq 0.
\end{align*}
First of all, we ignore the \revision{integrality gap and assume that the value of the optimal dual (and primal) solution is equal to the optimal value $\optsum$ for the offline packing problem.} 
Let $(\beta_t^*)_{t=1}^T , (\theta_j^*)_{j=1}^d$ be the optimal dual solution for 
the given instance, and $\optsum = \sum_t \beta_t^* + \sum_j B\theta_j^* $. It can be shown that $\beta_t^* \in [0,1]$ for all $t$:   all the constraints involving $\beta_t$ are of the form $\beta_t \geq (\cdot)$ so at least one of these constraints is tight for the optimal solution. Also for each of these constraints, the RHS is at most $1$, and one of the constraints is $\beta_t \geq 0$. Further note that these constraints are local, i.e., they only depend on the request indexed by $t$. This means that 
$(\beta_t^*)_{t\in \Gamma}, (\theta_j^*)_{j=1}^d$ is a feasible solution to the dual of the scaled optimization problem. The objective value of this solution to this dual is 
\[ \sum_{t\in \Gamma} \beta_t^* + \sum_j (\delta B + \eta \sqrt{\delta B}) \theta_j^* \geq \delta \opthat.\]
Using Corollary \ref{cor:multiplicativeChernoff} on the set of $\beta_t^*$s, we get that with probability at least $1-\rho/(d+2)$, 
\begin{eqnarray*}
 \sum_{t \in \Gamma} \beta_t^* & \leq & \delta \sum_{t =1}^T \beta_t^*  + \sqrt{3\delta \optsum \log (\tfrac {d+2} \rho)} \\
& =  & \delta  \sum_{t =1}^T \beta_t^* + \eta \sqrt{\delta \optsum } . 
\end{eqnarray*}
Putting the two inequalities above together, 
\begin{eqnarray*}
 \frac{\delta \opthat} { 1 + \eta /\sqrt{\delta B}} & \leq & \sum_{t\in \Gamma} \beta_t^* + \delta \sum_j B \theta_j^*  \\
& \leq  & \delta  \left(\sum_{t =1}^T \beta_t^* + \sum_j B \theta_j^*\right)+ \eta \sqrt{\delta \optsum }  \\
& = & \delta \optsum + \eta \sqrt{\delta \optsum }. 
\end{eqnarray*}

The lemma follows by taking the union bound over the probabilities for the two inequalities as required. 
Finally, we ignored the \revision{integrality} gap, but it is easy to show that this gap is at most $1- \tfrac 1 B$, which can be absorbed in the $1 + \eta /\sqrt{\delta B}$ factor. 


\comment{
Using definition of $g$, we have
\begin{eqnarray*}
g^*(\thetaV) & = & \max_{\y} \left\{\y \cdot \thetaV - g(\y)\right\}\\
& = & \max_{\y} \left\{ \y \cdot \thetaV - (h(\x+\y)-h(\x)- \nabla h(\x) \cdot \y) \right\} \\
& = & \max_{\y} \left\{ (\y+\x) \cdot (\thetaV + \nabla h(\x)) - h(\x+\y) + h(\x) - \x \cdot (\thetaV + \nabla h(\x)) \right\}\\
& = & \left(\sup_{\z} \z \cdot (\thetaV + \nabla h(\x)) - h(\z)\right) + h(\x) - \x \cdot (\thetaV + \nabla h(\x)) \\
& = & h^*(\thetaV + \nabla h(\x)) + h(\x) - \x \cdot (\thetaV + \nabla h(\x)) \\
& = & h^*(\thetaV + \nabla h(\x)) - h^*(\nabla h(\x)) - \x \cdot \thetaV \\
\end{eqnarray*}
where the last equality used the observation that when $h$ is differentiable and convex, then, $h(\x) = \sup_{\thetaV} \x\cdot \thetaV - h^*(\thetaV) = \x\cdot \nabla h(\x) - h^*(\nabla h(\x))$.
Let ${\bf u}:=\nabla h(\x)$. Using $g^*(\thetaV)\ge \frac{1}{2\beta}||\thetaV||_*^2$, we get
\begin{equation}
\label{eq:almostFinal}
h^*(\thetaV + {\bf u}) - h^*({\bf u}) - \x \cdot \thetaV \ge \frac{1}{2\beta}||\thetaV||_*^2
\end{equation}
Now, to prove that $h^*$ is strongly convex over domain $||{\bf u}'||_*\le L$, it remains to prove that for any such ${\bf u}'$, if $\x\in \partial h^*({\bf u}')$, then ${\bf u}'=\nabla h(\x)$.
Note from the definition of $h^*$ that the set of subgradients of $h^*$ is given by the maximizers in the definition of $h^*$. If $\x\in \partial h^*({\bf u}')$, then it must satisfy
$$h^*({\bf u}')= \x\cdot {\bf u}' - h(\x)$$
Also, from the dual relation $h(x)=\max_{||\theta||_*\le L} \thetaV\cdot \x - h^*(\thetaV)$, we get that for ${\bf u}=\nabla h(\x)$,
$$h(\x)= \x\cdot {\bf u} - h^*({\bf u})$$
Substituting $\thetaV={\bf u}'-{\bf u}$ in Equation \eqref{eq:almostFinal}, along with above observations we have
\begin{eqnarray*}
h^*({\bf u}') - h^*({\bf u}) - \x \cdot ({\bf u}'-{\bf u}) & \ge & \frac{1}{2}||{\bf u}'-{\bf u}||_*^2\\
\Rightarrow 0 & \ge & \frac{1}{2}||{\bf u}'-{\bf u}||_*^2
\end{eqnarray*}
which shows 
${\bf u}'=\nabla h(\x)$.
}

\comment{
\newcommand{\toreview}[1]{\textcolor{blue}{#1}}
\begin{lemma}\label{lem:ZestimateNew}
	Let $\epsilon=\sqrt{\frac{\log(d)}{\optsum}} + \sqrt{\frac{\log(d)}{B}}$. There exists an algorithm which, with probability at least $1-O(\epsilon^2)$, uses the first 
	$O(\epsilon^{4/3} \log(1/\param))$ 
	of requests as samples to compute a quantity $Z$ such that 
	$$\frac \optsum B \leq Z \leq O(1)\cdot \frac{\optsum}{B} \left(1+\sqrt{\frac{B}{\optsum}}\right).$$
	Also, this algorithm does not require the knowledge of the value of $\optsum$. 
	Here, $\param = \sqrt{\log(d)/B}$.

\end{lemma}
\begin{proof}
Let $\eta :=\sqrt{\log(d/\param)}$, where $\param=\sqrt{\log(d)/B} \le \epsilon$. 
Note that $\param$ and $\eta$ are known quantities, where as $\epsilon$ involves $\optsum$ whose value is not known.

The algorithm does the estimation in two steps. First, it computes an estimate $\opthat_1$ for the value of $\optsum$ using the first $\delta_1=O(\frac{\eta^2}{B^{2/3}})$ request as samples.  Then, it uses the next $\delta_2=O(\frac{\eta^2}{\min\{\opthat + \eta\sqrt{\opthat/\delta_1}+\eta^2/\delta_1, B\}})$ requests as samples to compute an estimate for $Z$. And, we show that with probability at least $1-\epsilon^2$, $\delta_1+\delta_2=O(\epsilon^{4/3} \log(d/\alpha)$.

	More precisely, the algorithm uses $\delta_1=4\eta^2/B^{2/3}$ samples to compute $\opthat_1$. Note that $\delta_1=O(\epsilon^{4/3} \log(d/\param))$. 
	Using \prettyref{lem:optestimation} with $\rho = \param^2 \le \epsilon^2$ and $\delta_1$,  we get that with probability at least $1-\epsilon^2$,
	$$\opthat_1 \geq \optsum - \eta \sqrt{\opthat_1/\delta_1} - \frac{\eta^2}{\delta_1}.$$
	So that, 
	\begin{eqnarray*}
	\delta_2 & : = & \frac{4\eta^2}{\min\{\opthat + \eta\sqrt{\opthat/\delta_1} + \eta^2/\delta_1, B\}} \\
	& \le & \frac{4\eta^2}{\optsum} + \frac{4\eta^2}{B} \\
	& = & O(\epsilon^2 \log(d/\param)).
	\end{eqnarray*}
	This proves that $\delta_1+\delta_2 \le O(\epsilon^{4/3} \log(d/\param))$.
	
	Also, from \prettyref{lem:optestimation},
	\begin{eqnarray*}
	\opthat_1 & \le & \left(1 + \frac{\eta}{\sqrt{\delta_1 B}} \right) (\optsum + \eta \sqrt{\optsum/\delta_1})\\
	& = & O(1) \cdot (\optsum+\sqrt{B^{2/3}\optsum})\\
	& \le & O(1) \cdot (\optsum+B^{1/3}\sqrt{\optsum}).
	\end{eqnarray*}
	So that,
	\begin{eqnarray}
	\label{eq:delta2opt}
	\frac{\eta^2}{\delta_2} & \le & \opthat + \eta\sqrt{\opthat/\delta_1}\nonumber\\
	& = & O(1) \cdot (\opthat + \sqrt{\opthat\ B^{2/3}})\nonumber\\
	& \le & O(\optsum + \sqrt{\optsum B})
	\end{eqnarray}
Also, by definition
	\begin{equation}
	\label{eq:delta2B}
	\frac{4\eta^2}{\delta_2} \le B
	\end{equation}
	These observations will be useful later in the proof.

	Next, the algorithm uses $\delta_2$ samples to compute $\opthat_2$, and sets
	\begin{equation}
	Z : =\frac{\opthat_2+\eta \sqrt{\opthat_2/\delta_2} +\eta^2}{B}.
	\end{equation}
	Then, again, using \prettyref{lem:optestimation}, with probability at least $1-\param^2 \ge 1-\epsilon^2$,
	$$\opthat_2 \geq \optsum - \eta \sqrt{\opthat_2/\delta_2} -\eta^2,$$
	so that 
	\begin{equation}
	Z\ge \frac{\optsum}{B}.
	\end{equation}
	Now, either $\opthat_2 \le \optsum$, so that 
	\begin{eqnarray}
	Z & \le & \frac{\optsum+\eta \sqrt{\optsum/\delta_2} +\eta^2}{B} \nonumber\\
	& \le & \frac{2\optsum}{B} \left(1+\frac{\eta}{\sqrt{\delta_2 \optsum}}\right) \nonumber\\
	& = & \frac{\optsum}{B} \left(1+2\sqrt{\frac{B}{\optsum}}\right),
	\end{eqnarray}
	where the second inequality used the assumption $\optsum \ge \eta^2$.
	
		Otherwise, ${\opthat_2} \ge {\optsum}$. Using concentration bound from \prettyref{lem:optestimation}, 
	\begin{eqnarray*} 
	\opthat_2 & \leq  & ({1 + \eta/\sqrt{\delta _2 B}} )(\optsum + \eta \sqrt{\optsum/\delta_2})
	\end{eqnarray*}
	Using above with $\opthat_2 \ge \optsum$, 
	\begin{eqnarray*}
	Z & =& \frac{\opthat_2+\eta \sqrt{\opthat_2/\delta_2} + \eta^2}{B} \\
	& \le & \frac{\opthat_2}{B} \left(1+\frac{\eta}{\sqrt{\delta_2 \opthat_2}}\right) + \frac{\optsum}{B}\\
	& \le & \frac{\opthat_2}{B} \left(1+\frac{\eta}{\sqrt{\delta_2 \optsum}}\right) + \frac{\optsum}{B}\\
	& \le & \frac{(\optsum + \eta \sqrt{\optsum/\delta_2})}{B} \left(1 + \frac{\eta}{\sqrt{\delta_2 B}} \right)\left(1+\frac{\eta}{\sqrt{\delta_2 \optsum}}\right) + \frac{\optsum}{B}\\
	& = & \frac{\optsum}{B} \left(1 + \frac{\eta}{\sqrt{\delta_2 B}} \right)\left(1+\frac{\eta}{\sqrt{\delta_2 \optsum}}\right)^2 +\frac{\optsum}{B}
	\end{eqnarray*}
	Now, substituting bounds on $1/\delta_2$ from \eqref{eq:delta2B} and \eqref{eq:delta2opt},
	\begin{eqnarray}
	Z & \le & \frac{3}{2} \cdot \frac{\optsum}{B}  \left(1+ 2\frac{\eta}{\sqrt{\delta_2 \optsum}} + \frac{\eta^2}{\delta_2 \optsum}\right) + \frac{\optsum}{B}\nonumber\\
	& = & O(1) \cdot \frac{\optsum}{B}  \left(1+ \sqrt{\frac{B}{\optsum}} + \frac{\optsum+\sqrt{\optsum B}}{\optsum}\right)\nonumber\\
	& \le &  O(1) \frac{\optsum}{B}  \cdot \left(1+\sqrt{\frac{B}{\optsum}}\right).\nonumber
	\end{eqnarray}
	This completes the proof.
\end{proof}

}

\end{document}